\newcommand{\sca}[2]{\langle #1 | #2\rangle}
\newcommand{\nr}[1]{\left\Vert #1\right\Vert}
\newcommand{\abs}[1]{\left\vert #1\right\vert}
\newcommand{\Nsp}{\mathbb{N}}
\newcommand{\Rsp}{\mathbb{R}}
\newcommand{\Class}{\mathcal{C}}
\newcommand{\Wass}{\mathrm{W}}
\newcommand{\X}{\mathcal{X}}
\newcommand{\Y}{\mathcal{Y}}
\newcommand{\dd}{\mathrm{d}}
\renewcommand{\L}{\mathrm{L}}
\newcommand{\diam}{\mathrm{diam}}
\newcommand{\diag}{\mathrm{diag}}
\newcommand{\eps}{\varepsilon}
\newcommand{\Lag}{\mathrm{Lag}}
\newcommand{\KL}{\mathrm{KL}}
\newcommand{\Prob}{\mathcal{P}}
\newcommand{\Var}{\mathbb{V}\mathrm{ar}}
\newcommand{\Esp}{\mathbb{E}}
\newcommand{\Kant}{\mathcal{K}}
\newcommand{\Li}{\mathrm{Li}}
\newmdtheoremenv{theo}{Theorem}
\newmdtheoremenv{coro}{Corollary}
\newtheorem{theorem}{Theorem}[section]
\newtheorem{corollary}[theorem]{Corollary}
\newtheorem{proposition}[theorem]{Proposition}
\newtheorem{assumption}{Assumption}
\theoremstyle{definition}
\newtheorem{remark}{Remark}[section]
\begin{document}

%

%

\twocolumn[
\aistatstitle{Nearly Tight Convergence Bounds for Semi-discrete Entropic Optimal Transport}

\aistatsauthor{ Alex Delalande }

\aistatsaddress{ Université Paris-Saclay \& Inria Saclay }
]

\begin{abstract}
  We derive nearly tight and non-asymptotic convergence bounds for solutions of entropic semi-discrete optimal transport. These bounds quantify the stability of the dual solutions of the regularized problem (sometimes called Sinkhorn potentials) w.r.t. the regularization parameter, for which we ensure a better than Lipschitz dependence. Such facts may be a first step towards a mathematical justification of annealing or $\eps$-scaling heuristics for the numerical resolution of regularized semi-discrete optimal transport. Our results also entail a non-asymptotic and tight expansion of the difference between the entropic and the unregularized costs.
\end{abstract}

\section{INTRODUCTION}



Optimal transport and the distances it defines \cite{villani2008optimal} are now widely acknowledged as important tools for machine learning \shortcite{ML_3, ML_2, ML_6, ML_5, ML_1, ML_4} and statistics \cite{stat3, stat2, stat1, stat4}. In these fields, it is also recognized that the original formulation of the transport problem suffers in general from poor computationability and statistical behavior with respect to the dimension, and that some form of regularization can be helpful. In this state of mind, the entropic regularization of the optimal transport problem, which dates back to \cite{schrodinger} and was revisited in the recent years by \cite{Cuturi:2013}, has proven to be a relevant choice of regularization. For two compact subsets $\X,\Y$ of $\Rsp^d$, two probability measures $\rho \in \Prob(\X), \mu \in \Prob(\Y)$, and for $\eps \geq 0$, the quadratic optimal transport problem between $\rho$ and $\mu$ with entropic regularization of parameter $\eps$ reads
\begin{equation}
\label{eq:quadratic-ot}
    \min_{\pi \in \Pi(\rho, \mu)} \int_{\X \times \Y} \nr{x - y}^2 \dd \pi(x,y) + \eps \KL(\pi \vert \rho \otimes \mu ), \tag{$\mathrm{P}_\eps$}
\end{equation}
where $\Pi(\rho, \mu)$ denotes the set of couplings between $\rho$ and $\mu$ and $\KL$ denotes the Kullback-Leibler divergence or relative entropy (up to an additive term):
\begin{equation*}
    \KL(\pi \vert \rho \otimes \mu) = \int_{\X \times \Y} \left( \log \left( \frac{\dd \pi}{\dd \rho \otimes \mu}(x, y)\right) - 1 \right)\dd \pi(x, y)
\end{equation*}
if $\pi \ll \rho \otimes \mu$ and $+\infty$ otherwise.
When $\eps = 0$, problem \eqref{eq:quadratic-ot} corresponds to the usual quadratic optimal transport problem between $\rho$ and $\mu$, and the value of \eqref{eq:quadratic-ot} defines in this case the square of the $2$-Wasserstein distance $\Wass_2$ between $\rho$ and $\mu$. However, choosing $\eps > 0$ in \eqref{eq:quadratic-ot} has several advantages: first, it turns problem \eqref{eq:quadratic-ot} in a $\eps$-strongly-convex minimization problem, which enables the use of fast algorithms for its resolution \shortcite{Cuturi:2013, Sinkhorn_Rigollet, compOT_acc_grad, comp_OT, scaling_ot}. Second, problem \eqref{eq:quadratic-ot} enjoys better statistical properties when $\eps>0$ rather than when $\eps=0$, with improved sample complexity for its value \shortcite{sample_complexity_1, sample_complexity_2} and better guarantees when using stochastic optimization algorithms for its resolution \shortcite{genevay:hal-01321664, bercu:hal-02864967}. Thus, introducing for a solution $\pi^{(\mathrm{P}_\eps)}$ to \eqref{eq:quadratic-ot} the quantity
$$\Wass_{2,\eps}(\rho, \mu) = \left( \int_{\X \times \Y} \nr{x - y}^2 \dd \pi^{(\mathrm{P}_\eps)}(x,y) \right)^{1/2}, $$
one may hope that $\Wass_{2,\eps}$ approximates $\Wass_2$ well when $\eps$ is not too big. This fact has been the object of a long line of works, going to very recent developments. The convergence of $\Wass_{2, \eps}$ to $\Wass_2$ as $\eps$ goes to zero is established in general settings \shortcite{Mikami2004, leonard, bernton2021entropic, nutz2021entropic}, and it has been quantified in more specific settings. In the continuous setting, where both $\rho$ and $\mu$ are absolutely continuous, \shortcite{Adams2011, manh, erbar, Pal2019OnTD} gave first order asymptotics for $\Wass_{2, \eps}$ in terms of $\eps$ and thus showed in this setting an asymptotic linear rate of convergence of $\Wass_{2, \eps}$ to $\Wass_2$. These results were recently refined in \cite{CONFORTI2021108964} where second order asymptotics have been given. In the discrete setting, where both $\rho$ and $\mu$ are finitely supported, the rate of convergence of $\Wass_{2, \eps}$ to $\Wass_2$ was shown to be asymptotically exponential in \cite{Cominetti1994} in the context of the analysis of exponentially penalized finite dimensional linear programs. This result was then refined with a tight non-asymptotic analysis in \cite{weed-lp-exponential-cv}, enabling to choose $\eps$ in terms of the data in order to compute the unregularized cost $\Wass_2$ to a wanted precision.

Very different regimes are thus observed between the continuous setting (with a linear convergence rate) and the discrete setting (with an exponential convergence rate). However, very few was known -- until recently \cite{altschuler2021asymptotics} -- on the intermediate setting of semi-discrete optimal transport, where $\rho$ is absolutely continuous and $\mu$ is finitely supported, that is of particular importance in some applications. In statistics, it corresponds to the case where one wants to compare an empirical sample to a given probability measures, and it can serve to extend notions of quantiles and ranks to multivariate measures \shortcite{chernozhukov2017}. In numerical analysis, the semi-discrete setting gives a natural framework to approximate the solution of the optimal transport problem between a probability density $\rho$ and a probability measure $\mu$ that consists in approximating $\mu$ by a sequence of measures $(\mu_N)_{N\geq 1}$ with finite support such that $\lim_{N\to+\infty} \Wass_2(\mu,\mu_N) = 0$ \shortcite{oliker1989numerical, cullen1991generalised, gangbo1996geometry, caffarelli1999problem, mirebeau}. Finally in image processing, semi-discrete transport has proved useful for texture synthesis and style transfer \cite{texture1, texture2, texture3}. We thus focus in this work on the semi-discrete setting, and show that we can improve the recent asymptotic bounds given in \cite{altschuler2021asymptotics} under slightly stronger regularity assumptions on the source measure. In particular, we produce a non-asymptotic analysis of the dual solutions to problem \eqref{eq:quadratic-ot} in terms of $\eps$, which may be important in itself for the resolution of semi-discrete optimal transport using $\eps$-scaling techniques. 

The next section details the semi-discrete (regularized) optimal transport problem and gives our main results. Section \ref{sec:governing-ode} derives the ODE from which starts the proof of our main bound and the handling of its terms is done in Sections \ref{sec:strong_convexity} and \ref{sec:bound_2nd_term}. Section \ref{sec:numerical_illustration} finally illustrates our theoretical results on simple one-dimensional numerical examples.

\section{CONVERGENCE BOUNDS FOR SEMI-DISCRETE ENTROPIC OPTIMAL TRANSPORT}
\subsection{Semi-discrete Entropic Optimal Transport}
Let $\X$ be a compact subset of $\Rsp^d$ and $\rho \in \Prob(\X)$ be an absolutely continuous probability measure on $\X$. Let $\Y = \{y_1, \dots, y_N\} \subset \Rsp^d$ be a set of $N$ points in $\Rsp^d$ and let $\sigma$ be the counting measure associated to this set, i.e. $\sigma = \sum_{i=1}^N \delta_{y_i}$. Let $\mu = \sum_{i=1}^N \mu_i \delta_{y_i} \in \Prob(\Y)$ where for all $i, \mu_i \geq \underline{\mu} > 0$. Note that we will denote $R_\X, R_\Y > 0$ the smallest constants such that the $\X \subset B(0, R_\X), \Y \subset B(0, R_\Y)$ respectively, as well as $\diam(\X), \diam(\Y)$ the respective diameter of $\X, \Y$.

Back to problem \eqref{eq:quadratic-ot}, developing the square $\nr{x - y}^2$ and using that $\pi$ belongs to $\Pi(\rho, \mu)$, one can notice that this problem is equivalent to the following regularized maximum correlation problem:
\begin{equation}
    \label{eq:primal}
    \max_{\pi \in \Pi(\rho, \mu)} \int_{\X \times \Y} \sca{x}{y} \dd \pi(x,y) - \eps \KL(\pi \vert \rho \otimes \sigma ) \tag{$\mathrm{P}_\eps'$},
\end{equation}
with the relation $\left(\mathrm{P}_{2\eps}\right) = M_2(\rho) + M_2(\mu) -2\eps \mathcal{H}(\mu) - 2\times \eqref{eq:primal}$, where $M_2(\cdot)$ denotes the second moment of a probability measure and $\mathcal{H}(\cdot)$ its Shannon entropy.
By either $\eps$-strong concavity (when $\eps>0$) or Brenier's theorem \cite{Brenier} (when $\eps=0$, using that $\rho$ is absolutely continuous), problem \eqref{eq:primal} admits a unique solution denoted $\pi^\eps$. Moreover it admits the following (semi-)dual formulation and strong duality holds (see for instance Sections 2 of \cite{genevay:hal-01321664, bercu:hal-02864967}):
\begin{equation*}
    \min_{\psi \in \Rsp^N} \int_\X \psi^{c, \eps} \dd \rho + \sca{\psi}{\mu} + \eps,
\end{equation*}
where $\mu$ is conflated with the vector $(\mu_i)_{i=1,\dots,N}\in (\Rsp_+^*)^N$ and where $\psi^{c, \eps}$ corresponds to the \emph{$(c,\eps)$-transform} of $\psi$ when $\eps>0$ and to its \emph{Legendre transform} when $\eps=0$: $\forall x \in \X,$
$$ \psi^{c, \eps}(x) = \left\{
    \begin{array}{ll}
        \eps \log \left( \sum_{i=1}^N e^{\frac{\sca{x}{y_i} - \psi_i}{\eps}} \right)  & \mbox{if } \eps>0, \\
        \max_{i=1,\dots,N} \sca{x}{y_i} - \psi_i = \psi^*(x)  & \mbox{if } \eps=0.
    \end{array}
\right.
 $$
This dual problem is invariant to addition of constant vectors to $\psi$. We fix this invariance by adding the constraint that $\sca{\psi}{\mathbb{1}_N}= 0$ without any loss of generality, where $\mathbb{1}_N$ denotes a vector full of $1$'s in $\Rsp^N$. Introducing the (regularized) Kantorovich's functional $\Kant^\eps : \psi \mapsto \int_\X \psi^{c,\eps} \dd \rho + \eps$, one can then rewrite the dual formulation as
\begin{equation}
    \label{eq:dual}
    \min_{\psi \in \Rsp^N, \sca{\psi}{\mathbb{1}_N}=0} \Kant^\eps(\psi) + \sca{\psi}{\mu}. \tag{$\mathrm{D}_\eps$}
\end{equation}
The functional $\Kant^\eps$ is convex on $\Rsp^N$ and strictly convex on $\left(\mathbb{1}_N\right)^\top$: problem \eqref{eq:dual} admits a unique solution denoted $\psi^\eps$ (that we call later on a \emph{potential}), and such a solution verifies formally the following first-order condition: 
\begin{gather}
\label{eq:formal-first-order-cond}
    \nabla \Kant^\eps(\psi^\eps)=-\mu.
\end{gather}
More precisely, this first-order condition means that for all $i \in \{1, \dots, N\}$, if $\eps>0$,
\begin{gather}
\label{eq:first-order-cond}
    \mu(\{y_i\}) = \int_{x \in \X} e^{\frac{\sca{x}{y_i} - \psi^\eps_i - (\psi^\eps)^{c,\eps}(x)}{\eps}} \dd \rho(x) 
\end{gather}
and if $\eps=0$, $\mu(\{y_i\}) = \int_{x \in \X} \mathbb{1}_{\Lag_i(\psi^0)}(x) \dd \rho(x),$
where for any $\psi \in \Rsp^N$, $\Lag_i(\psi)$ denotes the $i$-th Laguerre cell w.r.t. $\psi$:
\begin{align*}
    \Lag_i(\psi) = \{x \in \X \vert \forall j, \sca{x}{y_i} - \psi_i \geq \sca{x}{y_j} - \psi_j \}.
\end{align*}
Note that the Laguerre cells are convex polytopes intersected with $\X$ and they define a tesselation of $\X$: $\bigcup_{i} \Lag_i(\psi^\eps) = \X$.

Finally, the primal-dual relationship that links the solution $\psi^\eps$ of problem \eqref{eq:dual} to the solution $\pi^\eps$ of problem \eqref{eq:primal} is the following: for all Borel set $A \subset \X$, all $i \in \{1, \dots, N\}$, if $\eps>0$,
\begin{gather}
    \label{eq:primal-dual}
    \pi^\eps(A, \{y_i\}) = \int_{x\in A} e^{\frac{\sca{x}{y_i} - \psi^\eps_i - (\psi^\eps)^{c,\eps}(x)}{\eps}} \dd \rho(x)
\end{gather}
and if $\eps=0$, $\pi^0(A, \{y_i\}) = \int_{x \in A} \mathbb{1}_{\Lag_i(\psi^0)}(x) \dd \rho(x)$.

\subsection{Non-asymptotic Behavior of Potentials}

The authors of \cite{altschuler2021asymptotics} recently tackled the question of the rate of convergence of $\Wass_{2, \eps}$ to $\Wass_2$ in the specific semi-discrete setting. They showed under regularity assumptions on the source $\rho$ the following asymptotic quadratic convergence in $\eps$ of the regularized cost (Theorem 1.1 in \cite{altschuler2021asymptotics}):
\begin{align*}
\label{eq:asymp-result-weed}
    \Wass_{2,\eps}^2(\rho, \mu) = \Wass_2^2(\rho, \mu) + \eps^2 \frac{\pi^2}{12}  \sum_{i<j}\frac{w_{ij}}{\nr{y_i - y_j}} + o(\eps^{2}),
\end{align*}
where $ w_{ij} = \int_{\Lag_i(\psi^0) \cap \Lag_j(\psi^0)} \rho(x) \dd \mathcal{H}^{d-1}(x).$
In order to show this, they demonstrated that the convergence of $\psi^\eps$ to $\psi^0$ as $\eps$ goes to $0$  happens at a rate faster that $\eps$ (Theorem 1.3 in \cite{altschuler2021asymptotics}):
$$ \lim_{\eps \to 0} \frac{1}{\eps}(\psi^\eps - \psi^0) = \dot{\psi^\eps}\Big\rvert_{\eps=0} =  0,$$
where $\dot{\psi^\eps} = \frac{\partial}{\partial \eps} \psi^\eps$. As discussed in \cite{altschuler2021asymptotics}, this result is unexpected because false in general optimal transport and stems from the particular setting of semi-discrete optimal transport with a positive source. In this work, we show that the result of Theorem 1.3 in \cite{altschuler2021asymptotics} can be extended and quantified to get a non-asymptotic control of $\dot{\psi^\eps}$, i.e. not only when $\eps \to 0$ but for $\eps \in \Rsp^*_+$. As in \cite{altschuler2021asymptotics}, we notice that regularity assumptions on the source measure $\rho$ are necessary to proceed with such controls. In particular, we make the following assumption (stronger than the one in \cite{altschuler2021asymptotics}):
\begin{assumption} \label{assump:source} The compact set $\X$ is convex. The source measure $\rho \in \Prob(\X)$ is absolutely continuous and its density (also denoted $\rho$), is bounded away from zero and infinity, i.e. there exist $m_\rho, M_\rho$ such that on $\X$, $$ 0 < m_\rho \leq \rho \leq M_\rho < +\infty.$$ 
\end{assumption}
\noindent Under this assumption and an Hölder continuity assumption on the density of $\rho$, we show the following behavior:
\begin{theorem}
\label{th:control-dot-psi}
Let $\rho \in \Prob(\X)$ satisfying Assumption \ref{assump:source} with an $\alpha$-Hölder continuous density for some $\alpha \in (0,1]$ and let $\mu \in \Prob(\Y)$. Then for any $\eps \leq 1$, $\alpha' \in (0, \alpha)$, the solutions $\psi^\eps$ to problem \eqref{eq:dual} verify:
\begin{align*}
    \nr{\dot{\psi^\eps}}_2 \lesssim \eps^{\alpha'}, 
\end{align*}
where $\dot{\psi^\eps} = \frac{\partial}{\partial \eps} \psi^\eps$ and $\lesssim$ hides multiplicative constants that depend on $\X, \rho, \Y, \mu$. Besides, for any $\eps \geq 1$,
\begin{align*}
    \nr{\dot{\psi^\eps}}_2 \lesssim 1 
\end{align*}
\end{theorem}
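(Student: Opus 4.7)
The plan is to derive a closed-form first-order ODE for $\psi^\eps$ by differentiating the optimality condition in $\eps$, and then to estimate the two resulting factors separately. Since $\nabla \Kant^\eps(\psi^\eps) = -\mu$ holds for every $\eps > 0$ and $\Kant^\eps$ is smooth jointly in $(\eps, \psi)$ with $\nabla^2 \Kant^\eps$ positive definite on $\mathbb{1}_N^\perp$, the implicit function theorem gives
$$
\nabla^2 \Kant^\eps(\psi^\eps)\, \dot{\psi^\eps} \;=\; -\,\partial_\eps\bigl[\nabla \Kant^\eps\bigr]\!(\psi^\eps).
$$
The normalization $\sca{\psi^\eps}{\mathbb{1}_N}=0$ forces $\dot{\psi^\eps} \in \mathbb{1}_N^\perp$, so inverting the Hessian on this subspace yields
$$
\nr{\dot{\psi^\eps}}_2 \;\leq\; \frac{1}{\lambda_\eps}\, \nr{\partial_\eps \nabla \Kant^\eps(\psi^\eps)}_2,
$$
where $\lambda_\eps$ is the smallest eigenvalue of $\nabla^2 \Kant^\eps(\psi^\eps)$ on $\mathbb{1}_N^\perp$. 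Establishing this identity is the content expected of Section \ref{sec:governing-ode}. The theorem then reduces to two tasks: (i) a uniform-in-$\eps$ lower bound on $\lambda_\eps$ (Section \ref{sec:strong_convexity}); (ii) an upper bound $\nr{\partial_\eps \nabla \Kant^\eps(\psi^\eps)}_2 \lesssim \eps^{\alpha'}$ for $\eps \leq 1$ and $\lesssim 1$ for $\eps \geq 1$ (Section \ref{sec:bound_2nd_term}).

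For task (i), I would argue that under Assumption \ref{assump:source} together with $\mu_i \geq \underline{\mu} > 0$, the smallest eigenvalue of $\nabla^2 \Kant^\eps(\psi^\eps)$ on $\mathbb{1}_N^\perp$ is bounded below by a constant depending only on the data. In the unregularized case, this is a quantitative strong-convexity statement at the optimum, coming from the fact that the cells $\Lag_i(\psi^0)$ each carry $\rho$-mass at least $\underline{\mu}$ and thus cannot be arbitrarily thin — combined with $m_\rho > 0$, this gives control on the relevant Jacobian of cell boundaries. For $\eps > 0$, the entropic Hessian admits an analogous variance-like expression involving the softmax weights $p_i^\eps(x) = e^{(\sca{x}{y_i} - \psi_i^\eps)/\eps}/\sum_j e^{(\sca{x}{y_j} - \psi_j^\eps)/\eps}$; one uses that the $\psi^\eps$ remain in a fixed bounded subset of $\mathbb{1}_N^\perp$ (by a direct estimate from $\Kant^\eps$, using the compactness of $\X$ and $\Y$) to retain a lower bound independent of $\eps$.

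For task (ii), write $\nabla \Kant^\eps(\psi)_i = -\int_\X p_i^\eps(x,\psi)\, \dd \rho(x)$; a direct differentiation expresses $\partial_\eps p_i^\eps$ as a variance-type expression that, integrated against $\rho$, concentrates in a thin boundary layer around the interfaces $\partial \Lag_i(\psi^0) \cap \partial \Lag_j(\psi^0)$. The strategy is to split $\X$ into such a boundary strip and its complement: on the complement the softmax is exponentially close to an indicator, contributing a negligible tail; on the strip, Hölder regularity of $\rho$ lets one replace $\rho$ locally by its boundary trace up to an $O(\eps^\alpha)$ error, and the remaining Lebesgue-weighted integral of $\partial_\eps p_i^\eps$ vanishes by a local antisymmetry — this is the mechanism behind the $\dot{\psi^0}=0$ observation of \cite{altschuler2021asymptotics}. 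The loss from $\alpha$ to $\alpha'$ is expected to come from having to compare Laguerre-like structures of $\psi^\eps$ to those of $\psi^0$ inside the strip, which is where a small bootstrap argument may be needed. This boundary-layer quantification is where I expect the main technical difficulty to lie. For the regime $\eps \geq 1$, a crude uniform bound on $\partial_\eps \nabla \Kant^\eps(\psi^\eps)_i$ (the integrand is bounded by a constant times the boundedness of $\sca{x}{y_i} - \psi_i^\eps - (\psi^\eps)^{c,\eps}(x)$) suffices, and combined with (i) yields $\nr{\dot{\psi^\eps}}_2 \lesssim 1$.
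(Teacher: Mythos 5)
Your high-level plan — differentiate the stationarity condition to obtain $\nabla^2 \Kant^\eps(\psi^\eps)\dot{\psi^\eps} = -\partial_\eps \nabla \Kant^\eps(\psi^\eps)$, then bound the two factors — is exactly the paper's structure (Proposition~\ref{prop:pties-potentials}, Theorem~\ref{th:strong-convexity-K}, Theorem~\ref{th:control-2nd-term}). However, there is a genuine gap in your task (i) that propagates to your handling of $\eps \geq 1$.

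You assert that the smallest eigenvalue $\lambda_\eps$ of $\nabla^2 \Kant^\eps(\psi^\eps)$ on $\mathbb{1}_N^\perp$ is bounded below uniformly in $\eps$. This is false: from~\eqref{eq:formulas-hessian-kant}, $\nabla^2\Kant^\eps(\psi^\eps) = \tfrac{1}{\eps}\Esp_{x\sim\rho}(\diag(\pi^\eps_x) - \pi^\eps_x(\pi^\eps_x)^\top)$, and as $\eps \to \infty$ the softmax weights $\pi^\eps_x$ tend to the uniform vector, so the bracket stays bounded and the Hessian decays like $1/\eps$. The paper's estimate (Theorem~\ref{th:strong-convexity-K}) is correspondingly non-uniform: it reads $\Var_\mu(v) \leq (e^{R_\Y\diam(\X)}M_\rho/m_\rho + \eps)\sca{v}{\nabla^2\Kant^\eps(\psi^\eps)v}$, giving $\lambda_\eps \gtrsim \underline{\mu}/(\mathrm{const} + \eps)$, which vanishes. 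Consequently your conclusion for $\eps \geq 1$ does not follow from a \emph{crude uniform} bound on $\partial_\eps \nabla\Kant^\eps(\psi^\eps)$ as you claim: you need the finer $O(1/\eps)$ decay of that term (which the paper proves and which in fact follows cheaply from the uniform bound $\abs{f_i^\eps - f_j^\eps}\lesssim 1 + \eps$ and the expression~\eqref{eq:expr-Mpi-logpi}) precisely to compensate the deteriorating conditioning of the Hessian. Your proof sketch as written has the compensation going the wrong way.

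Beyond this, your suggested route to the Hessian bound (arguing from cell geometry — cells carry mass $\geq \underline{\mu}$ and hence cannot be thin) would be difficult to make quantitative, especially for positive $\eps$ where there is no sharp cell decomposition. The paper instead obtains the bound by showing that $I(\psi) = \log\int_\X e^{-\psi^{c,\eps}}$ is concave via the Prékopa--Leindler inequality, which directly yields a covariance-type inequality (Proposition~\ref{prop:concavity-I}) whose comparison with $\nabla^2\Kant^\eps$ gives Theorem~\ref{th:strong-convexity-K}; this is the key technical idea and is not present in your sketch. Also note that in the actual proof the conclusion is derived by pairing the ODE with $\Var_\mu(\dot{\psi^\eps}) \geq \underline{\mu}\nr{\dot{\psi^\eps}}_2^2$ (valid because $\dot{\psi^\eps}\in\mathbb{1}_N^\perp$), rather than by an explicit eigenvalue argument, and the loss from $\alpha$ to $\alpha'$ arises from the $\eta,\gamma$ arbitrage in the boundary-layer decomposition (Proposition~\ref{prop:control-2nd-term-partition}), not from a bootstrap comparison between the Laguerre structures of $\psi^\eps$ and $\psi^0$; the decomposition is carried out directly for the cells of $\psi^\eps$. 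Your boundary-layer-plus-antisymmetry intuition for task (ii) is otherwise aligned with the paper's proof.
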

\begin{remark}[Constants]
A (very) rough upper bound on the hidden constants is given by the quantity
\small
\begin{align*}
    &\frac{N}{\underline{\mu}}\frac{M_\rho}{m_\rho}e^{R_\Y \diam(\X)}\Bigg( N R_\X \diam(\Y) + \log\frac{1}{\underline{\mu}} \\ 
    & + N^2M_\rho \diam(\X)^{d-1}(1 + \frac{C_\rho}{\delta^\alpha} + R_\X \diam(\Y) + \log\frac{1}{\underline{\mu}})\\
    &+ N^3 M_\rho \frac{\diam(\X)^{d-2}\diam(\Y)^4}{\cos(\theta /2)\delta^4}(1+R_\X \diam(\Y)+ \log\frac{1}{\underline{\mu}}) \Bigg),
\end{align*}
\normalsize
up to a multiplicative constant that depends only on the dimension. In this formula, $C_\rho$ is such that for any $x, x' \in \X$, $\abs{\rho(x) - \rho(x')} \leq C_\rho \nr{x - x'}^\alpha$, $\delta$ is the minimum distance between two points in $\Y$ and $\theta$ is the maximum angle formed by three not-aligned points in $\Y$. The dependence on $N$ is rather bad and it may be improved by replacing the $N^2$ term with $N$ times the maximum number of $(d-1)$-facets a Laguerre cell has in the tessellation $\bigcup_i \Lag_i(\psi^0)$ and the $N^3$ term with $N$ times the maximum number of $(d-2)$-facets a Laguerre cell has in this tessellation.
\end{remark}
\noindent This behavior is a consequence of the analysis of an ODE satisfied by the map $\eps \mapsto \psi^\eps$, and it is proven in Section \ref{sec:governing-ode}. An immediate consequence of this result concerns the quantitative stability of the mapping $\eps \mapsto \psi^\eps$. It also gives quantitative convergence results for $\psi^\eps$, $(\psi^\eps)^{c,\eps}$ and $\pi^\eps$ toward their different limits -- results that are reminiscent of the asymptotic ones of \cite{Cominetti1994} in the study of solutions of exponentially penalized finite dimensional linear programs.
\begin{corollary}
\label{coro:stab-potentials-eps}
Let $0 < \eps' \leq \eps$. Under assumptions of Theorem \ref{th:control-dot-psi}, denote $\psi^{\eps'}, \psi^{\eps}$ the solutions to problem \eqref{eq:dual} with regularization $\eps', \eps$ respectively. Then for any $\alpha' \in (0, \alpha)$,
\begin{align*}
    \nr{\psi^{\eps} - \psi^{\eps'}}_\infty \lesssim \eps^{\alpha'}(\eps - \eps').
\end{align*}
In particular, letting $\eps'$ go to $0$ yields
\begin{align*}
    \nr{\psi^\eps - \psi^0}_\infty &\lesssim \eps^{1+\alpha'}, \\
    \nr{(\psi^\eps)^{c,\eps} - (\psi^0)^*}_\infty &\lesssim \eps.
\end{align*}
Additionally, for $\rho$-a.e. $x \in \X$,
\begin{align*}
    \abs{(\psi^\eps)^{c,\eps}(x) - (\psi^0)^*(x)} &\lesssim \eps^{1 + \alpha'},\\
    \abs{\pi^\eps(x, \cdot) - \pi^0(x, \cdot)} &\lesssim e^{-c_x/\eps},
\end{align*}
where $c_x = \min_{ i \in \{1, \dots, N\}} \{  (\psi^0)^*(x) - \sca{x}{y_i} + \psi^0_i \quad \vert \quad \sca{x}{y_i} - \psi^0_i \neq (\psi^0)^*(x) \} > 0$.
\end{corollary}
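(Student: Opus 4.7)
The first bound follows by integrating the control from Theorem \ref{th:control-dot-psi}. Since the ODE analysis of Section \ref{sec:governing-ode} establishes that $\eps \mapsto \psi^\eps$ is $\Class^1$ on $(0, +\infty)$, the fundamental theorem of calculus gives $\psi^\eps - \psi^{\eps'} = \int_{\eps'}^\eps \dot{\psi^t} \dd t$. Dominating $\ell^\infty$ by $\ell^2$ and plugging in $\nr{\dot{\psi^t}}_2 \lesssim t^{\alpha'}$ yields
\begin{align*}
\nr{\psi^\eps - \psi^{\eps'}}_\infty \leq \int_{\eps'}^\eps \nr{\dot{\psi^t}}_2 \dd t \lesssim \int_{\eps'}^\eps t^{\alpha'} \dd t \leq \eps^{\alpha'}(\eps - \eps').
\end{align*}
Letting $\eps' \downarrow 0$ and invoking the continuity $\psi^\eps \to \psi^0$ of Theorem 1.3 in \cite{altschuler2021asymptotics} delivers $\nr{\psi^\eps - \psi^0}_\infty \lesssim \eps^{1+\alpha'}$.

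The uniform bound on the $(c,\eps)$-transform would follow from the standard log-sum-exp sandwich $\max_i a_i \leq \eps \log \sum_i e^{a_i/\eps} \leq \max_i a_i + \eps \log N$ with $a_i = \sca{x}{y_i} - \psi^\eps_i$: this shows $0 \leq (\psi^\eps)^{c,\eps}(x) - (\psi^\eps)^*(x) \leq \eps \log N$ uniformly on $\X$. Since the Legendre transform is $1$-Lipschitz in $\ell^\infty$ with respect to $\psi$, a triangle inequality combined with $\nr{\psi^\eps - \psi^0}_\infty \lesssim \eps^{1+\alpha'}$ closes the $O(\eps)$ bound.

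For the pointwise refinement, I would exploit that for $\rho$-a.e. $x$ the Laguerre facets carry zero $\rho$-mass, so $x$ lies in the interior of a unique cell $\Lag_{i(x)}(\psi^0)$ and $c_x > 0$. Factoring out the leading index,
\begin{align*}
(\psi^\eps)^{c,\eps}(x) &= \sca{x}{y_{i(x)}} - \psi^\eps_{i(x)} \\
&\quad + \eps \log\Bigl(1 + \sum_{j \neq i(x)} e^{[\sca{x}{y_j - y_{i(x)}} - (\psi^\eps_j - \psi^\eps_{i(x)})]/\eps}\Bigr).
\end{align*}
Feeding $\nr{\psi^\eps - \psi^0}_\infty \lesssim \eps^{1+\alpha'}$ into the exponents bounds each of them by $-c_x/\eps + O(\eps^{\alpha'})$, and $\log(1+u) \leq u$ then controls the logarithmic correction by $O(\eps e^{-c_x/\eps}) = o(\eps^{1+\alpha'})$. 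Combining this with $|\psi^\eps_{i(x)} - \psi^0_{i(x)}| \lesssim \eps^{1+\alpha'}$ and $(\psi^0)^*(x) = \sca{x}{y_{i(x)}} - \psi^0_{i(x)}$ yields the refined pointwise estimate.

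Finally, the primal-dual relation \eqref{eq:primal-dual} expresses the conditional $\pi^\eps(x, \{y_j\}) = e^{[\sca{x}{y_j} - \psi^\eps_j - (\psi^\eps)^{c,\eps}(x)]/\eps}$ as a Gibbs distribution; using the refined pointwise bound just derived, the exponent for $j \neq i(x)$ is at most $-c_x/\eps + O(\eps^{\alpha'})$, so $\pi^\eps(x, \{y_j\}) \lesssim e^{-c_x/\eps}$, and the remaining mass on $y_{i(x)}$ matches $\pi^0(x, \{y_{i(x)}\}) = 1$ to the same exponential tolerance by normalization. The most delicate step is the pointwise refinement: one must keep the two different scales $\eps^{1+\alpha'}$ (potential error) and $e^{-c_x/\eps}$ (facet gap) aligned simultaneously, and check that the former absorbs the latter after adjusting constants.
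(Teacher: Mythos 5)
Your proof is correct and follows essentially the same route as the paper: integrate the Theorem~\ref{th:control-dot-psi} bound on $\dot{\psi^\eps}$ to get the $\ell^\infty$ stability, combine the LogSumExp sandwich with Lipschitz continuity of the (regularized) conjugate in $\psi$ for the uniform $(c,\eps)$-transform bound, and then exploit the gap $c_x>0$ for the pointwise and exponential $\pi^\eps$ estimates. The only cosmetic differences are that you cite Theorem 1.3 of \cite{altschuler2021asymptotics} for $\psi^\eps\to\psi^0$ where the paper invokes its own Proposition~\ref{prop:psi-cvg-eps-0}, and you split off the dominant index $i(x)$ in the LogSumExp for the pointwise bound instead of the paper's decomposition through $(\psi^0)^{c,\eps}$ — both yield the same estimates.
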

\noindent This corollary is proven in Section \ref{sec:proof-cor-scaling} of the Appendix.
\begin{remark}[$\eps$-scaling]
Corollary \ref{coro:stab-potentials-eps} may be a first step, in the semi-discrete context, towards a mathematical justification of $\eps$-scaling or simulated annealing techniques used in the numerical resolution of optimal transport. Such techniques, reported for instance in \cite{KOSOWSKY1994477, scaling_ot, feydy:tel-02945979} in the context of Sinkhorn's algorithm for solving the assignment or discrete optimal transport problems using entropic regularization, are used to reduce the number of iterations necessary to compute a regularized solution. They consist in solving \eqref{eq:quadratic-ot} with a starting \emph{large} regularization parameter $\eps^0$, and then gradually decrease the regularization parameter over the course of the optimization, with a geometric decrease -- typically, $\eps^{k+1} = \eps^k/2$. The idea is that $\psi^{\eps^k}$ (or an approximation of it) is supposed to be a good starting point for an optimization algorithm that aims at estimating the solution $\psi^{\eps^{k+1}}$. This technique was introduced for Bertsekas’ auction algorithm \cite{Bertsekas1981, Bertsekas1988} for the resolution of the assignment problem, and it proved to reduce the worst case complexity from $O\left(\frac{N^2}{\eps}\right)$ to $O\left(N^3\log\left(\frac{1}{\eps}\right)\right)$ in order to get an $\eps$-approximate solution, where $N$ denotes the number of agents/tasks. Although successful in practice, similar reduction of the worst-case complexity of Sinkhorn's algorithm using the $\eps$-scaling strategy could not be proved, see the discussions in \cite{scaling_ot, feydy:tel-02945979} for more details.
\end{remark}
\begin{remark}[Exponential convergence of $\pi^\eps$]
The convergence of $\pi^\eps(x, y_i)$ to $\pi^0(x, y_i)$ at the rate $e^{-c_x/\eps}$ for $\rho$-a.e. $x$ and $i \in \{1, \dots, N\}$ matches in our semi-discrete setting the result of \cite{bernton2021entropic} that showed this rate of convergence, only asymptotically and for $(x, y_i)$ not in the support of $\pi^0$, but in a much more general setting.
\end{remark}

\subsection{Non-asymptotic Expansion of the Difference of Costs}

A consequence to these new bounds for $\eps \leq 1$ is an improvement of the asymptotic result on the convergence of the difference of costs proven in \cite{altschuler2021asymptotics}, to the following tight non-asymptotic result: 
\begin{theorem}
\label{th:suboptimality-cv}
Under assumptions of Theorem \ref{th:control-dot-psi}, for any $\alpha' \in (0, \alpha)$ and $\eps \leq 1$,
\begin{align*}
    \abs{ \Wass_{2,\eps}^2(\rho, \mu) - \Wass_2^2(\rho, \mu) - \eps^2 \frac{\pi^2}{12} \sum_{i<j}\frac{w_{ij}}{\nr{y_i - y_j}}} \lesssim \eps^{2+\alpha'}.
\end{align*}
\end{theorem}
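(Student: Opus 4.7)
The plan is to reduce the theorem to a sharp expansion of an entropy integral attached to the dual softmax, and then to use Corollary~\ref{coro:stab-potentials-eps} (equivalently Theorem~\ref{th:control-dot-psi}) to control the error. First, by developing the square and using the equivalence between \eqref{eq:quadratic-ot} and \eqref{eq:primal} at parameter $\eps/2$, I write $\Wass_{2,\eps}^2 = M_2(\rho) + M_2(\mu) - 2\int\sca{x}{y}\,d\pi^{\eps/2}$. Letting $u(\eta) := \Kant^\eta(\psi^\eta) + \sca{\psi^\eta}{\mu}$ denote the dual value, the primal--dual relation \eqref{eq:primal-dual} and an elementary calculation give
\begin{equation*}
\int\sca{x}{y}\,d\pi^\eta = u(\eta) - \eta\,\dot u(\eta), \qquad \dot u(\eta) = 1 + \int_\X H\bigl(p^\eta(x)\bigr)\rho(x)\,dx,
\end{equation*}
where $p^\eta(x) = \bigl(e^{(\sca{x}{y_i}-\psi^\eta_i-(\psi^\eta)^{c,\eta}(x))/\eta}\bigr)_{i}$ is the conditional distribution on $\Y$ given $x$ and $H$ is the Shannon entropy. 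Since $p^s(x)$ concentrates on a single index for $\rho$-a.e.\ $x$ as $s\downarrow 0$, one has $\dot u(0^+) = 1$, so that
\begin{equation*}
\Wass_{2,\eps}^2 - \Wass_2^2 = -2\bigl(u(\eps/2) - u(0)\bigr) + \eps\,\dot u(\eps/2).
\end{equation*}

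The crux of the proof is the asymptotic expansion
\begin{equation*}
\int_\X H(p^s(x))\rho(x)\,dx = s\,\frac{\pi^2}{3}\sum_{i<j}\frac{w_{ij}}{\nr{y_i-y_j}} + O(s^{1+\alpha'})
\end{equation*}
for $s\in(0,1]$. Granting this, a direct calculation---integrating it over $s\in[0,\eps/2]$ to produce $u(\eps/2)-u(0)$ and evaluating at $s=\eps/2$ for $\dot u(\eps/2)$---makes the linear-in-$\eps$ terms cancel and yields $\Wass_{2,\eps}^2 - \Wass_2^2 = \frac{\pi^2}{12}\eps^2\sum_{i<j}w_{ij}/\nr{y_i-y_j} + O(\eps^{2+\alpha'})$, which is the statement. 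The expansion itself is established by localizing the integral to the $O(s)$-wide neighborhood of each $(d-1)$-dimensional face $F_{ij} := \partial\Lag_i(\psi^0)\cap\partial\Lag_j(\psi^0)$: on such a layer exactly two indices $i,j$ are non-negligible in the softmax, and introducing normal coordinates $(z,t)$ with $z\in F_{ij}$, then rescaling $\xi := t\nr{y_i-y_j}/s$, reduces the $F_{ij}$-contribution (up to the errors described below) to $\rho(z)\cdot(s/\nr{y_i-y_j})\cdot\int_\Rsp H(\sigma(\xi),1-\sigma(\xi))\,d\xi$. The $1$D entropy integral evaluates to $\pi^2/3$ via the classical identity $\sum_{n\geq 1}(-1)^{n+1}/n^2 = \pi^2/12$, giving the claimed leading constant after summation over faces.

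The main obstacle is to make the localization quantitative with the sharp $O(s^{1+\alpha'})$ remainder, for which four error sources must be controlled: (i) replacing $\rho(x)$ by $\rho(z)$ within the $O(s)$-wide layer, which by the Hölder assumption costs $O(s^{1+\alpha})\subseteq O(s^{1+\alpha'})$; (ii) replacing $\psi^s$ by $\psi^0$, which by Corollary~\ref{coro:stab-potentials-eps} shifts the face $F_{ij}$ by $O(s^{1+\alpha'})$ and thereby perturbs the rescaled integral by $O(s^{1+\alpha'})$; (iii) discarding the subleading indices $k\notin\{i,j\}$, contributing only $O(e^{-c/s})$ after the rescaling thanks to the positive minimum-distance in $\Y$; and (iv) the $(d-2)$-dimensional strata where three or more Laguerre cells meet, whose $s$-tube has volume $O(s^2)$. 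Combining these estimates establishes the expansion and, after the assembly described above, the theorem.
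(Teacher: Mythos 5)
Your route is genuinely different from the paper's and is, in my assessment, sound in its reduction and leading constants. The paper starts from the strong-duality identity $\Wass_{2,\eps}^2 - \Wass_2^2 = 2\Esp_{\pi^{\eps/2}}\big((\psi^0)^*(x)+\psi^0(y)-\sca{x}{y}\big)$, expands it as $\sum_{i,j}\int_{\Lag_i(\psi^0)}\Delta_{ij}(x)\pi^{\eps/2}_{x,j}\,\dd\rho$, replaces $\psi^{\eps/2}$ by $\psi^0$ using Corollary~\ref{coro:stab-potentials-eps}, and controls each $I_{ij}$ via the coarea formula together with Lemma~6.2/6.4 of \cite{altschuler2021asymptotics}. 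You instead invoke the envelope/Danskin theorem (which is legitimate here since $\Kant^\eps$ is $\mathcal{C}^1$ in $\eps$ and $\eps\mapsto\psi^\eps$ is $\mathcal{C}^1$ by Proposition~\ref{prop:pties-potentials}) to get $\dot u(\eta)=1+\int_\X H(p^\eta(x))\,\dd\rho(x)$, then assemble $\Wass_{2,\eps}^2-\Wass_2^2=-2(u(\eps/2)-u(0))+\eps\dot u(\eps/2)$ and feed in an entropy expansion. I checked that these two representations compute the same quantity $2\big(\Esp_{\pi^0}-\Esp_{\pi^{\eps/2}}\big)\sca{x}{y}$, that your constant $\int_\Rsp H(\sigma(\xi),1-\sigma(\xi))\,\dd\xi=\pi^2/3$ is correct, and that the cancellation of linear terms indeed produces $\frac{\pi^2}{12}\eps^2\sum_{i<j}w_{ij}/\nr{y_i-y_j}$ at leading order. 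The localization/coarea work near Laguerre boundaries is structurally the same as the paper's, and both rely on the same two inputs (Corollary~\ref{coro:stab-potentials-eps} and the $\alpha$-H\"older continuity of $\rho$). Your framing is arguably cleaner conceptually (it identifies the curvature of the dual value as the entropy of the conditional law), and may generalize more readily.

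There is one concrete gap in your error accounting: items~(iii) and~(iv) do not fit together as written. If you excise only an $O(s)$-tube around the $(d-2)$-strata, then at distance $\sim s$ from a corner the third softmax weight $\pi^s_{x,k}$ is bounded by $e^{-c\delta}$, a constant, not $O(e^{-c/s})$. You need a corner cutoff $\gamma\gtrsim s\log(1/s)$ (or $\gamma\sim s^\beta$, $\beta<1$) so that the third index is genuinely negligible on the near-face layer, at the cost of a tube contribution $O(\gamma^2\log N)$; one then arbitrages $\gamma$ to get the final $O(s^{1+\alpha'})$. This is precisely the $(\eta,\gamma)$-arbitrage carried out in the paper's Sections~\ref{sec:proof-prop-control-2nd-term-partition} and~\ref{sec:proof-th-control-2nd-term}, and the same device closes the gap here. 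With that patch your argument goes through.
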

\noindent This result and its tightness are respectively proved in Sections \ref{sec:asymptotics}, \ref{sec:tightness-th-suboptimality} of the Appendix.

\paragraph{Notation.}
The notation $\sca{v}{\nu}$ or $\Esp_\nu(v)$ is used to denote the quantity $\sum_{i=1}^N v_i \nu(y_i)$ when $v \in \Rsp^N$ and $\nu \in \Prob(\Y)$ or the quantity $\int_\X v \dd \nu$ when $v \in \L^1(\X)$ and $\nu \in \Prob(\X)$. Similarly, $\Var_\mu(v)$ denotes the quantity $\sum_{i=1}^N v_i^2 \mu(y_i) - \left(\sum_{i=1}^N v_i \mu(y_i)\right)^2$ in the first case and the quantity $\int_\X v^2 \dd \nu - \left( \int_\X v \dd \nu\right)^2$ in the second. For $v \in \Rsp^N$, $\diag(v)$ denotes the diagonal matrix of $\Rsp^{N\times N}$ with diagonal $v$.

\section{A GOVERNING ODE}

Similarly to \cite{Cominetti1994}, we show Theorem \ref{th:control-dot-psi} by leveraging the fact that $\eps \mapsto \psi^\eps$ satisfies a specific ODE that is deduced from the formal stationary equation \eqref{eq:formal-first-order-cond}. Let's fix $\eps > 0$ and recall in this case the expression of Kantorovich's functional $\Kant^\eps: \Rsp^N \to \Rsp$: for all $\psi \in \Rsp^N$,
\begin{equation}
    \label{eq:kant-discrete}
    \Kant^\eps(\psi) =  \int_\X \eps \log \left( \sum_{i=1}^N \exp\left( \frac{\sca{x}{y_i} - \psi_i}{\eps} \right) \right) \dd \rho(x) + \eps.
\end{equation}

Looking at equation \eqref{eq:kant-discrete}, one can establish that $\psi \mapsto \Kant^\eps(\psi)$ is a $\Class^2$ function from $\Rsp^N$ to $\Rsp$, with the following derivatives when evaluated in $\psi \in \Rsp^N$ (see e.g. Section 3 of \cite{bercu:hal-02864967}):
\begin{align}
    \label{eq:formulas-grad-kant}
    \nabla \Kant^\eps(\psi) &= - \Esp_{x \sim \rho} \pi^\eps_x(\psi), \\
    \label{eq:formulas-hessian-kant}
    \nabla^2 \Kant^\eps(\psi) &= \frac{1}{\eps} \Esp_{x \sim \rho} \left( \diag(\pi^\eps_x(\psi)) - \pi^\eps_x(\psi) \pi^\eps_x(\psi)^\top \right),
\end{align}
where for any $x \in \X$ and $\psi \in \Rsp^N$, $\pi^\eps_x(\psi)$ is a vector of $\Rsp^N$ whose components read for all $i \in \{1, \dots, N\}$
\begin{align}
    \label{eq:def-vector-pi}
    \pi^\eps_{x}(\psi)_i = 
    \frac{ \exp\left( \frac{ \sca{x}{y_i} - \psi_i}{\eps} \right) }{ \sum_{j=1}^N \exp\left( \frac{ \sca{x}{y_j} - \psi_j }{\eps} \right)   }. 
\end{align} 
Intuitively, one can interpret $x \mapsto \pi^\eps_{x}(\psi)_i$ as a smoothed version of the indicator function associated to the $i$-th Laguerre cell of $\psi$: it represents the ratio of mass sent from $x$ to $y_i$ proposed by the candidate solution $\psi$ to problem \eqref{eq:dual}.
One can also easily prove that for any $\psi \in \Rsp^N$, $\eps \mapsto \nabla \Kant^\eps(\psi)$ is a $\Class^1$ function from $\Rsp^*_+$ to $\Rsp^N$, with the formula
\begin{align}
    \label{eq:formulas-derivative-kant-eps}
    \frac{\partial}{\partial \eps}(\nabla \Kant^\eps)(\psi) = \frac{1}{\eps} \Esp_{x \sim \rho}  \big( &\diag(\pi^\eps_x(\psi))\log \pi^\eps_x(\psi) \\ \notag
     &- \pi^\eps_x(\psi) \pi^\eps_x(\psi)^\top \log \pi^\eps_x(\psi) \big).
\end{align}
We can then show:
\begin{proposition}
\label{prop:pties-potentials}
For any $\eps \geq 0$, denote $\psi^\eps$ the solution to problem \eqref{eq:dual}. The mapping $\eps \mapsto \psi^\eps$ is a $\Class^1$ function from $\Rsp_+^*$ to $\left(\mathbb{1}_N\right)^\perp$ that satisfies for any $\eps > 0$ the ODE
        \begin{equation}
            \label{eq:ode}
            \nabla^2 \Kant^\eps (\psi^\eps) \dot{\psi^\eps} + \frac{\partial}{\partial \eps}(\nabla \Kant^\eps)(\psi^\eps) = 0,
        \end{equation}
        where $\dot{\psi^\eps} = \frac{\partial}{\partial \eps} \psi^\eps. $
\end{proposition}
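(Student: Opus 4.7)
The plan is to derive the proposition as an application of the implicit function theorem to the first-order optimality condition $\nabla \Kant^\eps(\psi^\eps) + \mu = 0$ from \eqref{eq:formal-first-order-cond}. Since $\sum_i \pi^\eps_x(\psi)_i = 1$ by \eqref{eq:def-vector-pi}, the gradient in \eqref{eq:formulas-grad-kant} satisfies $\sca{\nabla \Kant^\eps(\psi)}{\mathbb{1}_N}=-1$, so that the map
$$F: \Rsp_+^* \times (\mathbb{1}_N)^\perp \longrightarrow (\mathbb{1}_N)^\perp, \qquad F(\eps, \psi) = \nabla \Kant^\eps(\psi) + \mu$$
is well-defined (using $\sca{\mu}{\mathbb{1}_N}=1$), and the dual optimality \eqref{eq:formal-first-order-cond} together with the constraint $\sca{\psi^\eps}{\mathbb{1}_N}=0$ says exactly that $F(\eps, \psi^\eps)=0$ for every $\eps > 0$.

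The next step is to verify that $F$ is jointly $\mathcal{C}^1$ on $\Rsp_+^* \times (\mathbb{1}_N)^\perp$. Formulas \eqref{eq:formulas-hessian-kant} and \eqref{eq:formulas-derivative-kant-eps} provide candidates for $\partial_\psi F$ and $\partial_\eps F$; I would justify differentiation under the integral sign by a dominated-convergence argument, exploiting that $\X, \Y$ are compact and that, for $(\eps, \psi)$ ranging in a neighborhood of any fixed $(\eps_0, \psi_0)\in \Rsp_+^* \times (\mathbb{1}_N)^\perp$, the integrands $\pi^\eps_x(\psi)_i$ and $\pi^\eps_x(\psi)_i \log \pi^\eps_x(\psi)_i$ are uniformly bounded in $x$ and continuous in $(\eps, \psi)$. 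This yields continuity of both partial derivatives, hence the joint $\mathcal{C}^1$ regularity of $F$.

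The key hypothesis for the implicit function theorem is the invertibility of $\partial_\psi F(\eps, \psi^\eps) = \nabla^2 \Kant^\eps(\psi^\eps)$ restricted to $(\mathbb{1}_N)^\perp$. From \eqref{eq:formulas-hessian-kant}, for any $v \in \Rsp^N$,
$$v^\top \nabla^2 \Kant^\eps(\psi) v = \frac{1}{\eps}\,\Esp_{x\sim\rho}\, \Var_{\pi^\eps_x(\psi)}(v),$$
which is nonnegative and vanishes if and only if $v$ is constant $\rho$-almost everywhere in $x$ in the sense that $v_i$ is the same for every $i$ such that $\pi^\eps_x(\psi)_i > 0$; since all weights $\pi^\eps_x(\psi)_i$ are strictly positive for $\eps>0$, this forces $v \in \mathrm{span}(\mathbb{1}_N)$. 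Hence $\nabla^2 \Kant^\eps(\psi^\eps)$ is positive definite on $(\mathbb{1}_N)^\perp$, which is exactly the strict-convexity statement recalled before \eqref{eq:formal-first-order-cond}.

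With these pieces in hand I would invoke the implicit function theorem at each $\eps_0>0$: it gives a $\mathcal{C}^1$ curve $\eps \mapsto \tilde\psi(\eps) \in (\mathbb{1}_N)^\perp$ defined on a neighborhood of $\eps_0$ and satisfying $F(\eps, \tilde\psi(\eps)) = 0$. By the uniqueness of the solution to \eqref{eq:dual} this local curve must coincide with $\eps \mapsto \psi^\eps$, and patching the local curves yields that $\eps\mapsto\psi^\eps$ is globally $\mathcal{C}^1$ on $\Rsp_+^*$. Differentiating the identity $F(\eps, \psi^\eps)=0$ in $\eps$ then produces \eqref{eq:ode}. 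The main obstacle is really bookkeeping: checking carefully the uniform integrability needed for the joint $\mathcal{C}^1$ regularity of $F$, and making sure that the Hessian kernel is exactly $\mathrm{span}(\mathbb{1}_N)$, after which the IFT delivers the conclusion almost for free.
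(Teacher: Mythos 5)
Your proposal is correct and follows the same strategy as the paper: apply the implicit function theorem to the stationarity equation $\nabla\Kant^\eps(\psi^\eps)+\mu=0$, using the $\Class^2$ regularity of $\psi\mapsto\Kant^\eps(\psi)$, the $\Class^1$ regularity of $\eps\mapsto\nabla\Kant^\eps(\psi)$, and the positive definiteness of $\nabla^2\Kant^\eps$ on $(\mathbb{1}_N)^\perp$, then differentiate the identity in $\eps$. You fill in a few details the paper leaves tacit (that $\nabla\Kant^\eps(\psi)+\mu$ indeed lies in $(\mathbb{1}_N)^\perp$, the joint $\Class^1$ regularity in $(\eps,\psi)$ via domination, and that the Hessian's kernel is exactly $\mathrm{span}(\mathbb{1}_N)$ because the softmax weights are strictly positive), but the argument is the same.
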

\begin{proof}
Since for any $\eps>0$, $\psi \mapsto \Kant^\eps(\psi)$ is a $\Class^2$ convex function from $\Rsp^N$ to $\Rsp$, one can characterize $\psi^\eps$ with the first order condition \eqref{eq:formal-first-order-cond}. Using that $\psi \mapsto \Kant^\eps(\psi)$ is $\Class^2$ on $\Rsp^N$, $\eps \mapsto \nabla \Kant^\eps(\psi)$ is $\Class^1$ on $\Rsp^+_*$, and that $\Kant^\eps$ is strictly convex on $\left(\mathbb{1}_N\right)^\perp$, i.e. that for any $\psi \in \Rsp^N$ such that $\sca{\psi}{\mathbb{1}_N} = 0$, $\nabla^2 \Kant^\eps(\psi)>0$, the implicit function theorem asserts that $\eps \mapsto \psi^\eps$ is a $\Class^1$ function from $\Rsp_+^*$ to $\left(\mathbb{1}_N\right)^\perp$. We can therefore differentiate the stationary equation \eqref{eq:formal-first-order-cond} w.r.t. $\eps$ and obtain that $\psi^\eps$ satisfies ODE \eqref{eq:ode}.
\end{proof}


Controlling $||\dot{\psi^\eps}||$ thus amounts to finding a lower bound on the p.s.d. matrix $\nabla^2 \Kant^\eps (\psi^\eps)$ and an upper bound on the second term $\frac{\partial}{\partial \eps}(\nabla \Kant^\eps)(\psi^\eps)$. 
The two following Theorems provide such controls. The first one can be regarded as a local strong-convexity estimate of the regularized Kantorovich's functional and is proven in Section \ref{sec:strong_convexity}:
\begin{theorem}[Strong convexity of $\Kant^\eps$]
\label{th:strong-convexity-K}
Let $\rho \in \Prob(\X)$ satisfying Assumption \ref{assump:source} and let $\mu \in \Prob(\Y)$. Then for any $\eps > 0$, the solution $\psi^\eps$ to problem \eqref{eq:dual} verifies for any $v \in \Rsp^N$
$$\Var_{\mu}(v) \leq \left( e^{R_\Y \diam(\X)} \frac{M_\rho}{m_\rho} + \eps \right) \sca{v}{\nabla^2 \Kant^\eps(\psi^\eps) v}.$$
\end{theorem}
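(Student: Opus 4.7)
The plan is to reduce the claim, via the law of total variance, to a Poincaré-type inequality for the softmax-mixture map $T(v)(x) := \sum_i v_i \pi^\eps_x(\psi^\eps)_i$, and then to prove that inequality by exploiting the boundedness of $\rho$ and the Lipschitz regularity of the $c$-transform $(\psi^\eps)^{c,\eps}$.

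First, formula \eqref{eq:formulas-hessian-kant} rewrites the Hessian quadratic form as
\[
    \sca{v}{\nabla^2 \Kant^\eps(\psi^\eps) v} = \frac{1}{\eps} \int_\X \Var_{\pi^\eps_x(\psi^\eps)}(v) \dd \rho(x),
\]
while the first-order condition \eqref{eq:first-order-cond} identifies $\mu$ as the second marginal of the joint probability $(x,i) \mapsto \pi^\eps_x(\psi^\eps)_i\, \dd \rho(x)$ on $\X \times \{1,\dots,N\}$. Applying the law of total variance to the pair $(X,I)$ with $X \sim \rho$ and $I \mid X \sim \pi^\eps_X(\psi^\eps)$ then yields
\[
    \Var_\mu(v) = \int_\X \Var_{\pi^\eps_x(\psi^\eps)}(v) \dd \rho(x) + \Var_\rho[T(v)] = \eps\,\sca{v}{\nabla^2 \Kant^\eps(\psi^\eps) v} + \Var_\rho[T(v)].
\]
The theorem is therefore equivalent to the Poincaré-type bound
\[
    \Var_\rho[T(v)] \leq e^{R_\Y \diam(\X)}\frac{M_\rho}{m_\rho}\,\sca{v}{\nabla^2 \Kant^\eps(\psi^\eps) v}.
\]

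To prove this reduced inequality, I would start from the explicit gradient of $T$: using $\nabla_x \log \pi^\eps_x(\psi^\eps)_i = \eps^{-1}(y_i - \nabla(\psi^\eps)^{c,\eps}(x))$, one obtains $\nabla_x T(v)(x) = \eps^{-1}\sum_i v_i (y_i - \bar y(x)) \pi^\eps_x(\psi^\eps)_i$ with $\bar y(x) = \nabla(\psi^\eps)^{c,\eps}(x) = \sum_i y_i \pi^\eps_x(\psi^\eps)_i$, and Cauchy--Schwarz gives $\|\nabla_x T(v)\|^2 \leq \eps^{-2}\diam(\Y)^2\,\Var_{\pi^\eps_x(\psi^\eps)}(v)$. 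A weighted Poincaré inequality on the convex body $\X$ together with the Holley--Stroock perturbation (using $m_\rho \leq \rho \leq M_\rho$) would then control $\Var_\rho[T(v)]$ by the spatial average of these local variances.

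The main obstacle is that this naive combination produces a constant of order $\diam(\X)^2\diam(\Y)^2 M_\rho/(m_\rho \eps)$, which blows up as $\eps \to 0$ and is strictly weaker than the $\eps$-free constant $e^{R_\Y \diam(\X)} M_\rho/m_\rho$ demanded by the theorem. Bridging this gap requires a sharper argument that does not dissipate a factor of $\eps$ on the gradient bound. A promising route is to express $\pi^\eps_x(\psi^\eps)_i$ as an exponential tilt of a reference $\pi^\eps_{x_0}(\psi^\eps)_i$ whose log-weight is $\eps^{-1}\bigl(\sca{x-x_0}{y_i} + (\psi^\eps)^{c,\eps}(x_0) - (\psi^\eps)^{c,\eps}(x)\bigr)$, and to exploit that $(\psi^\eps)^{c,\eps}$ is $R_\Y$-Lipschitz together with $|\sca{x-x_0}{y_i}| \leq R_\Y \diam(\X)$, so that the relative fluctuations of this tilt are controlled uniformly in $\eps$ (the $\eps$ in the exponent being absorbed through a Gibbs-type change of measure that trades the $1/\eps^2$ gradient factor for a bounded exponential). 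The constant $e^{R_\Y \diam(\X)}$ is expected to emerge from precisely this $\eps$-free uniform control of the tilt weights across $\X$; this is the step I anticipate to be genuinely delicate.
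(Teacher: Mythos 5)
You correctly identify the law-of-total-variance reduction — this is exactly how the paper begins its proof. Setting $T(v)(x) = \Esp_{\pi^\eps_x(\psi^\eps)}(v)$, the first-order condition gives $\mu = \Esp_{x\sim\rho}\,\pi^\eps_x(\psi^\eps)$, the total-variance decomposition yields
\begin{equation*}
\Var_\mu(v) = \Var_\rho\big[T(v)\big] + \int_\X \Var_{\pi^\eps_x(\psi^\eps)}(v)\,\dd\rho(x) = \Var_\rho\big[T(v)\big] + \eps\,\sca{v}{\nabla^2\Kant^\eps(\psi^\eps)v},
\end{equation*}
and the theorem reduces to $\Var_\rho[T(v)] \leq e^{R_\Y\diam(\X)}\frac{M_\rho}{m_\rho}\sca{v}{\nabla^2\Kant^\eps(\psi^\eps)v}$. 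So far so good, and you also correctly diagnose why the naive Poincaré--Holley--Stroock route fails: it produces a constant proportional to $1/\eps$ rather than an $\eps$-free one.

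However, your proposal stops there. The final paragraph — the tilting idea — is not a proof but a conjecture that something "$\eps$-free" should come out; you write yourself that it is the step you "anticipate to be genuinely delicate," and you do not supply the mechanism. The mechanism in the paper is the Prékopa--Leindler inequality. Because $(\lambda\psi+(1-\lambda)\varphi)^{c,\eps}(\lambda u+(1-\lambda)v) \leq \lambda\psi^{c,\eps}(u)+(1-\lambda)\varphi^{c,\eps}(v)$ (Hölder on the LogSumExp), Prékopa--Leindler on the convex set $\X$ gives that $I(\psi) := \log\int_\X e^{-\psi^{c,\eps}}$ is concave. Differentiating $I$ twice at $\psi^\eps$ and using $\nabla^2 I(\psi^\eps)\preceq 0$ produces precisely the Brascamp--Lieb-type variance bound
\begin{equation*}
\Var_{x\sim\tilde\rho^\eps}\big[T(v)\big] \;\leq\; \frac{1}{\eps}\int_\X \Var_{\pi^\eps_x(\psi^\eps)}(v)\,\dd\tilde\rho^\eps(x),
\qquad \tilde\rho^\eps \propto e^{-(\psi^\eps)^{c,\eps}},
\end{equation*}
with constant $1$ and \emph{not} depending on $\eps$. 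Passing from $\tilde\rho^\eps$ to $\rho$ then uses two inputs: the $R_\Y$-Lipschitz continuity of $(\psi^\eps)^{c,\eps}$, which bounds its oscillation over $\X$ by $R_\Y\diam(\X)$, and the density bounds $m_\rho\leq\rho\leq M_\rho$; together these give the two-sided comparison $\frac{e^{-M}}{Z M_\rho}\rho\leq\tilde\rho^\eps\leq\frac{e^{-m}}{Z m_\rho}\rho$, and hence the factor $e^{R_\Y\diam(\X)}M_\rho/m_\rho$. This concavity-of-the-log-partition-function step is the missing core of your argument; the exponential-tilt heuristic in your last paragraph points in roughly the right direction (the tilted measure $\tilde\rho^\eps$ and the oscillation of $(\psi^\eps)^{c,\eps}$ do appear), but without the Prékopa--Leindler/Brascamp--Lieb input it does not yield an $\eps$-free Poincaré constant, and as written the proof is incomplete.
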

\begin{remark}[Dependence on $\eps$]
\label{rk:strong-convexity-dep-eps}
Notice that in the limit $\eps \to 0$, one recovers a similar non-trivial estimate for the unregularized Kantorovich's functional given recently in Theorem 2.1 of \cite{delalande:hal-03164147} in the context of the study of the stability of optimal transport maps w.r.t. the target measure in quadratic optimal transport. Our estimate may also be compared to two other similar strong-convexity estimates found in Theorem 4 of \shortcite{luise} (in the discrete context) and in Lemma A.1 of \cite{bercu:hal-02864967} (that is not explicit) that both diverge as $\eps$ goes to zero. Note also that as $\eps$ goes to $\infty$, our estimate deteriorates: in this limit, $\Kant^\eps$ gets \emph{flat} around its minimum.
\end{remark}

\noindent The second control gives a uniform bound on the second term of ODE \eqref{eq:ode} and is proven in Section \ref{sec:bound_2nd_term}:
\begin{theorem}
\label{th:control-2nd-term}
Let $\rho \in \Prob(\X)$ satisfying Assumption \ref{assump:source} with an $\alpha$-Hölder continuous density for some $\alpha \in (0,1)$ and let $\mu \in \Prob(\Y)$. Then for any $\eps \leq 1$, $\alpha' \in (0, \alpha)$, the solutions $\psi^\eps$ to problem \eqref{eq:dual} verify:
\begin{align*}
    \nr{\frac{\partial}{\partial \eps}(\nabla \Kant^\eps)(\psi^\eps)}_\infty \lesssim \eps^{\alpha'}, 
\end{align*}
where $\lesssim$ hides multiplicative constants that depend on $\X, \rho, \Y, \mu$. Besides, for any $\eps \geq 1$,
\begin{align*}
    \nr{\frac{\partial}{\partial \eps}(\nabla \Kant^\eps)(\psi^\eps)}_\infty \lesssim \frac{1}{\eps}.
\end{align*}
\end{theorem}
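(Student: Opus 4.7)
Starting from \eqref{eq:formulas-derivative-kant-eps} and using the identity $\log p_i(x) - \sum_\ell p_\ell(x)\log p_\ell(x) = (u_i(x) - \bar u(x))/\eps$, where $u_\ell(x) := \sca{x}{y_\ell} - \psi^\eps_\ell$, $p_\ell(x) := \pi^\eps_x(\psi^\eps)_\ell$ and $\bar u(x) := \sum_\ell p_\ell(x) u_\ell(x)$, the $i$-th component of the object to bound takes the form
\[
\frac{\partial}{\partial \eps}(\nabla \Kant^\eps)(\psi^\eps)_i = \frac{1}{\eps^2}\int_\X \sum_{j \neq i} p_i(x)\,p_j(x)\,\bigl(u_i(x) - u_j(x)\bigr)\,\dd \rho(x).
\]
The regime $\eps \geq 1$ is then immediate from \eqref{eq:formulas-derivative-kant-eps}: each summand in the expectation is uniformly bounded by $|p_i \log p_i| + p_i \log N \leq 1/e + \log N$, so the $1/\eps$ prefactor yields the claimed $\mathcal{O}(1/\eps)$ bound. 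The entire difficulty lies in the regime $\eps \leq 1$, where the $1/\eps^2$ prefactor must be absorbed.

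For $\eps \leq 1$ the key geometric observation is that each product $p_i(x) p_j(x)$ is concentrated in a tube of width $\mathcal{O}(\eps)$ around the common interface $I_{ij} = \{u_i = u_j\}$ of the Laguerre cells $\Lag_i(\psi^\eps)$ and $\Lag_j(\psi^\eps)$. In the normal coordinate $x = x_0 + s\,n_{ij}$ with $x_0 \in I_{ij}$ and $n_{ij} = (y_i - y_j)/\nr{y_i - y_j}$, the difference $u_i - u_j = s \nr{y_i - y_j}$ is odd in $s$; and provided $x_0$ lies at distance at least $\gamma$ from every other interface, a short computation gives $p_i p_j = \sigma(z)(1 - \sigma(z)) + \mathcal{O}(e^{-c\gamma/\eps})$ with $\sigma$ the logistic and $z = s\nr{y_i - y_j}/\eps$, hence even in $s$. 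Freezing $\rho$ at the constant value $\rho(x_0)$ thus makes the inner $s$-integral vanish by antisymmetry, and the Hölder estimate $|\rho(x_0 + s n_{ij}) - \rho(x_0)| \leq C_\rho |s|^\alpha$ leaves a residual of order $\int |s|^{1+\alpha} \sigma(z)(1-\sigma(z))\,\dd s = \mathcal{O}(\eps^{2+\alpha})$; the tail $|s| \geq s_\star := C\eps\log(1/\eps)$ contributes $e^{-c s_\star/\eps} = \eps^{cC}$, polynomially small once $C$ is chosen large. Integrating over the $(d-1)$-bulk of $I_{ij}$ and applying the $1/\eps^2$ prefactor gives an $\mathcal{O}(\eps^\alpha)$ contribution per pair.

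The remaining ingredient, and the main obstacle, is the handling of the ``junction region'' -- a neighborhood of the codimension-$2$ skeleton of the tessellation where three or more Laguerre cells meet and the normal-coordinate antisymmetry breaks down. On a ball of radius $r_0$ around a junction point $x_J$, only the crude bounds $|u_i(x) - u_j(x)| \leq \nr{y_i - y_j}\,r_0$ and $p_i p_j \leq 1$ are available, yielding $|T_i(x)| \lesssim N r_0 \diam(\Y)/\eps^2$ pointwise; the Lebesgue volume of this neighborhood is at most $\mathcal{H}^{d-2}(\operatorname{skeleton})\,r_0^2/\cos(\theta/2)$, the factor accounting for the minimum angle $\theta$ between interfaces. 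The total junction contribution to $|\int T_i \dd\rho|$ is therefore $\mathcal{O}\bigl(M_\rho N \diam(\Y) \mathcal{H}^{d-2}\,r_0^3 /(\eps^2 \cos(\theta/2))\bigr)$; choosing $r_0 = \eps^{(2+\alpha')/3}$ makes this $\mathcal{O}(\eps^{\alpha'})$ while still keeping $r_0 \gg \eps \log(1/\eps)$, so that the cutoff $s_\star$ used in the bulk analysis fits inside the ``no-junction'' part of every interface. Points at distance $\gtrsim r_0$ from the entire skeleton lie deep inside a single Laguerre cell, where every non-dominant $p_j$ is $\mathcal{O}(e^{-cr_0/\eps})$ and the contribution is exponentially small. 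Summing over the $\binom{N}{2}$ pairs and the junctions yields the claim. The hard part is precisely this junction balancing: once antisymmetry is lost, the $1/\eps^2$ prefactor can only be defeated by shrinking the neighborhood, and the two-sided constraint (shrink enough to kill the junction integral, keep large enough to preserve the exponential tail in the interface integral) is exactly what forces $\alpha'<\alpha$ and the $1/\cos(\theta/2)$ dependence in the constant.
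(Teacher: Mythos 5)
Your proposal follows essentially the same strategy as the paper's proof of Theorem \ref{th:control-2nd-term}, which first establishes Proposition \ref{prop:control-2nd-term-partition} via a four-way partition of $\X$ (deep inside $\Lag_i(\psi^\eps)$, deep outside, a tube around each facet away from corners, and the corners themselves) and then tunes two width parameters $\eta, \gamma$. The same three ingredients appear in both arguments: exponential smallness of $\pi^\eps_{x,j}\pi^\eps_{x,i}$ away from all facets, the antisymmetry of the integrand in the normal coordinate across each facet (so that only the Hölder modulus of $\rho$ survives), and a crude integrand bound times a small-volume bound for the codimension-$2$ corner/junction set with the $1/\cos(\theta/2)$ angle factor. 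Your starting formula is the paper's equation \eqref{eq:expr-Mpi-logpi}, and your $\eps\geq 1$ argument matches.

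The genuine (modest) difference is in the parameter balancing. The paper takes both the tube width and the corner cutoff to scale as $\eta\sim\gamma\sim\eps^{\beta}$ with $\beta=(\alpha'+2)/(\alpha+2)\in(\tfrac{2}{2+\alpha},1)$, so the loss from $\alpha$ to $\alpha'$ is already incurred in the facet term $\eta^{2+\alpha}/\eps^2=\eps^{\alpha'}$, and the corner term $\gamma^2\eta/\eps^2=\eps^{3\beta-2}$ is subdominant. You instead take a much thinner tube, $s_\star\sim\eps\log(1/\eps)$, so the facet contribution is cleanly $O(\eps^{\alpha})$ (no $\alpha'$ loss there), and a wider junction neighborhood $r_0=\eps^{(2+\alpha')/3}$, which is where you pay the $\alpha\to\alpha'$ price via $r_0^3/\eps^2=\eps^{\alpha'}$; the two-sided constraint $s_\star\ll r_0$ holds automatically. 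Both balancings reach the same $\eps^{\alpha'}$ conclusion; yours makes it clearer that the junction is the actual bottleneck, while the paper's keeps a single scale and is a bit simpler to state. Two small slips in your writeup worth noting: the sentence ``points at distance $\gtrsim r_0$ from the entire skeleton lie deep inside a single Laguerre cell'' is not literally true (a point far from the codimension-$2$ skeleton can sit on a facet); what you need, and implicitly use, is that the complement of (tube $\cup$ junction) consists of points at distance $>s_\star$ from \emph{every} facet, where all non-dominant $\pi^\eps_{x,j}$ are exponentially small. Also, in the junction crude bound $|u_i(x)-u_j(x)|\le r_0\nr{y_i-y_j}$ only holds for pairs $(i,j)$ whose cells both touch the junction point; for other $j$ you should instead invoke the exponential smallness of $\pi^\eps_{x,j}$, exactly as the paper does when splitting the cases $\abs{f_i^\eps-f_j^\eps}\lessgtr\eta(\diam(\Y)+1)$ on $B^\eps_{i,\eta,\gamma}$.
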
 
\noindent With these results, the proof of Theorem \ref{th:control-dot-psi} falls directly and is given in the Appendix (Section \ref{sec:proof-theorem-dot-psi}).


\label{sec:governing-ode}

\section{STRONG CONVEXITY OF $\Kant^\eps$}
\label{sec:strong_convexity}

In this section, we prove the strong convexity estimate of Theorem \ref{th:strong-convexity-K} for the regularized ($\eps > 0$) Kantorovich's functional. As mentionned in Remark \ref{rk:strong-convexity-dep-eps}, this estimate is reminiscent of Theorem 2.1 in \cite{delalande:hal-03164147} which relies on the Brascamp-Lieb inequality. In contrast, we use here the Prékopa-Leindler inequality \cite{prekopa1, leindler, prekopa2}, that is known to entail the Brascamp-Lieb inequality \cite{Bobkov2000}. This inequality allows us to get directly a similar estimate to the one given in Theorem \ref{th:strong-convexity-K}, but with a modified source measure:
\begin{proposition}
\label{prop:concavity-I}
The functional $I : \Rsp^N \to \Rsp, \psi \mapsto \log(\int_\X e^{-\psi^{c, \eps}})$ is $\Class^2$ and concave. In particular, its Hessian is negative semi-definite:
\begin{align*}
    \nabla^2 I(\psi^\eps) = &- \frac{1}{\eps} \Esp_{x \sim \tilde{\rho}^\eps} ( \diag(\pi^\eps_x(\psi^\eps)) - \pi^\eps_x(\psi^\eps) \pi^\eps_x(\psi^\eps)^\top ) \\
    &+ \Esp_{x \sim \tilde{\rho}^\eps}\pi^\eps_x(\psi^\eps) \pi^\eps_x(\psi^\eps)^\top \\
    &- \Esp_{x \sim \tilde{\rho}^\eps}\pi^\eps_x(\psi^\eps) \Esp_{x \sim \tilde{\rho}^\eps}\pi^\eps_x(\psi^\eps)^\top \leq 0,
\end{align*}
where $\tilde{\rho}^\eps := \frac{e^{-(\psi^\eps)^{c, \eps}}}{\int_\X e^{-(\psi^\eps)^{c, \eps}} }$.
\end{proposition}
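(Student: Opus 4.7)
The plan is to separate the statement into three tasks: $\Class^2$ regularity, concavity via Prékopa-Leindler, and an explicit Hessian computation at $\psi^\eps$. The negative semi-definiteness then follows for free from concavity.

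For the $\Class^2$ regularity, the map $(x, \psi) \mapsto \psi^{c,\eps}(x) = \eps \log(\sum_i e^{(\sca{x}{y_i}-\psi_i)/\eps})$ is smooth in $\psi$ with derivatives bounded uniformly on $\X$ (since $\X$ is compact and the $y_i$, $\psi$ range over bounded sets locally). Two applications of differentiation under the integral sign then show that $Z: \psi \mapsto \int_\X e^{-\psi^{c,\eps}(x)} \dd x$ is $\Class^2$ and, since $Z>0$, so is $I = \log Z$.

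For concavity, I would appeal to the Prékopa-Leindler inequality in the form: the marginal of a log-concave function is log-concave. It suffices therefore to check that $(\psi, x) \mapsto e^{-\psi^{c,\eps}(x)}$ is jointly log-concave, i.e.\ that $(\psi, x) \mapsto \psi^{c,\eps}(x)$ is jointly convex on $\Rsp^N \times \X$. But this map equals $\eps \cdot \mathrm{LSE}(A(\psi, x))$ where $\mathrm{LSE}$ is the log-sum-exp function and $A$ is the affine map $(\psi, x) \mapsto ((\sca{x}{y_i}-\psi_i)/\eps)_{i=1}^N$; log-sum-exp is convex and convexity is preserved by composition with an affine map, so joint convexity holds. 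Prékopa-Leindler then yields concavity of $I$.

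For the Hessian, I would differentiate twice under the integral. A direct computation gives $\nabla_\psi \psi^{c,\eps}(x) = -\pi^\eps_x(\psi)$ and $\nabla^2_\psi \psi^{c,\eps}(x) = \frac{1}{\eps}(\diag(\pi^\eps_x(\psi)) - \pi^\eps_x(\psi)\pi^\eps_x(\psi)^\top)$. Writing $\nabla I = \nabla Z / Z$ and $\nabla^2 I = \nabla^2 Z/Z - (\nabla Z)(\nabla Z)^\top / Z^2$, one obtains
\begin{align*}
\nabla I(\psi) &= \Esp_{x\sim \tilde{\rho}^{\eps}_\psi} \pi^\eps_x(\psi),\\
\nabla^2 I(\psi) &= \Esp_{x\sim \tilde{\rho}^{\eps}_\psi}\!\Bigl(\nabla_\psi \pi^\eps_x(\psi) + \pi^\eps_x(\psi)\pi^\eps_x(\psi)^\top\Bigr) - \Esp_{x\sim \tilde{\rho}^{\eps}_\psi}\!\pi^\eps_x(\psi)\, \Esp_{x\sim \tilde{\rho}^{\eps}_\psi}\!\pi^\eps_x(\psi)^\top,
\end{align*}
where $\tilde{\rho}^\eps_\psi \propto e^{-\psi^{c,\eps}}$. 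Substituting the expression for $\nabla_\psi \pi^\eps_x(\psi) = -\frac{1}{\eps}(\diag(\pi^\eps_x(\psi)) - \pi^\eps_x(\psi)\pi^\eps_x(\psi)^\top)$ and specialising to $\psi = \psi^\eps$ gives the displayed formula. The inequality $\nabla^2 I(\psi^\eps) \leq 0$ is then immediate from the concavity of $I$.

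The only mildly delicate point is making sure the sign in the first term of the Hessian ends up negative, which hinges on keeping track of the two competing contributions from differentiating $\pi^\eps_x(\psi)$ and $e^{-\psi^{c,\eps}(x)}$ separately; both are mechanical once the gradients and Jacobians above are in place, so I do not expect any genuine obstacle.
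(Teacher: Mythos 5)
Your proposal is correct and follows essentially the same route as the paper: compute the Hessian by differentiating under the integral (your formula, rewritten via $\nabla_\psi \pi^\eps_x(\psi) = -\nabla^2_\psi[\psi^{c,\eps}(x)] = -\tfrac{1}{\eps}(\diag(\pi^\eps_x)-\pi^\eps_x(\pi^\eps_x)^\top)$, matches the paper's term-by-term expression exactly), and deduce concavity from Prékopa--Leindler. The one cosmetic difference is that you invoke PL in its ``marginals of log-concave functions are log-concave'' packaging, together with the standard observation that $(\psi,x)\mapsto\psi^{c,\eps}(x)$ is jointly convex as LSE composed with an affine map, whereas the paper unwinds the same joint convexity by an explicit Hölder inequality on the sum and then applies PL in its raw three-function form; these are equivalent statements, and both rely on the convexity of $\X$ to keep the marginalization legitimate.
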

This proposition implies Theorem \ref{th:strong-convexity-K}. Indeed, in the above expression of $\nabla^2 I(\psi^\eps)$, one can notice that the first term \emph{almost} corresponds to $- \nabla^2 \Kant(\psi^\eps)$, while the sum of the second and third terms \emph{almost} corresponds to a p.s.d. matrix whose associated bilinear form corresponds to the covariance w.r.t. $\mu = \Esp_{x \sim \rho} \pi^\eps_x(\psi^\eps)$. The difference with those terms resides in the presence of $\tilde{\rho}^\eps$ instead of $\rho$. A detailed proof of Theorem \ref{th:strong-convexity-K} is given in Section \ref{sec:proof-th-strong-convexity} of the Appendix.

\begin{proof}[Proof of Proposition \ref{prop:concavity-I}]
Let's first show that $I$ is $\Class^2$ on $\Rsp^N$. For any $x \in \X$ and $\psi \in \Rsp^N$, first recall the expression of $\psi^{c, \eps}(x)$:
\begin{align*}
    \psi^{c, \eps}(x) = \eps \log \left( \sum_{i=1}^N e^{\frac{\sca{x}{y_i} - \psi_i}{\eps}} \right)
\end{align*}
For any fixed $x \in \X$, the function $\psi \mapsto \psi^{c, \eps}(x)$ is $\Class^2$ on $\Rsp^N$ and its derivatives read
\begin{align*}
    \nabla_\psi [\psi^{c, \eps}(x)] &= - \pi^\eps_x(\psi), \\ 
    \nabla^2_\psi [\psi^{c, \eps}(x)] &= \frac{1}{\eps} \left( \diag(\pi^\eps_x(\psi)) - \pi^\eps_x(\psi) \pi^\eps_x(\psi)^\top \right).
\end{align*}
Therefore with $I : \psi \mapsto \log\left(\int_\X e^{-\psi^{c,\eps}}\right)$, $I$ is a $\Class^2$ function on $\Rsp^N$ and its derivatives read:
\begin{align*}
    \nabla I(\psi) &= \frac{- \int_\X\nabla_\psi [\psi^{c, \eps}(x)] e^{-\psi^{c,\eps}(x)} }{ \int e^{-\psi^{c,\eps}} } 
\end{align*}
and
\begin{align*}
    \nabla^2 I(\psi) &= \frac{- \int_\X\nabla^2_\psi [\psi^{c, \eps}(x)] e^{-\psi^{c,\eps}(x)} }{ \int e^{-\psi^{c,\eps}} } \\
    & \quad + \frac{\int_\X\nabla_\psi [\psi^{c, \eps}(x)]\nabla_\psi [\psi^{c, \eps}(x)]^\top e^{-\psi^{c,\eps}(x)}}{ \int e^{-\psi^{c,\eps}} } \\
    &\quad- \left( \frac{\int_\X\nabla_\psi [\psi^{c, \eps}(x)] e^{-\psi^{c,\eps}(x)} }{ \int e^{-\psi^{c,\eps}} } \right) \\
    &\quad \quad \times \left( \frac{\int_\X\nabla_\psi [\psi^{c, \eps}(x)] e^{-\psi^{c,\eps}(x)} }{ \int e^{-\psi^{c,\eps}} } \right)^\top,
\end{align*}
which entails the claimed expression for $\nabla^2 I(\psi^\eps)$.
We now show that $I$ is a concave function on $\Rsp^N$. Let $\psi, \varphi \in \Rsp^N$. Let $0<\lambda<1$. Notice that for any $u, v \in \X$ we have:
\begin{align*}
    \big( & \lambda  \psi  + (1-\lambda)\varphi \big)^{c, \eps}(\lambda u + (1-\lambda)v) \\
    &= \eps\log \left( \sum_{i=1}^N e^{\frac{\sca{\lambda u + (1-\lambda)v}{y_i} -  \left( \lambda \psi + (1-\lambda)\varphi \right)(y_i)}{\eps}}  \right) \\
    &= \eps\log \left( \sum_{i=1}^N \left( e^{\frac{\sca{u}{y_i} - \psi(y_i)}{\eps}} \right)^\lambda \left( e^{\frac{\sca{v}{y_i} - \varphi(y_i)}{\eps}} \right)^{1 - \lambda}  \right) \\
    &\leq \eps \log \left[ \left( \sum_{i=1}^N e^{\frac{\sca{u}{y_i} - \psi(y_i)}{\eps}} \right)^\lambda \left( \sum_{i=1}^N e^{\frac{\sca{v}{y_i} - \varphi(y_i)}{\eps}} \right)^{1 - \lambda}  \right] \\
    &= \lambda \psi^{c, \eps}(u) + (1-\lambda) \varphi^{c, \eps}(v),
\end{align*}
where the inequality corresponds to Hölder's inequality. Denoting
\begin{gather*}
    h(u) = e^{-\left( \lambda \psi + (1-\lambda)\varphi \right)^{c, \eps}(u)}, \\
    f(u) = e^{- \psi^{c, \eps}(u)}, \quad g(u) = e^{-\varphi^{c, \eps}(u)},
\end{gather*}
we thus have shown that 
$$ h(\lambda u + (1-\lambda)v) \geq f(u)^\lambda g(v)^{1-\lambda}. $$
Using that $\X$ is convex, the Prékopa–Leindler inequality \cite{prekopa1, leindler, prekopa2} then ensures that 
$$ \int_\X h \geq \left( \int_\X f \right)^\lambda \left( \int_\X g \right)^{1 - \lambda}.$$
This leads to the concavity of $I$:
\begin{align*}
    I(\lambda \psi + &(1-\lambda)\varphi) = \log\left(\int_\X h\right)\\
    &\geq \lambda \log \left( \int_\X f \right) + (1-\lambda) \log \left( \int_\X g \right) \\
    &= \lambda I(\psi) + (1-\lambda) I(\varphi). \qedhere
\end{align*}

\end{proof}

\section{BOUNDING $\frac{\partial}{\partial \eps}(\nabla \Kant^\eps)(\psi^\eps)$} 
\label{sec:bound_2nd_term}




In this section, we prove Theorem \ref{th:control-2nd-term} that gives a uniform bound on the second term $\frac{\partial}{\partial \eps}(\nabla \Kant^\eps)(\psi^\eps) \in \Rsp^N$ in ODE \eqref{eq:ode}. For conciseness, we will use $\pi^\eps_x$ instead of $\pi^\eps_x(\psi^\eps)$ since $\psi^\eps$ will be the only potential of interest. For any $j \in \{1, \dots, N\}$, we introduce the function
$$ f_j^\eps : x \in \X \mapsto \sca{x}{y_j} - \psi^\eps_j \in \Rsp.$$
Notice that for any $j \in \{1, \dots, N\}$, $\pi^\eps_{x,j}$ satisfies the following equality and inequality: $$\pi^\eps_{x, j} = \frac{ \exp(\frac{f^\eps_j(x)}{\eps}) }{ \sum_{k} \exp(\frac{f^\eps_k(x)}{\eps}) } \leq \exp\left(\frac{f^\eps_j(x) - \max_{\ell}f^\eps_\ell(x)}{\eps}\right).$$ 
Finding a uniform bound on the second term of \eqref{eq:ode} then consists in finding a bound for any $i \in \{1, \dots, N\}$ on the quantity
\begin{align}
    [\frac{\partial}{\partial \eps}(&\nabla \Kant^\eps)(\psi^\eps)]_i = \int_\X \frac{1}{\eps} [( \diag(\pi^\eps_x) - \pi^\eps_x (\pi^\eps_x)^\top ) \log \pi^\eps_x]_i \notag \\
    &= \int_\X \sum_{j \neq i} \left( \frac{f^\eps_i(x) - f^\eps_j(x)}{\eps^2} \right) \pi^\eps_{x,j} \pi^\eps_{x,i} \dd \rho(x).
    \label{eq:expr-Mpi-logpi}
\end{align}
Recall that the first order condition \eqref{eq:first-order-cond} entails that $e^{\frac{\psi^\eps_i}{\eps}} \mu_i = \int_\X e^{\frac{\sca{x}{y_i} - (\psi^\eps)^{c,\eps}(x)}{\eps}} \dd \rho(x)$, which ensures $\abs{\psi^\eps_i - \psi^\eps_j} \leq R_\X\abs{y_i - y_j} + \eps\abs{\log(\frac{\mu_i}{\mu_j})}$. Thus for any $x \in \X$ and $j \in \{1, \dots, N\}$,
\begin{align*}
    \abs{f_i^\eps(x) - f_j^\eps(x)} \leq 2 R_\X \diam(\Y) + \eps \abs{\log(\underline{\mu})}.
\end{align*}
Hence 
\begin{align*}
    \abs{[\frac{\partial}{\partial \eps}(\nabla \Kant^\eps)(\psi)]_i} \lesssim \frac{1}{\eps}.
\end{align*}
We now look for a more informative bound in the limit $\eps \to 0$. 
The quantity \eqref{eq:expr-Mpi-logpi} being an integral over $\X$, it will be in our interest to partition $\X$ into different subdomains where we can control the integrand. To this end, the Laguerre tessellation $\bigcup_i \Lag_i(\psi^\eps)$ already provides a first interesting partition. Recall the definition of the Laguerre cells with our new notation:
\begin{align*}
    \Lag_i(\psi^\eps) = \{x \in \X \vert \forall j, f_i^\eps(x) \geq f_j^\eps(x) \}.
\end{align*}
Figure \ref{fig:parition_X} gives an illustration of Laguerre cells, where the boundary of those cells are indicated by the plain black lines. In the control of \eqref{eq:expr-Mpi-logpi}, we will see that for $x$ in the \emph{interior} of $\Lag_i(\psi^\eps)$, $\pi^\eps_{x,j}$ is \emph{very small} for any $j \neq i$, and conversely for $x$ \emph{far} from $\Lag_i(\psi^\eps)$, $\pi^\eps_{x,i}$ is \emph{very small}. We will thus introduce two sets of points $\X_{i,\eta,+}^\eps, \X_{i,\eta,-}^\eps$ corresponding respectively to the points of $\Lag_i(\psi^\eps)$ and $\X \setminus \Lag_i(\psi^\eps)$ that are at a distance at least $\eta$ from the boundary of $\Lag_i(\psi^\eps)$. These sets are illustrated in Figure \ref{fig:parition_X} in green and blue respectively.
Now for $x$ close from the boundary of $\Lag_i(\psi^\eps)$ (i.e. $x \in \X \setminus (\X_{i,\eta,+}^\eps \cup \X_{i,\eta,-}^\eps)$, we will see that $\pi^\eps_{x, i}$ cannot be too small and there always exists a $j \neq i$ such that $\pi^\eps_{x, j}$ is not small neither. A finer treatment of those points close from the boundary of $\Lag_i(\psi^\eps)$ has to be carried out. For some $\gamma >0$, we will first define a set $A^\eps_{i, \eta,\gamma}$ of points that lie near the intersection between $\Lag_i(\psi^\eps)$ and another cell $\Lag_j(\psi^\eps)$, but that are at a distance at least $\sim \gamma$ from the other cells (i.e. $A^\eps_{i, \eta,\gamma}$ excludes the \emph{corners} of $\Lag_i(\psi^\eps)$) . This set is represented in yellow in Figure \ref{fig:parition_X}. On this set, only $\pi^\eps_{x, i}$ and $\pi^\eps_{x, j}$ are \emph{not small} and we show that their contributions to integral \eqref{eq:expr-Mpi-logpi} get compensated by symmetry w.r.t. the interface $\Lag_i(\psi^\eps) \cap \Lag_j(\psi^\eps)$.
Finally, we will denote the rest of $\X$ by $B_{i, \eta, \gamma}^\eps = \X \setminus (\X^\eps_{i,\eta,+} \cup \X^\eps_{i,\eta,-} \cup A_{i, \eta, \gamma}^\eps)$: this set corresponds to the areas in red in Figure \ref{fig:parition_X}. We will control \eqref{eq:expr-Mpi-logpi} on this set by leveraging two facts: its points are close from $\Lag_i(\psi^\eps)$ and its volume scales as $\sim \gamma^2$. 

\begin{figure}[h]
\centering
\includegraphics[scale=0.26]{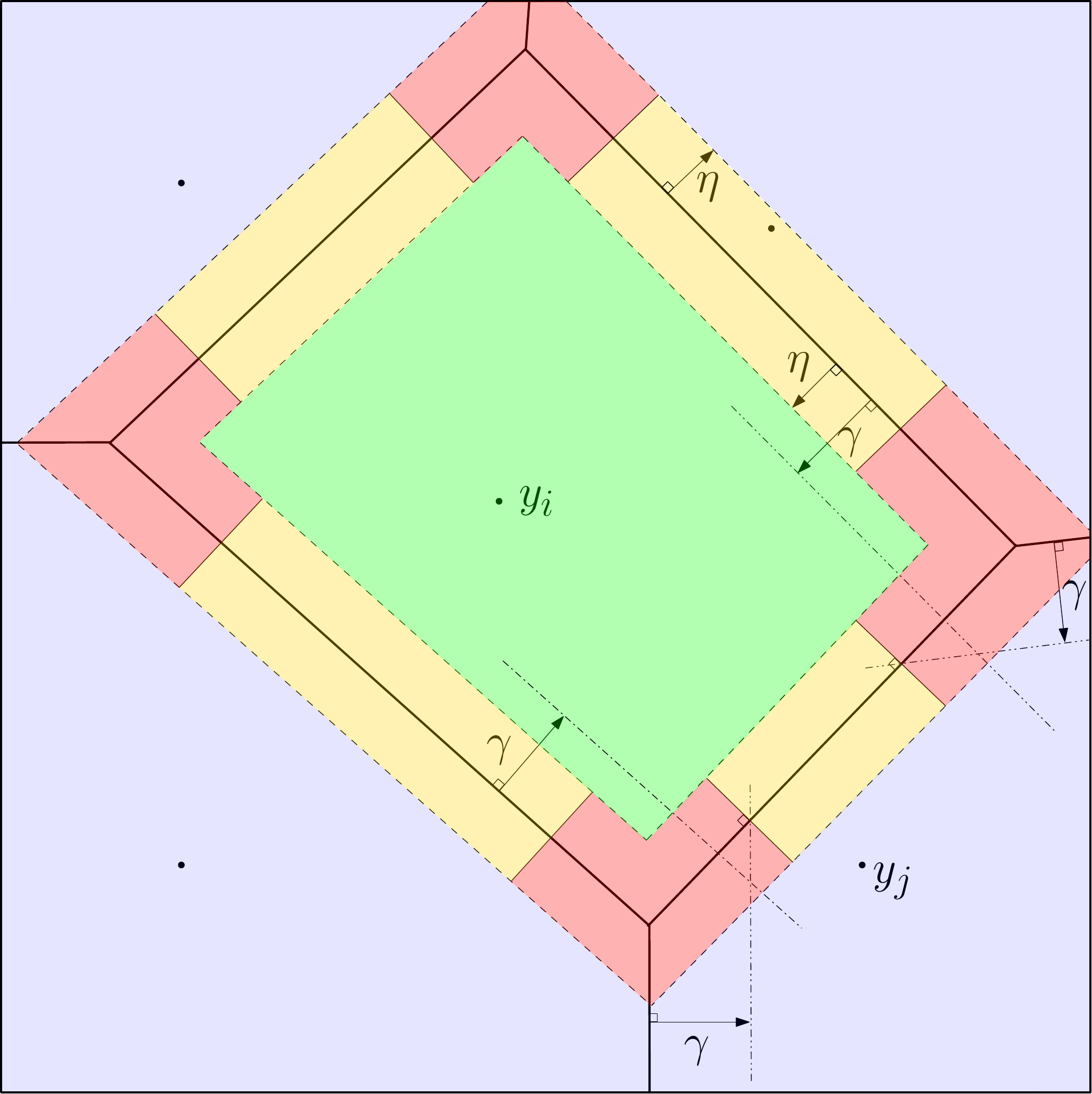}
\caption{Partition of $\X$: $\X^\eps_{i,\eta,+}$ is in green, $\X^\eps_{i,\eta,-}$ is in blue, $A_{i, \eta, \gamma}^\eps$ is in yellow and $B_{i, \eta, \gamma}^\eps$ is in red.}
\label{fig:parition_X}
\end{figure}
We make precise the definitions of the above mentioned sets in the proof of the following Proposition (Section \ref{sec:proof-prop-control-2nd-term-partition} of the Appendix), that allows to get the following bound:
\begin{proposition}
\label{prop:control-2nd-term-partition} Under assumptions of Theorem \ref{th:control-2nd-term}, for $0< \eps \leq 1, i \in \{1, \dots, N\}$ and for any $\eta, \gamma > 0$,
\begin{align*}
    [\frac{\partial}{\partial \eps}(\nabla \Kant^\eps)(\psi)]_i  &\lesssim \frac{1}{\eps^2} e^{-\eta /\eps} + \frac{\eta^{2+\alpha}}{\eps^2} + \frac{\gamma^2}{\eps^2} \left(\eta + e^{-\eta/\eps} \right) \\
    &+ \frac{1}{\eps^2}e^{-\tilde{\gamma}/\eps}\left(\eta + \eps\eta e^{\eta/\eps} - \eps^2(e^{\eta/\eps} - 1)\right),
\end{align*}
where $\tilde{\gamma} = \gamma \delta - \frac{\diam(\Y)^2}{\delta} \eta$ and $ \delta = \displaystyle \min_{i \neq j} \nr{y_i - y_j} > 0. $

\end{proposition}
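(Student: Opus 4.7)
The plan is to exploit the geometric structure of the Laguerre tessellation $\bigcup_k \Lag_k(\psi^\eps)$ to partition $\X$ into four pieces on which the integrand of \eqref{eq:expr-Mpi-logpi} admits separate, well-adapted estimates. Fixing $i \in \{1,\dots,N\}$, I define
\begin{align*}
\X^\eps_{i,\eta,+} &:= \{x \in \Lag_i(\psi^\eps) : d(x,\partial \Lag_i(\psi^\eps)) \geq \eta\}, \\
\X^\eps_{i,\eta,-} &:= \{x \in \X \setminus \Lag_i(\psi^\eps) : d(x,\Lag_i(\psi^\eps)) \geq \eta\},
\end{align*}
take $A^\eps_{i,\eta,\gamma}$ to be the union, over facets $\Lag_i(\psi^\eps) \cap \Lag_j(\psi^\eps)$, of the sets of points at normal distance $\leq \eta$ from such a facet and at distance $\geq \gamma$ from every $\Lag_k(\psi^\eps)$ with $k \notin \{i,j\}$, and let $B^\eps_{i,\eta,\gamma}$ denote the remaining corner region near the $(d-2)$-skeleton of the tessellation; these four sets partition $\X$.

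On $\X^\eps_{i,\eta,+}$ every $j \neq i$ satisfies $f^\eps_i(x) - f^\eps_j(x) \geq \eta \nr{y_i - y_j} \geq \eta \delta$ because $x$ lies at distance $\geq \eta$ from the boundary of the half-space $\{f^\eps_i \geq f^\eps_j\}$. Combining the pointwise bound $\pi^\eps_{x,j} \leq e^{(f^\eps_j - f^\eps_i)/\eps}$ with the uniform estimate $\abs{f^\eps_i - f^\eps_j} \leq 2 R_\X \diam(\Y) + \eps \abs{\log \underline{\mu}}$ recalled just above the proposition, the contribution of this set is $\lesssim \tfrac{1}{\eps^2} e^{-\eta/\eps}$; the symmetric treatment of $\X^\eps_{i,\eta,-}$ (where it is now $\pi^\eps_{x,i}$ that is exponentially small) yields the same bound and together they account for the $\tfrac{1}{\eps^2} e^{-\eta/\eps}$ term in the statement.

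The crux of the argument is the slab $A^\eps_{i,\eta,\gamma}$. For a single facet $\Lag_i(\psi^\eps) \cap \Lag_j(\psi^\eps)$, let $\sigma$ denote the Euclidean reflection across the supporting hyperplane $H_{ij} = \{f^\eps_i = f^\eps_j\}$, an isometry that preserves the slab. A short geometric computation, transferring the separation $d(x,\Lag_k(\psi^\eps)) \geq \gamma$ into a lower bound on $\max(f^\eps_i, f^\eps_j)(x) - f^\eps_k(x)$ while paying the $\eta$-displacement towards $\Lag_j$, yields $\pi^\eps_{x,k} \leq e^{-\tilde{\gamma}/\eps}$ for $k \notin \{i,j\}$, with $\tilde{\gamma} = \gamma \delta - \diam(\Y)^2 \eta / \delta$. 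For the dominant $k=j$ term, the identity $f^\eps_i(\sigma x) - f^\eps_j(\sigma x) = -(f^\eps_i(x) - f^\eps_j(x))$ together with the approximations $\pi^\eps_{\sigma x, i} \approx \pi^\eps_{x, j}$ and $\pi^\eps_{\sigma x, j} \approx \pi^\eps_{x, i}$, valid up to the $O(e^{-\tilde{\gamma}/\eps})$ perturbation from the other cells, makes $(f^\eps_i - f^\eps_j)\pi^\eps_{x,i}\pi^\eps_{x,j}/\eps^2$ antisymmetric under $\sigma$; after symmetrising against $\rho$, only the Hölder defect $\abs{\rho(x) - \rho(\sigma x)} \leq C_\rho (2\eta)^\alpha$ survives. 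Using $\abs{f^\eps_i - f^\eps_j} \leq \eta \nr{y_i - y_j}$ on the slab, $\pi^\eps_{x,i} \pi^\eps_{x,j} \leq 1/4$, and slab volume $\lesssim \eta$ (facet areas being bounded), this produces the $\eta^{2+\alpha}/\eps^2$ contribution; the subdominant $k \notin \{i,j\}$ terms inherit the factor $e^{-\tilde{\gamma}/\eps}$ and, after the one-dimensional integration of $\pi^\eps_{x,i}(t)$ across the normal direction to the facet, contribute the bracketed expression $\eta + \eps \eta e^{\eta/\eps} - \eps^2(e^{\eta/\eps} - 1)$ by explicit computation.

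The corner region $B^\eps_{i,\eta,\gamma}$ sits inside an $\eta$-neighbourhood of the $(d-2)$-skeleton and has volume $\lesssim \gamma^2$; splitting according to whether $x$ lies in $\Lag_i(\psi^\eps)$ or not and bounding the integrand crudely by $1/\eps^2$ or by $e^{-\eta/\eps}/\eps^2$ respectively produces the $\tfrac{\gamma^2}{\eps^2}(\eta + e^{-\eta/\eps})$ contribution. The main obstacle is the quantification of the near-symmetry on the slab: the approximate identities $\pi^\eps_{\sigma x, i} \approx \pi^\eps_{x,j}$ hold only up to the competing exponentials from the other cells, and it is precisely tracking these perturbations that both dictates the form of $\tilde{\gamma}$ and forces the $\eta^{2+\alpha}$ scaling via the (merely $\alpha$-)Hölder regularity of $\rho$; a naive pointwise bound using only $\abs{f^\eps_i - f^\eps_j} \leq \eta$ would cost a full factor of $\eta^{-\alpha}$ here and would be insufficient to eventually recover the $\eps^{\alpha'}$ rate claimed in Theorem~\ref{th:control-2nd-term}.
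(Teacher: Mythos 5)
Your overall partition of $\X$ into $\X^\eps_{i,\eta,+}$, $\X^\eps_{i,\eta,-}$, the slab $A^\eps_{i,\eta,\gamma}$ and the corner region $B^\eps_{i,\eta,\gamma}$ matches the paper, and your handling of the first three pieces is essentially the paper's argument: the exponential smallness of $\pi^\eps_{x,j}$ (resp.\ $\pi^\eps_{x,i}$) on $\X^\eps_{i,\eta,\pm}$, and on the slab the reflection across $H_{ij}$ (which is exactly the paper's pairing of $x^0\pm t\,d_{ij}$), the near-antisymmetry of $(f_i^\eps-f_j^\eps)\,\pi^\eps_{x,i}\pi^\eps_{x,j}$ broken only by $O(e^{-\tilde\gamma/\eps})$ perturbations from the other cells, and the residual Hölder defect of $\rho$ producing the $\eta^{2+\alpha}/\eps^2$ term. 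That part is sound.

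Your treatment of the corner region $B^\eps_{i,\eta,\gamma}$, however, does not yield the claimed $\frac{\gamma^2}{\eps^2}(\eta+e^{-\eta/\eps})$ term, and the gap is consequential. You split by whether $x\in\Lag_i(\psi^\eps)$ and bound the integrand by $1/\eps^2$ on the inside and $e^{-\eta/\eps}/\eps^2$ on the outside. Two problems. First, the inside bound $1/\eps^2$ combined with $\mathrm{vol}(B)\lesssim\gamma^2$ gives $\gamma^2/\eps^2$ with no $\eta$ factor; after the eventual choice $\gamma\sim\eta=\eps^\beta$ with $\beta\in(\tfrac{2}{2+\alpha},1)$ this is $\eps^{2\beta-2}\to\infty$, so the missing $\eta$ is not cosmetic but destroys the final $\eps^{\alpha'}$ rate. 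Second, the outside bound $e^{-\eta/\eps}/\eps^2$ is simply false on $B\setminus\Lag_i$: those points are precisely within distance $\eta$ of $\partial\Lag_i(\psi^\eps)$ but excluded from $\X^\eps_{i,\eta,-}$, so $\pi^\eps_{x,i}$ can be of order one. What actually works is a per-$j$ dichotomy inside the single observation that $B$ lies in the $\eta$-neighbourhood of $\partial\Lag_i(\psi^\eps)$: for each $j\neq i$, either $|f_i^\eps(x)-f_j^\eps(x)|\lesssim\eta$, in which case the $j$-th summand already carries the factor $\eta$ through $(f_i^\eps-f_j^\eps)/\eps^2$, or $|f_i^\eps-f_j^\eps|\gtrsim\eta$, in which case comparing to $k\in\arg\max_\ell f_\ell^\eps(x)$ (which satisfies $f_k^\eps-f_i^\eps\leq\eta\,\mathrm{diam}(\Y)$) forces $f_j^\eps(x)-f_k^\eps(x)\leq-\eta$ and hence $\pi^\eps_{x,j}\leq e^{-\eta/\eps}$. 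This gives the pointwise bound $\frac{1}{\eps^2}(\eta+e^{-\eta/\eps})$ on all of $B$, which times $\mathrm{vol}(B)\lesssim\gamma^2$ yields the stated term.
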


The proof of the "$\eps \leq 1$" side of Theorem \ref{th:control-2nd-term} then follows from Proposition \ref{prop:control-2nd-term-partition} with an arbitrage on the quantities $\eta, \gamma$, see Section \ref{sec:proof-th-control-2nd-term} of the Appendix.

\section{NUMERICAL ILLUSTRATIONS}
\label{sec:numerical_illustration}

The code that generated the illustrations of this section is available at \href{https://github.com/alex-delalande/potentials-entropic-sd-ot}{https://github.com/alex-delalande/potentials-entropic-sd-ot}.

\subsection{Difference of Costs}
\label{sec:illustration-diff-costs}

Figure \ref{fig:suboptimalities} gives an illustration of Theorem \ref{th:suboptimality-cv} for a target $\mu = \frac{1}{2}(\delta_{-1} + \delta_1)$ and for four different source measures: 1. Lebesgue: $\rho(x) \propto \mathbb{1}_{[-1, 1]}(x)$; 2. Rescaled Gaussian: $\rho(x) \propto e^{-x^2/2\sigma^2} \mathbb{1}_{[-1, 1]}(x)$; 3. Rescaled Laplace: $\rho(x) \propto e^{-\abs{x}}\mathbb{1}_{[-1, 1]}(x)$; 4. $\frac{1}{2}$-Hölder density: $\rho(x) \propto (1 - \abs{x}^{1/2}) \mathbb{1}_{[-1, 1]}(x)$. For all these sources, we plot the absolute value of the difference of costs minus its asymptote as functions of $\eps$. The difference of costs is computed using the following formula given in Section 3 of \cite{altschuler2021asymptotics}:
$$ \Wass_{2, \eps}^2(\rho, \mu) - \Wass_{2}^2(\rho, \mu) = 8 \int_0^1 \frac{x}{1 + e^{4x/\eps}} \rho(x) \dd x.$$
Note that in these examples, $\eps \mapsto \psi^\eps$ is constant because of the symmetry of the problems. One can notice that for the cases of a Lebesgue or rescaled Gaussian source, the convergence of the difference of costs to its asymptote seems faster than the guaranteed $\eps^{3}$ of Theorem \ref{th:suboptimality-cv}. However, one can observe that Theorem \ref{th:suboptimality-cv} seems to give tight rates of convergence in the cases of a rescaled Laplace source or a $\frac{1}{2}$-Hölder source.

\begin{figure*}
\centering
\includegraphics[scale=0.23]{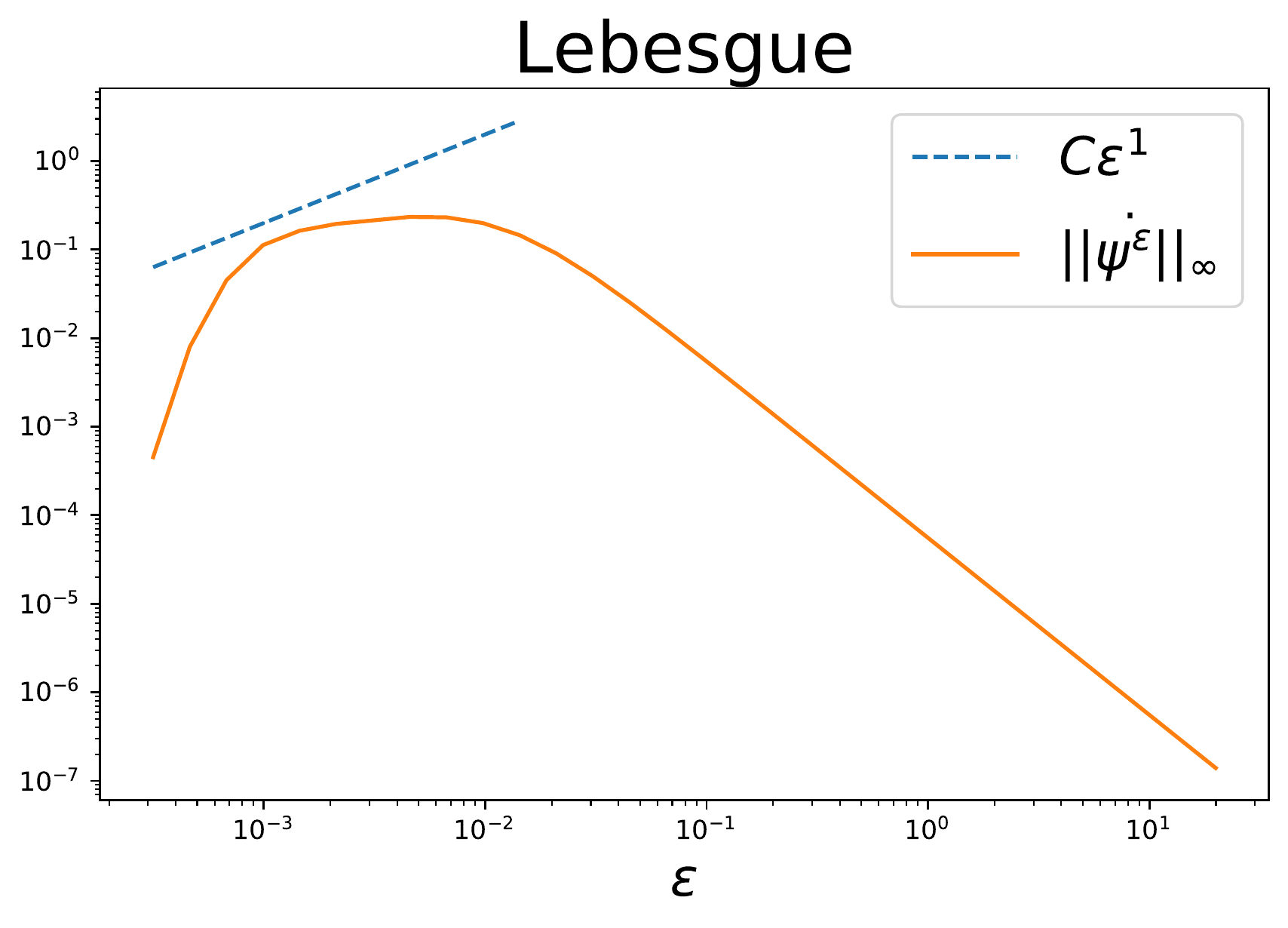}
\includegraphics[scale=0.23]{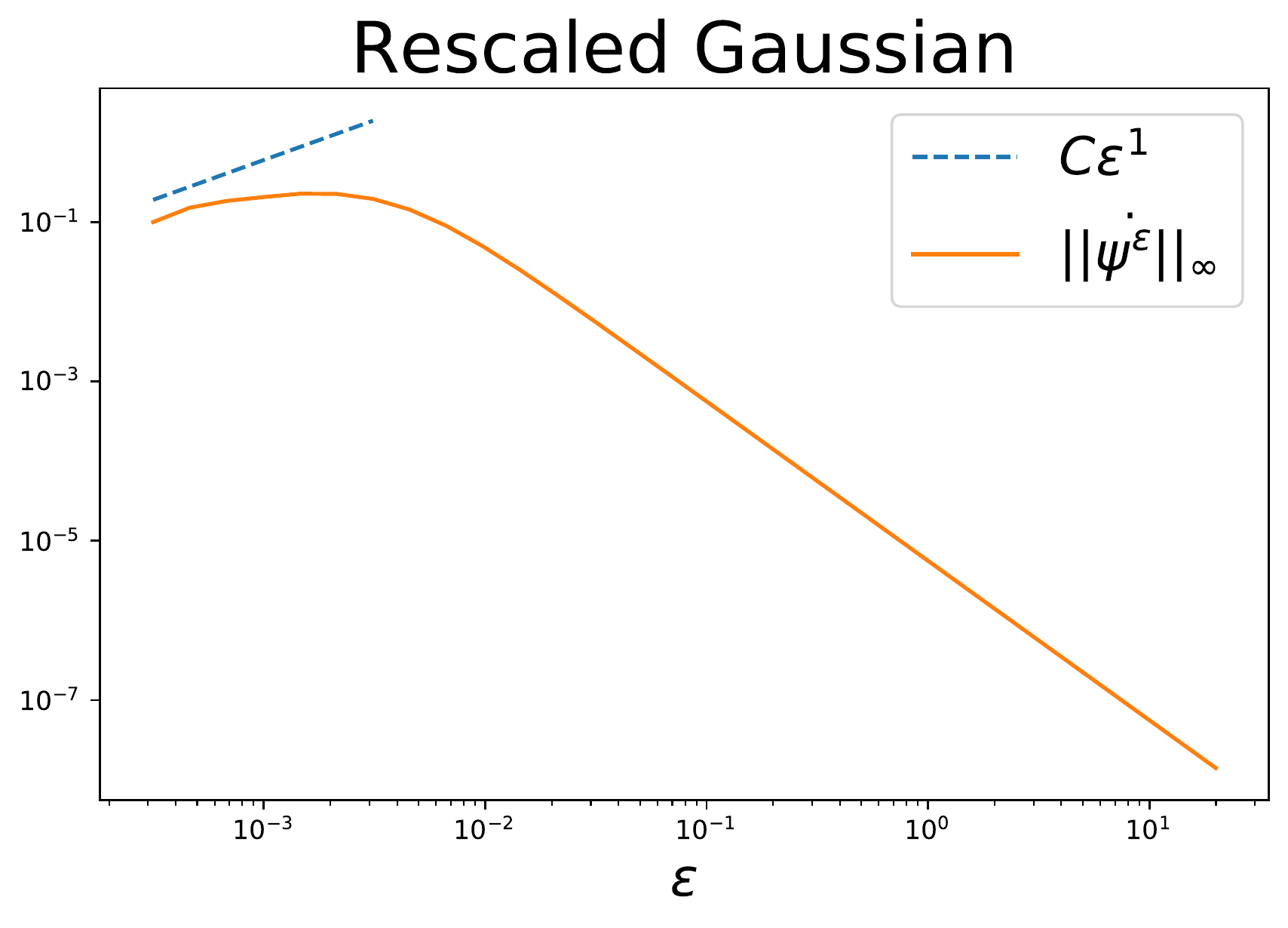}
\includegraphics[scale=0.23]{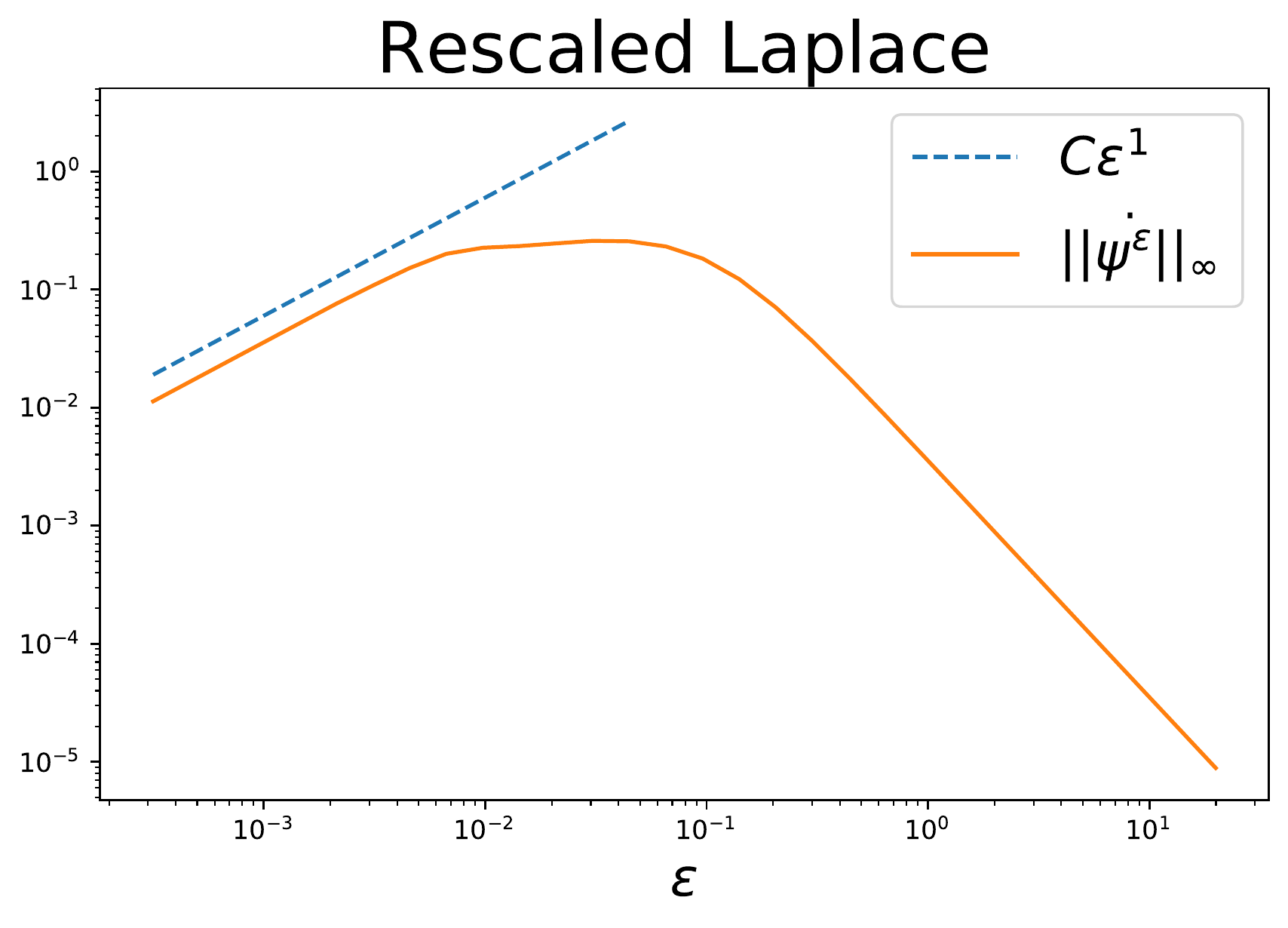}
\includegraphics[scale=0.23]{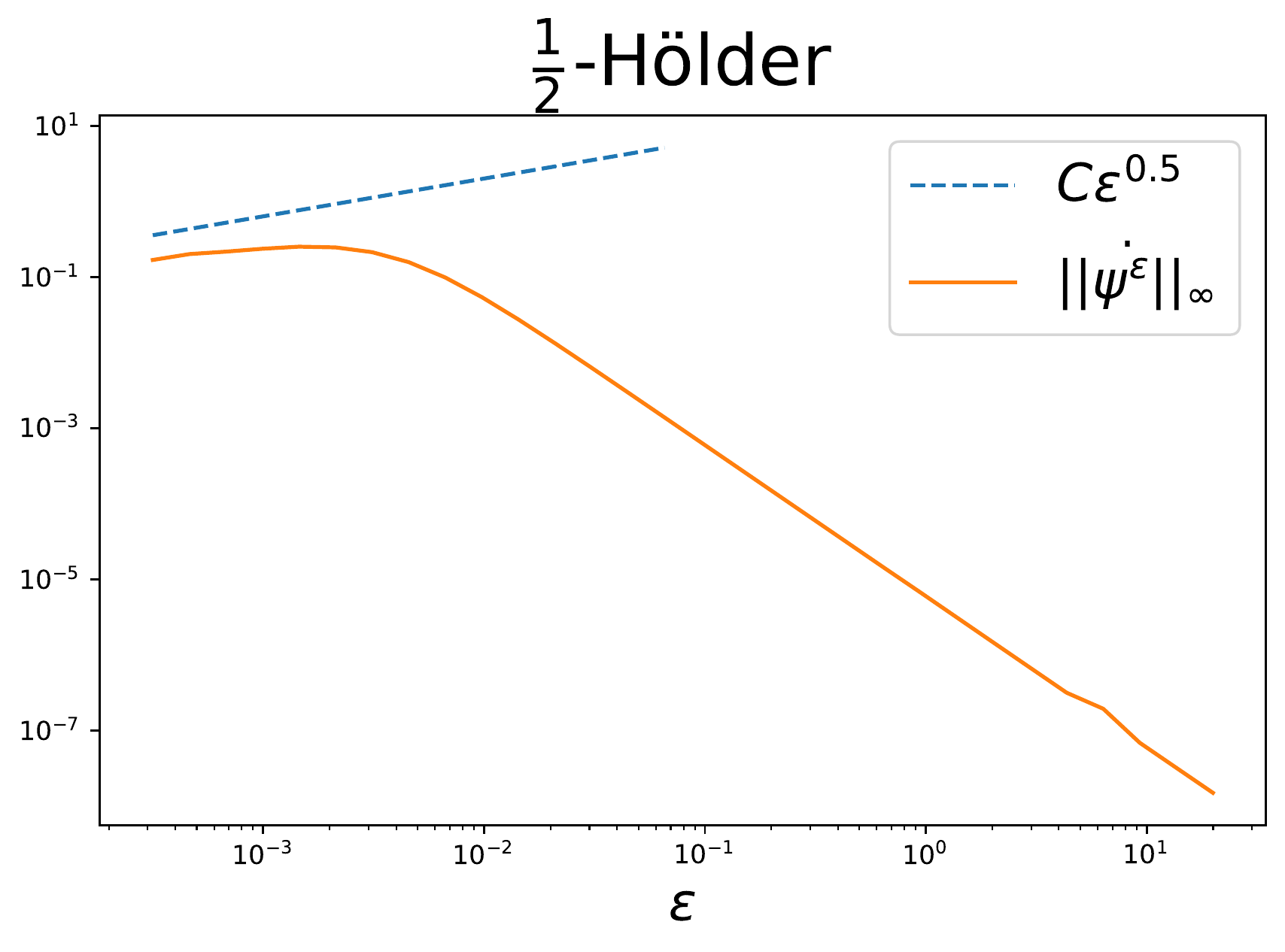}\\
\includegraphics[scale=0.23]{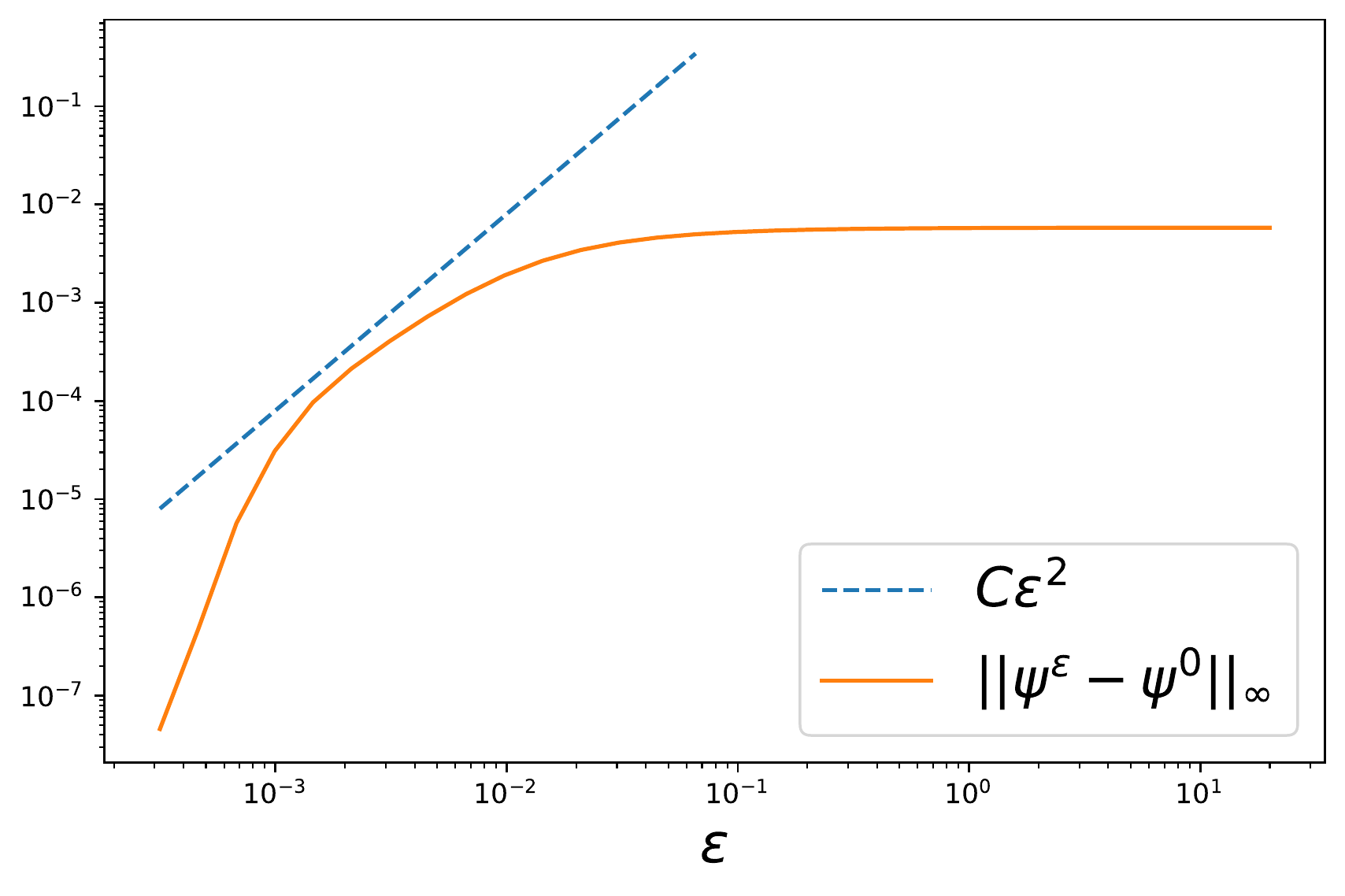}
\includegraphics[scale=0.23]{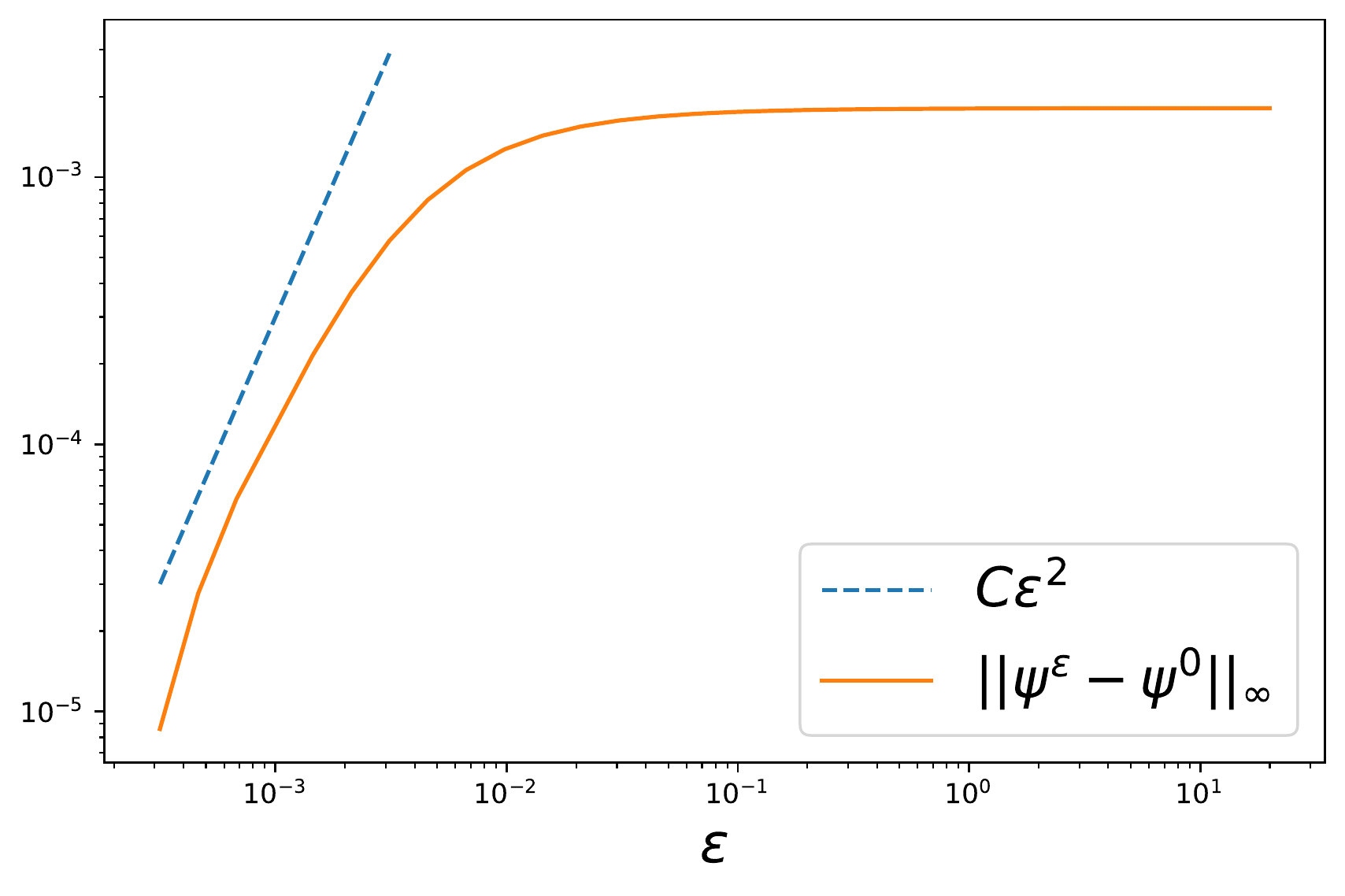}
\includegraphics[scale=0.23]{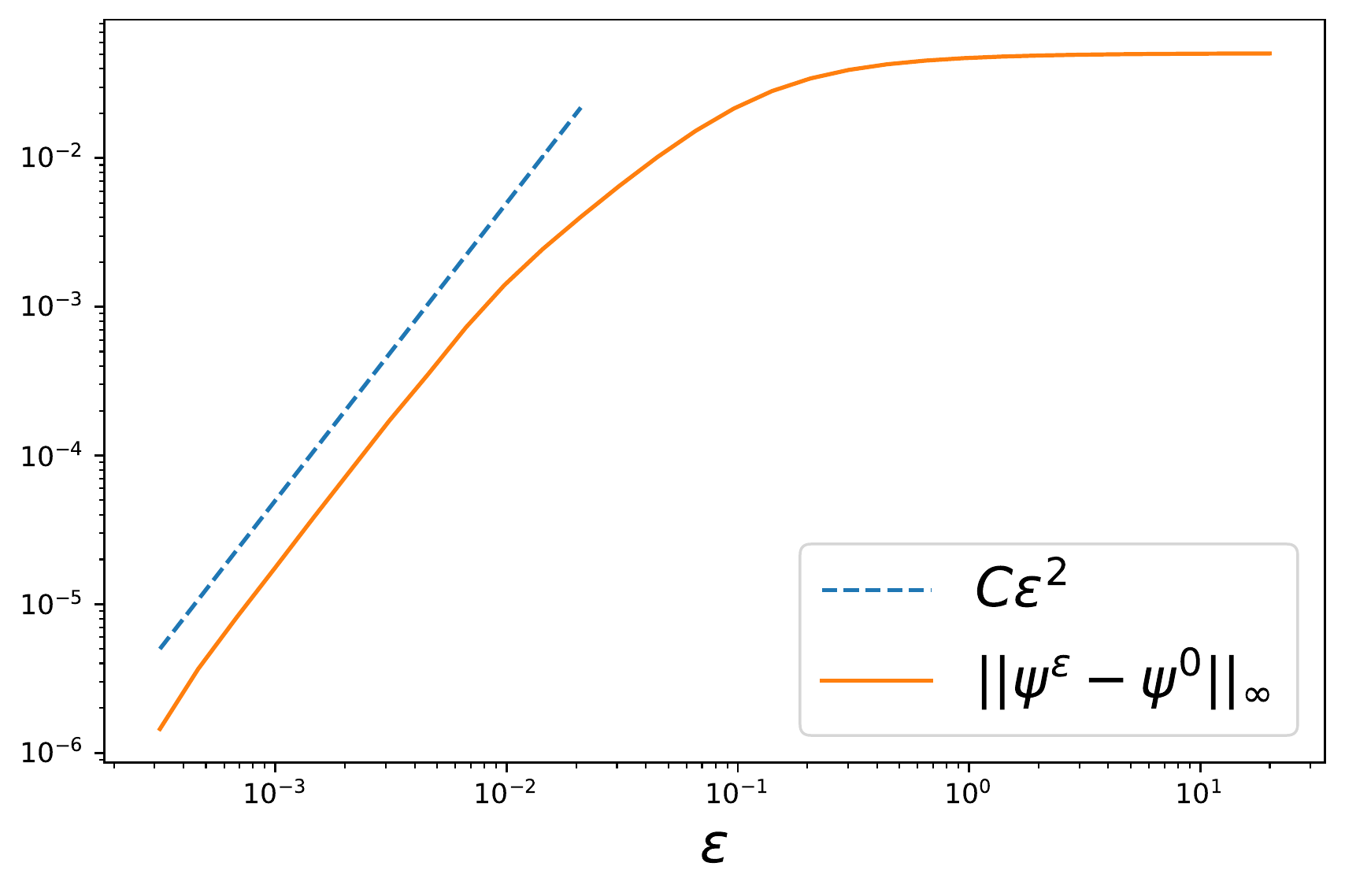}
\includegraphics[scale=0.23]{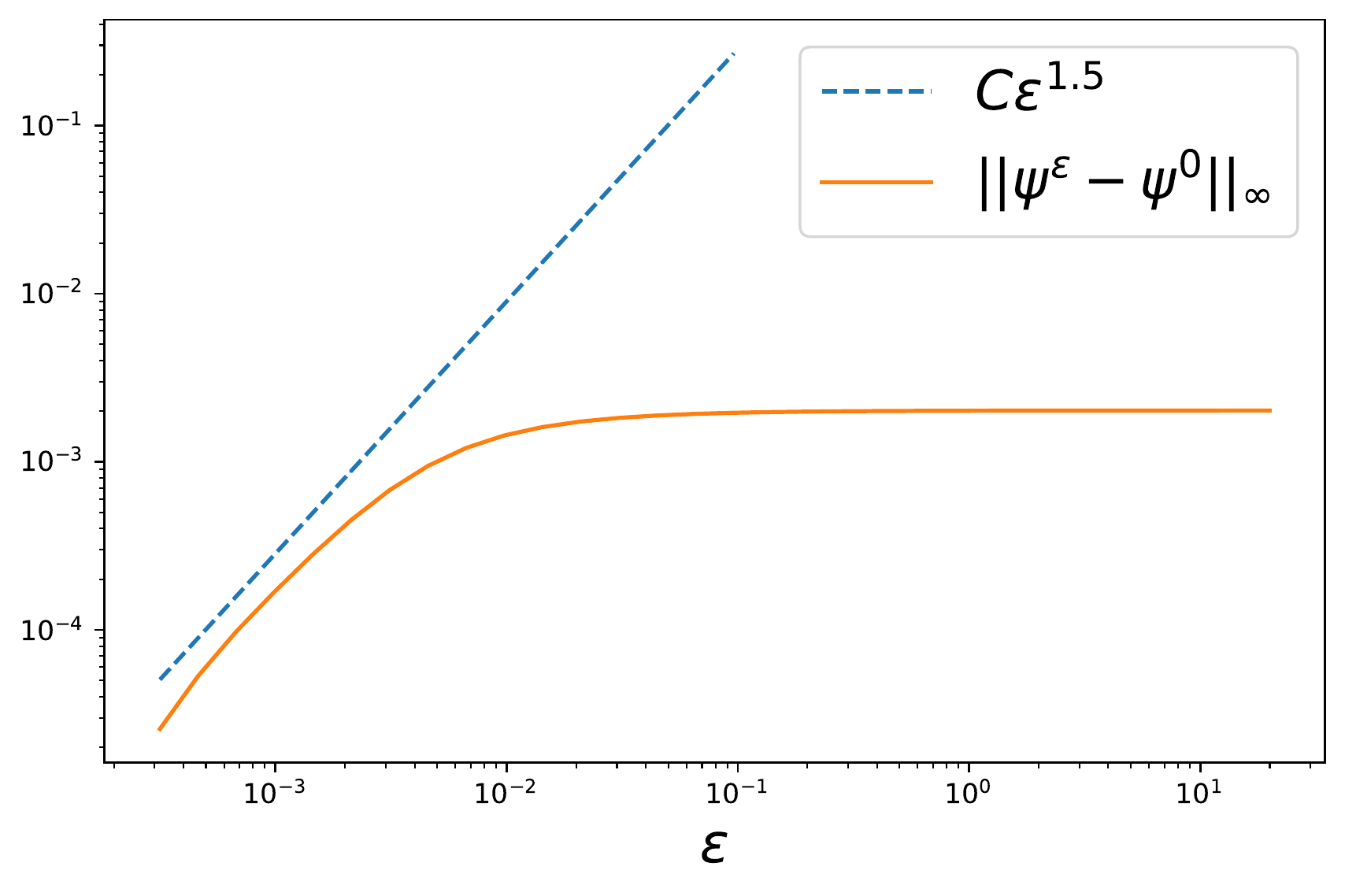}
\caption{(Top row) Behavior of $\eps \mapsto \nr{\dot{\psi^\eps}}_\infty$ for the 4 different sources and $\mu = \frac{1}{5} \sum_{i=1}^5 \delta_{y_i}$ with $(y_i)_{i=1, \dots, 5}$ randomly chosen. (Bottom row) Convergence of $\psi^\eps$ to $\psi^0$ for the four same examples.}
\label{fig:cv-psi-eps-psi-0}
\end{figure*}

\begin{figure}
\centering
\includegraphics[scale=0.23]{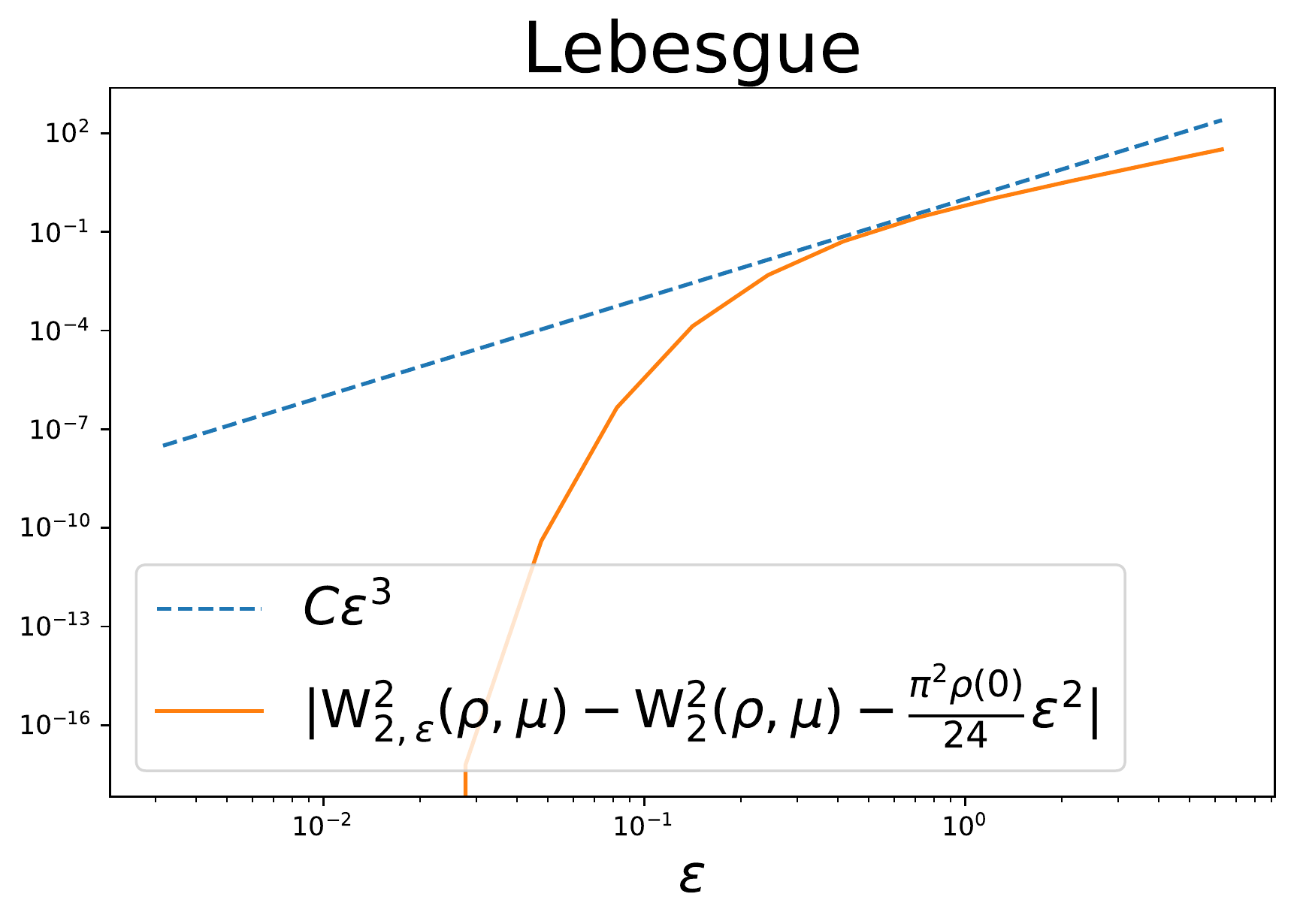}
\includegraphics[scale=0.23]{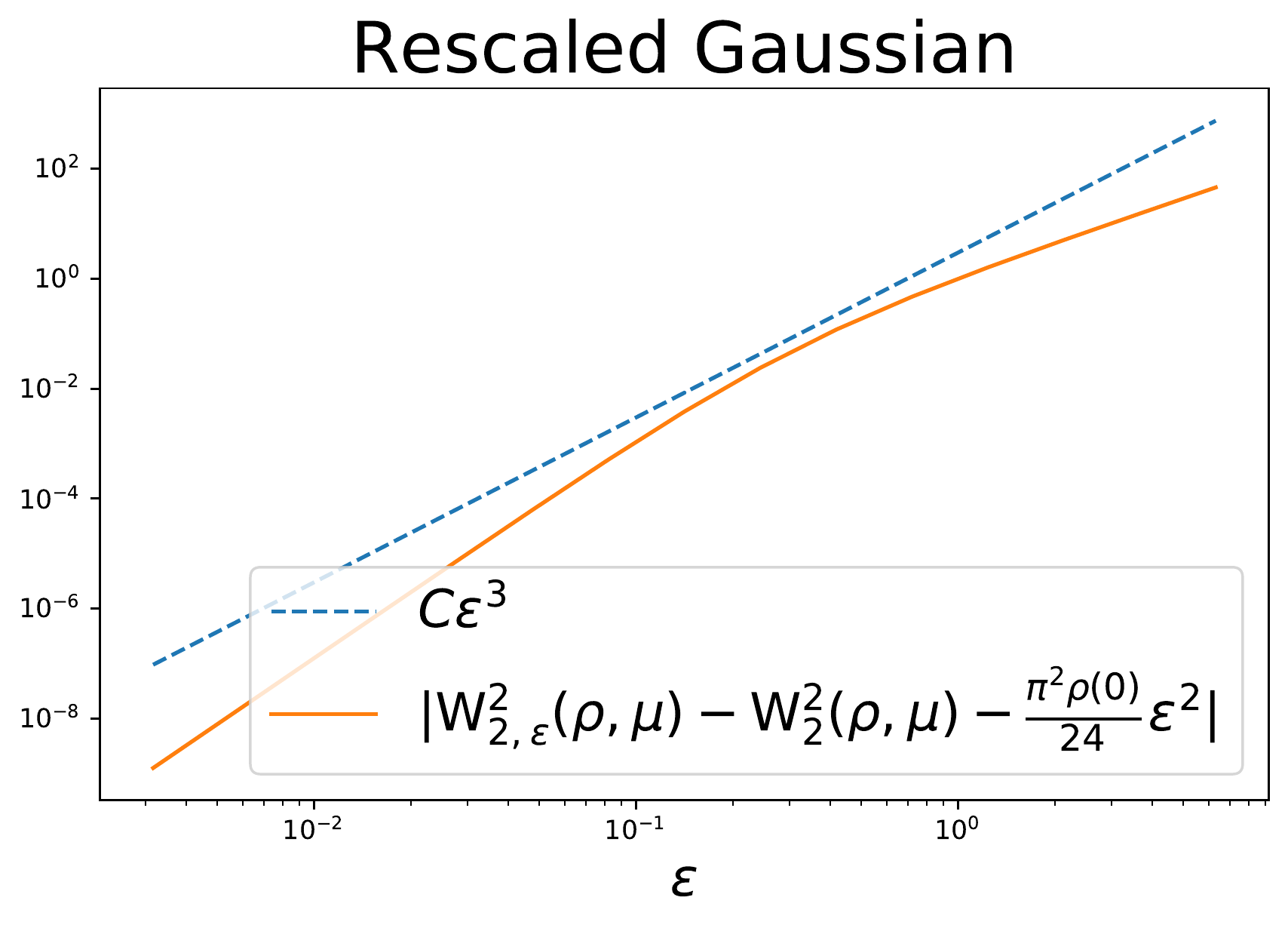}
\includegraphics[scale=0.23]{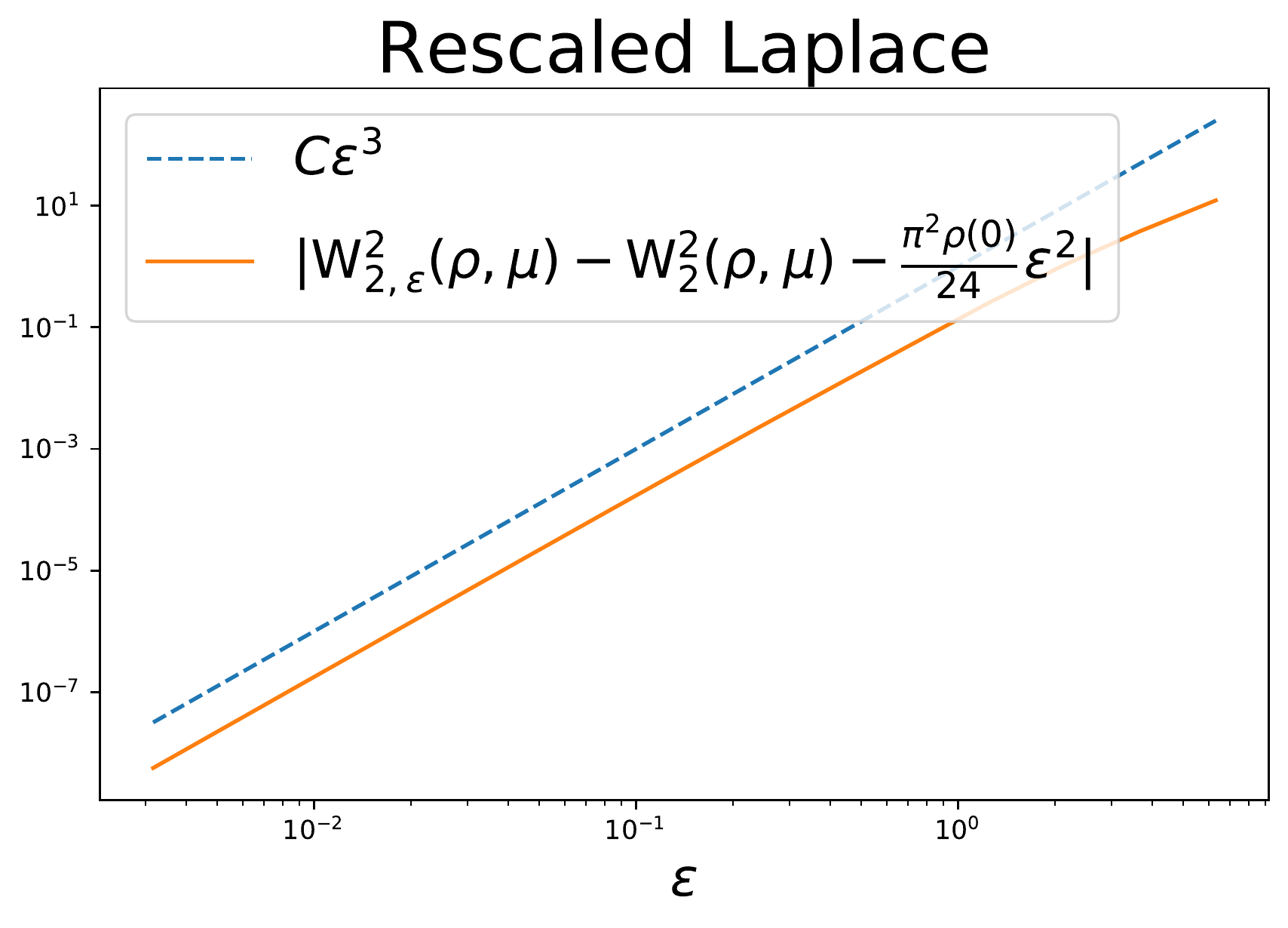}
\includegraphics[scale=0.23]{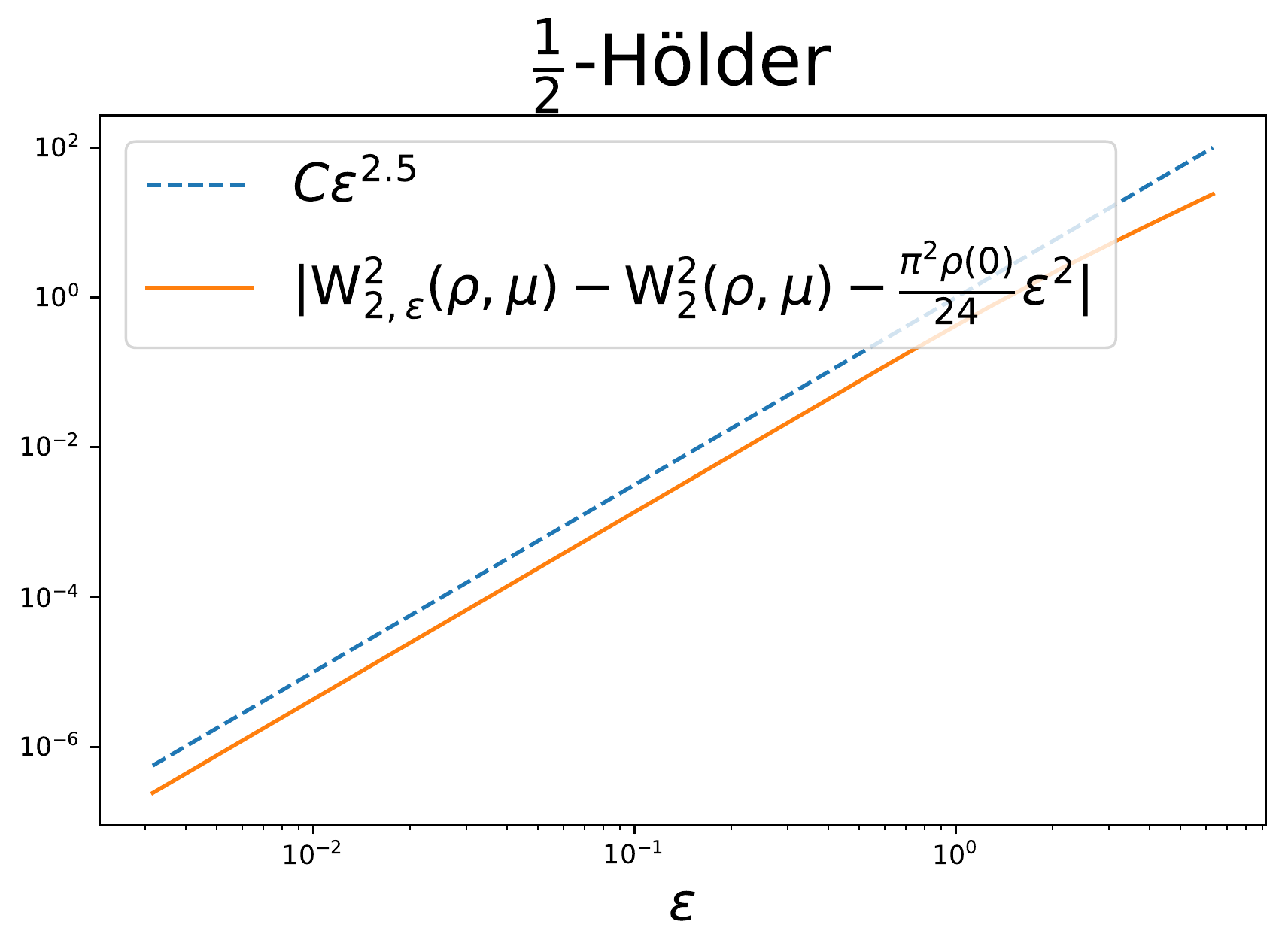}
\caption{Convergence of the difference of costs to its asymptote for the four different sources. The target is $\mu = \frac{1}{2}(\delta_{-1} + \delta_1)$.}
\label{fig:suboptimalities}
\end{figure}

\subsection{Behavior of $\eps \mapsto \psi^\eps$}

Figure \ref{fig:cv-psi-eps-psi-0} gives an illustration of Theorem \ref{th:control-dot-psi} and its Corollary \ref{coro:stab-potentials-eps}. We consider the same one dimensional sources as in the preceding section (up to restriction of their support to limit numerical errors).  We consider a target $\mu$ with 5 support points randomly chosen in the support of the source. We compute $\psi^\eps$ using the L-BFGS-B quasi-Newton method from SciPy \cite{SciPy}, where all integrals (appearing for instance in gradient computations) are also approximated using this package. Figure \ref{fig:cv-psi-eps-psi-0} represents in its first row the behavior of $\nr{\dot{\psi^\eps}}_\infty$ with respect to $\eps$ and compares the empirical results to the theoretical rates of Theorem \ref{th:control-dot-psi}. The derivative $\dot{\psi^\eps}$ is computed as the only solution in $(\mathbb{1}_N)^\perp$ to the linear system induced by the ODE \eqref{eq:ode}. Note that in Figure \ref{fig:cv-psi-eps-psi-0}, the long-time bound $\nr{\dot{\psi^\eps}}_\infty \lesssim 1$ for $\eps \geq 1$ seems to be loose, but this is specific to the setting where the target is uniform and this bound seems tight in general\footnote{Further experiments with a non-uniform target led to match empirically the long-time bounds of Theorem \ref{th:control-dot-psi}, see the GitHub repository.}.
The short-time case $\eps < 1$ however yields in the case of the Laplace and $\frac{1}{2}$-Hölder sources practical rates that seem to match the theoretical rate $\nr{\dot{\psi^\eps}}_\infty \lesssim \eps^\alpha$. The bottom row of Figure \ref{fig:cv-psi-eps-psi-0} gives an illustration of the convergence of $\psi^\eps$ to $\psi^0$. One can observe that the Lebesgue and Gaussian sources seem to enjoy faster rates of convergence than our theoretical rates. However, the Laplace and $\frac{1}{2}$-Hölder sources seem to yield potentials $\psi^\eps$ that converge to $\psi^0$ as fast as predicted in Corollary \ref{coro:stab-potentials-eps}.


\section{CONCLUSION}
We have given a non-asymptotic analysis of the solutions of entropic semi-discrete optimal transport in terms of the regularization parameter. We have shown that the dual solutions, sometimes called the Sinkhorn potentials, have a better than Lipschitz dependence on this regularization parameter. This may enable to derive faster algorithms for the numerical resolution of semi-discrete optimal transport based on $\eps$-scaling techniques and we leave this derivation for future work. Our analysis also entails tight and non-asymptotic bounds on the difference of costs, improving on the recent asymptotic expansion of \cite{altschuler2021asymptotics} and showing that this expansion does not admit in general a third order term.

\acknowledgments{The author warmly thanks Quentin Mérigot for his insight and his numerous comments. This work was done thanks to the support of the Agence Nationale de la Recherche through the project MAGA (ANR-16-CE40-0014).}

\bibliographystyle{apacite} 
\bibliography{ref}

\onecolumn
\appendix
\section{SUPPLEMENTARY MATERIAL}
\label{sec:appendix}

\subsection{Proof of Theorem \ref{th:control-dot-psi}}
\label{sec:proof-theorem-dot-psi}
\begin{proof}
For any $\eps > 0$, we can apply Proposition \ref{prop:pties-potentials} to $\psi^\eps$ and observe the relation 
\begin{align*}
    \nabla^2 \Kant^\eps (\psi^\eps) \dot{\psi^\eps} = - \frac{\partial}{\partial \eps}(\nabla \Kant^\eps)(\psi^\eps).
\end{align*}
Taking the scalar product of the last expression with $\dot{\psi^\eps}$ this gives:
\begin{equation*}
    \sca{\dot{\psi^\eps}}{\nabla^2 \Kant^\eps (\psi^\eps) \dot{\psi^\eps}} = - \sca{\dot{\psi^\eps}}{\frac{\partial}{\partial \eps}(\nabla \Kant^\eps)(\psi^\eps)}.
\end{equation*}
Applying Theorem \ref{th:strong-convexity-K} with $v = \dot{\psi^\eps}$ ensures that
\begin{align}
\label{eq:control-var-mu-dot-psi}
    \Var_{\mu}(\dot{\psi^\eps}) \leq -\left( e^{R_\Y \diam(\X)} \frac{M_\rho}{m_\rho} + \eps \right) \sca{\dot{\psi^\eps}}{\frac{\partial}{\partial \eps}(\nabla \Kant^\eps)(\psi^\eps)}.
\end{align}
Denote $\underline{\mu} > 0$ a positive real such that for all $i \in \{1, \dots, N\}, \mu(y_i) \geq \underline{\mu}$. Notice then that the facts that $\mu \geq \underline{\mu} \mathbb{1}_N$ and that $\sca{\dot{\psi^\eps}}{\mathbb{1}_N} = 0$ (because $\sca{\psi^\eps}{\mathbb{1}_N} = 0$) entail
\begin{align*}
    \Var_{\mu}(\dot{\psi^\eps}) &= \min_{m \in \Rsp} \nr{\dot{\psi^\eps} - m \mathbb{1}_N}^2_{\L^2(\mu)} \\
    &\geq \underline{\mu} \min_{m \in \Rsp} \nr{\dot{\psi^\eps} - m \mathbb{1}_N}^2_{2} = \underline{\mu} \nr{\dot{\psi^\eps}}^2_2.
\end{align*}
Using this inequality together with the Cauchy-Schwartz inequality in equation \eqref{eq:control-var-mu-dot-psi} we thus have
\begin{align*}
    \nr{\dot{\psi^\eps}}_2 \underline{\mu} &\leq \left( e^{R_\Y \diam(\X)} \frac{M_\rho}{m_\rho} + \eps \right) \nr{ \frac{\partial}{\partial \eps}(\nabla \Kant^\eps)(\psi^\eps) }_2 \notag \\
    &\leq N \left( e^{R_\Y \diam(\X)} \frac{M_\rho}{m_\rho} + \eps \right) \nr{ \frac{\partial}{\partial \eps}(\nabla \Kant^\eps)(\psi^\eps) }_\infty.
\end{align*}
Applying Theorem \ref{th:control-2nd-term} to the last inequality yields the wanted result.
\end{proof}

\subsection{Proof of Theorem \ref{th:strong-convexity-K}}
\label{sec:proof-th-strong-convexity}
\begin{proof}
Let $v \in \Rsp^N$. Notice that
\begin{align*}
    \sca{v}{\frac{1}{\eps} \Esp_{x \sim \tilde{\rho}^\eps} \left( \diag(\pi^\eps_x(\psi^\eps)) - \pi^\eps_x(\psi^\eps) \pi^\eps_x(\psi^\eps)^\top \right) v} = \int_\X \frac{1}{\eps} \Var_{\pi^{\eps}_{x}(\psi^\eps)}(v) \dd \tilde{\rho}^\eps(x),\\
    \sca{v}{\Esp_{x \sim \tilde{\rho}^\eps}\pi^\eps_x(\psi^\eps) \pi^\eps_x(\psi^\eps)^\top v} - \sca{v}{\Esp_{x \sim \tilde{\rho}^\eps}\pi^\eps_x(\psi^\eps) \Esp_{x \sim \tilde{\rho}^\eps}\pi^\eps_x(\psi^\eps)^\top v} = \Var_{x \sim \tilde{\rho}^\eps}( \Esp_{\pi^{\eps}_{x}(\psi^\eps)}(v) ).
\end{align*}
Thus Proposition \ref{prop:concavity-I} ensures that
\begin{equation}
    \label{eq:brascamp-lieb}
    \Var_{x \sim \tilde{\rho}^\eps}( \Esp_{\pi^{\eps}_{x}(\psi^\eps)}(v) ) \leq  \int_\X \frac{1}{\eps} \Var_{\pi^{\eps}_{x}(\psi^\eps)}(v) \dd \tilde{\rho}^\eps(x),
\end{equation}
where we recall $$ \tilde{\rho}^\eps = \frac{e^{-(\psi^\eps)^{c, \eps}}}{\int_\X e^{-(\psi^\eps)^{c, \eps}} } = \frac{e^{-(\psi^\eps)^{c, \eps}}}{Z}. $$ From the definition of the $(c,\eps)$-transform, one can see that $(\psi^\eps)^{c,\eps}$ is $R_\Y$-Lipschitz (see Proposition 17, Chapter 3 in \cite{genevay:tel-02319318}). This ensures that $(\psi^\eps)^{c,\eps}$ is bounded on the compact set $\X$: on this set, there exists constants $m, M \in \Rsp$ such that
$$ m \leq (\psi^\eps)^{c,\eps} \leq M \quad \text{and} \quad M - m \leq R_\Y \diam(\X).$$
This gives the control
$$ \frac{e^{-M}}{Z} \leq \tilde{\rho}^\eps \leq \frac{e^{-m}}{Z}. $$
Recalling that $m_\rho \leq \rho \leq M_\rho$ we thus have:
$$ \frac{e^{-M}}{Z M_\rho} \rho \leq \tilde{\rho}^\eps \leq \frac{e^{-m}}{Z m_\rho} \rho. $$
This control, \eqref{eq:formulas-hessian-kant} and \eqref{eq:brascamp-lieb} thus give
\begin{align}
\label{eq:control-2nd-derivative}
    \Var_{x \sim \rho}( \Esp_{\pi^{\eps}_{x}(\psi^\eps)}(v) ) &\leq e^{R_\Y \diam(\X)} \frac{M_\rho}{m_\rho}  \int_\X \frac{1}{\eps} \Var_{\pi^{\eps}_{x}(\psi^\eps)}(v) \dd \rho(x) \notag \\
    &= e^{R_\Y \diam(\X)} \frac{M_\rho}{m_\rho} \sca{v}{\nabla^2 \Kant^\eps(\psi^\eps) v}.
\end{align}
Recall that from the first order condition \eqref{eq:first-order-cond} and expression \eqref{eq:formulas-grad-kant}, we get
$$ \mu = \Esp_{x \sim \rho} \pi^\eps_x(\psi^\eps). $$
Hence using the associativity of variances we have:
$$ \Var_{\mu}(v) = \Var_{x \sim \rho}( \Esp_{\pi^{\eps}_{x}(\psi^\eps)}(v) ) + \int_\X \Var_{\pi^{\eps}_{x}(\psi^\eps)}(v) \dd \rho(x), $$
so that using again \eqref{eq:formulas-hessian-kant},
$$ \Var_{x \sim \rho}( \Esp_{\pi^{\eps}_{x}(\psi^\eps)}(v) )  = \Var_{\mu}(v) - \eps \sca{v}{\nabla^2 \Kant^\eps(\psi^\eps) v}. $$
Injecting this last equality into \eqref{eq:control-2nd-derivative} yields the desired result.
\end{proof}

\subsection{Proof of Proposition \ref{prop:control-2nd-term-partition}}
\label{sec:proof-prop-control-2nd-term-partition}

\begin{proof}
We introduce for any $i \in \{1, \dots, N\}$ and parameter $\eta>0$ the sets:
\begin{align*}
    \X^\eps_{i,\eta,+} &= \{x \in \Lag_i(\psi^\eps) \vert \forall j \neq i, \frac{f_i^\eps(x) - f_j^\eps(x)}{\nr{y_i - y_j}} \geq \eta \},\\
    \X^\eps_{i,\eta,-} &= \{x \in \X \vert \forall j \in \arg \max_{\ell} f_\ell^\eps(x), \frac{f_j^\eps(x) - f_i^\eps(x)}{\nr{y_j - y_i}} \geq \eta \},
\end{align*}
that correspond respectively to the points of $\Lag_i(\psi^\eps), \X\setminus\Lag_i(\psi^\eps)$ that are at a distance at least $\eta$ from the boundary of $\Lag_i(\psi^\eps)$ and that are illustrated in green and in blue in Figure \ref{fig:parition_X}. 
We then define for any $j \neq i$ the common boundary between $\Lag_i(\psi^\eps)$ and $\Lag_j(\psi^\eps)$:
$$ H_{ij} = \Lag_i(\psi^\eps) \cap \Lag_j(\psi^\eps).$$ Next, for a parameter $\gamma > 0$, define the set of points of $H_{ij}$ that are at a distance at least $\gamma$ from the other Laguerre cells:
\begin{align*}
    H_{ij}^{-\gamma} = \{x^0 &\in H_{ij} \vert \forall k\neq i,j, f_i^\eps(x^0) = f_j^\eps(x^0) \geq f_k^\eps(x^0) + \gamma \max(\nr{y_i - y_k}, \nr{y_j - y_k})\}.
\end{align*}
Then define
\begin{align*}
    A_{i, \eta, \gamma}^\eps = \bigcup_{j \neq i} \{ &x^0 + t d_{ij}, x^0 \in H_{ij}^{-\gamma}, t \in [-\eta\nr{y_i - y_j}, +\eta\nr{y_i - y_j}] \}
\end{align*}
where $d_{ij} = \frac{y_i - y_j}{\nr{y_i - y_j}^2}$. This set corresponds to the areas in yellow in Figure \ref{fig:parition_X}.
Define finally $B_{i, \eta, \gamma}^\eps = \X \setminus (\X^\eps_{i,\eta,+} \cup \X^\eps_{i,\eta,-} \cup A_{i, \eta, \gamma}^\eps)$: this set corresponds to the areas in red in Figure \ref{fig:parition_X}.

\textbf{Control on $\X^\eps_{i,\eta,+}$.}
For $x \in \X^\eps_{i,\eta,+}$, we have for any $j \neq i$,
$$ \pi_{x,j}^\eps \leq \exp\left( \frac{f_j^\eps(x) - f_i^\eps(x)}{\eps} \right) \leq e^{-\eta \nr{y_i - y_j}/\eps} \leq e^{-\eta \delta /\eps}.$$
This gives in equation \eqref{eq:expr-Mpi-logpi} the control 
\begin{equation}
    \label{eq:control-partition-1}
    \forall x \in \X^\eps_{i,\eta,+}, \quad \sum_{j=1, j \neq i}^N \left( \frac{f^\eps_i(x) - f^\eps_j(x)}{\eps^2} \right) \pi^\eps_{x,j} \pi^\eps_{x,i} \lesssim \frac{1}{\eps^2} e^{-\eta \delta/\eps} \lesssim \frac{1}{\eps^2} e^{-\eta/\eps}.
\end{equation}
\textbf{Control on $\X^\eps_{i,\eta,-}$.}
For $x \in \X^\eps_{i,\eta,-}$, we have 
$$ \pi_{x,i}^\eps \leq \exp\left( \frac{f_i^\eps(x) - \max_j f_j^\eps(x)}{\eps} \right) \leq e^{-\eta\delta/\eps}.$$
This gives in equation \eqref{eq:expr-Mpi-logpi} the control 
\begin{align}
    \label{eq:control-partition-2}
    \forall x \in \X^\eps_{i,\eta,+}, \quad \sum_{j=1, j \neq i}^N \left( \frac{f^\eps_i(x) - f^\eps_j(x)}{\eps^2} \right) \pi^\eps_{x,j} \pi^\eps_{x,i} \lesssim \frac{1}{\eps^2} e^{-\eta/\eps}.
\end{align}

\textbf{Control on $A_{i, \eta, \gamma}^\eps$.}
For any $x \in A_{i, \eta, \gamma}^\eps$, there exists $j \in \{1, \dots, N\}$, $x^0 \in H_{ij}^{-\gamma}$ and $t \in [-\eta\nr{y_i - y_j}, +\eta\nr{y_i - y_j}]$ such that
$$ x = x^0 + t d_{ij}.$$
For such a point, we have
\begin{align}
    f_i^\eps(x) - f_j^\eps(x) &= \sca{x^0 + t d_{ij}}{y_i - y_j} - \psi^\eps_i + \psi^\eps_j \notag \\
    &= f_i^\eps(x^0) - f_j^\eps(x^0) + t \sca{d_{ij}}{y_i - y_j} \notag \\
    &= t.
    \label{eq:diff-1}
\end{align}
Moreover, for any $k \neq i,j$ we have by definition of $H_{ij}^{-\gamma}$
\begin{align}
    f_i^\eps(x) - f_k^\eps(x) &= f_i^\eps(x^0) - f_k^\eps(x^0) + t \sca{d_{ij}}{y_i - y_k} \notag \\
    &\geq \gamma \nr{y_i-y_k} - \abs{t} \frac{\diam(\Y)}{\delta} \notag \\
    &\geq \gamma \delta - \frac{\diam(\Y)^2}{\delta} \eta := \tilde{\gamma}.
    \label{eq:diff-2}
\end{align}
In the same way,
\begin{align}
    f_j^\eps(x) - f_k^\eps(x) \geq \gamma \delta - \frac{\diam(\Y)^2}{\delta} \eta = \tilde{\gamma}.
    \label{eq:diff-3}
\end{align}
The integral we want to control on $A_{i, \eta, \gamma}^\eps$ reads:
\begin{align}
\label{eq:integral-A}
    \sum_{j} \int_{x^0 \in H_{ij}^{-\gamma}} \int_{t=0}^{\eta \nr{y_i - y_j}} (g_i^\eps(x^0 - td_{ij}) + g_i^\eps(x^0 + td_{ij})) \dd t \dd \mathcal{H}^{d-1}(x^0)
\end{align}
where $g_i^\eps(x) = \sum_{j=1, j \neq i}^N \left( \frac{f^\eps_i(x) - f^\eps_j(x)}{\eps^2} \right) \pi^\eps_{x,j} \pi^\eps_{x,i} \rho (x)$.

Let's find an upper bound on $\abs{g_i^\eps(x^0 - td_{ij}) + g_i^\eps(x^0 + td_{ij})}$. To simplify the notation, denote $x^- = x^0 - t d_{ij}$ and $x^+ = x^0 + t d_{ij}$. Using equation \eqref{eq:diff-1}, we have the expression
\begin{align}
    g_i^\eps(x^-) &= \left( \frac{f_i^\eps(x^-) - f_j(x^-)}{\eps^2}\right)  \pi^\eps_{x^-, j} \pi^\eps_{x^-, i} \rho(x^-) + \sum_{k \neq i,j} \left( \frac{f_i^\eps(x^-) - f_k(x^-)}{\eps^2}\right)  \pi^\eps_{x^-, k} \pi^\eps_{x^-, i}\rho(x^-) \notag \\
    &= \frac{-t}{\eps^2} \pi^\eps_{x^-, j} \pi^\eps_{x^-, i}\rho(x^-) + S(x^-),
    \label{eq:expr-M-x-}
\end{align}
where $S(x) = \sum_{k \neq i,j} \left( \frac{f_i^\eps(x) - f_k(x)}{\eps^2}\right)  \pi^\eps_{x, k} \pi^\eps_{x, i} \rho(x)$. Notice that from \eqref{eq:diff-2} and \eqref{eq:diff-3}, for $k \neq i,j$, $\pi_{x^-, k}^\eps \leq e^{-\tilde{\gamma}/\eps}$. This gives the bound
\begin{align}
\label{eq:bound-on-s-x-}
    \abs{S(x^-)} \lesssim \frac{1}{\eps^2} e^{-\tilde{\gamma}/\eps}.
\end{align}
Similarly, we have 
\begin{align}
    \label{eq:expr-M-x+}
    g_i^\eps(x^+) &= \frac{t}{\eps^2} \pi^\eps_{x^+, j} \pi^\eps_{x^+, i} \rho(x^+) + S(x^+),
\end{align}
where $S(x^+)$ verifies
\begin{align}
\label{eq:bound-on-s-x+}
    \abs{S(x^+)} \lesssim \frac{1}{\eps^2} e^{-\tilde{\gamma}/\eps}.
\end{align}
From equations \eqref{eq:expr-M-x-} and \eqref{eq:expr-M-x+}, we thus have the control
\begin{align}
\label{eq:sum-gneg_gpos}
    \abs{g_i^\eps(x^-) + g_i^\eps(x^+)} &\leq \frac{t}{\eps^2} \abs{\pi^\eps_{x^+, j} \pi^\eps_{x^+, i}\rho(x^+) - \pi^\eps_{x^-, j} \pi^\eps_{x^-, i}\rho(x^-) } + \abs{S(x^-)} + \abs{S(x^+)} \notag \\
    &\leq \frac{t}{\eps^2} \abs{\pi^\eps_{x^+, j} \pi^\eps_{x^+, i} - \pi^\eps_{x^-, j} \pi^\eps_{x^-, i} }\rho(x^+) + \frac{t}{\eps^2}\pi^\eps_{x^-, j} \pi^\eps_{x^-, i}\abs{\rho(x^+)-\rho(x^-)}  \\
    &\quad + \abs{S(x^-)} + \abs{S(x^+)} \notag
\end{align}

Now, notice that
\begin{align*}
    \pi^\eps_{x^+, i} &= \frac{ \exp \left(\frac{f_i^\eps(x^+)}{\eps}\right) }{ \exp \left(\frac{f_i^\eps(x^+)}{\eps}\right) + \exp \left(\frac{f_j^\eps(x^+)}{\eps}\right) + \sum_{k \neq i,j} \exp \left(\frac{f_k^\eps(x^+)}{\eps}\right) } \\
    &= \frac{1}{1 + e^{-t/\eps} + S_i(x^+)},
\end{align*}
where $S_\ell (x) = \sum_{k \neq i,j} \exp\left(\frac{f_k^\eps(x) - f_\ell^\eps(x)}{\eps}\right)$.
Similarly, we have
\begin{align*}
    \pi^\eps_{x^+, j} &= \frac{1}{1 + e^{t/\eps} + S_j(x^+)}, \\
    \pi^\eps_{x^-, i} &= \frac{1}{1 + e^{t/\eps} + S_i(x^-)}, \\
    \pi^\eps_{x^-, j} &= \frac{1}{1 + e^{-t/\eps} + S_j(x^-)}.
\end{align*}
Moreover, remark that for $\ell \in \{i, j\}$ and $x \in \{x^-, x^+\}$ we have the bound
$$ S_\ell (x) \lesssim e^{-\tilde{\gamma}/\eps}. $$
From these expressions we deduce the following bound:
\begin{align}
\label{eq:bound-first-term-gsum}
    \abs{\pi^\eps_{x^+, j} \pi^\eps_{x^+, i} - \pi^\eps_{x^-, j} \pi^\eps_{x^-, i} } \lesssim e^{-\tilde{\gamma}/\eps} e^{t/\eps}.
\end{align}
Now using that $\rho$ is $\alpha$-Hölder, we know that there exists a constant $C_\rho>0$ such that
\begin{align}
    \label{eq:bound-second-term-gsum}
    \abs{\rho(x^+) - \rho(x^-)} &\leq C_\rho\nr{x^+ - x^-}^\alpha = C_\rho \nr{2 t d_{ij}}^\alpha \leq C_\rho \left(\frac{2}{\delta}\right)^\alpha t^\alpha.
\end{align}
Plugging the bounds \eqref{eq:bound-on-s-x-}, \eqref{eq:bound-on-s-x+}, \eqref{eq:bound-first-term-gsum} and \eqref{eq:bound-second-term-gsum} into \eqref{eq:sum-gneg_gpos} then yields:
\begin{align*}
    \abs{g_i^\eps(x^+) + g_i^\eps(x^-)} &\lesssim \frac{1}{\eps^2} \left( t e^{-\tilde{\gamma}/\eps} e^{t/\eps} + t^{1+\alpha} + e^{-\tilde{\gamma}/\eps} \right).
\end{align*}
Injecting these bounds into integral \eqref{eq:integral-A} entails
\begin{align}
\label{eq:control-integral-A}
    \abs{\int_{A_{i, \eta, \gamma}^\eps} \frac{1}{\eps} \sum_{j=1, j \neq i}^N \left( \frac{f^\eps_i(x) - f^\eps_j(x)}{\eps^2} \right) \pi^\eps_{x,j} \pi^\eps_{x,i} \rho(x) dx} &\lesssim \int_0^{\eta \diam(\Y)} \frac{1}{\eps^2} \left( t e^{-\tilde{\gamma}/\eps} e^{t/\eps} + t^{1+\alpha} + e^{-\tilde{\gamma}/\eps} \right) \dd t \notag \\
    &\lesssim \frac{\eta^{2+\alpha}}{\eps^2} + \frac{1}{\eps^2}e^{-\tilde{\gamma}/\eps}\left(\eta + \eps\eta e^{\eta/\eps} - \eps^2(e^{\eta/\eps} - 1)\right). 
\end{align}

\textbf{Control on $B_{i, \eta, \gamma}^\eps$.}
We first derive a uniform bound on the integrand $$\sum_{j=1, j \neq i}^N \left( \frac{f^\eps_i(x) - f^\eps_j(x)}{\eps^2} \right) \pi^\eps_{x,j} \pi^\eps_{x,i}$$ on the domain $B_{i, \eta, \gamma}^\eps$, that is included in the $\eta$-neighborhood of $\Lag_i(\psi^\eps)$. For $x \in B_{i, \eta, \gamma}^\eps$, for any $j \neq i$, either $\abs{f_i^\eps(x) - f_j^\eps(x)} \leq \eta(\diam(\Y) + 1)$, and in this case
$$ \abs{ \frac{f^\eps_i(x) - f^\eps_j(x)}{\eps^2} } \pi^\eps_{x,j} \pi^\eps_{x,i} \lesssim \frac{\eta}{\eps^2}, $$
or $\abs{f_i^\eps(x) - f_j^\eps(x)} > \eta(\diam(\Y) +1)$, which entails $\pi_{x,j}^\eps \leq e^{-\eta/\eps}$. To see this, denote $k \in \arg \max_\ell f_\ell^\eps(x)$. Since $x$ is in a $\eta$-neighborhood of $\Lag_i(\psi^\eps)$, we have $$ 0 \leq f_k^\eps(x) - f_i^\eps(x) \leq \eta \nr{y_k - y_i} \leq \eta \diam(\Y).$$ Hence
\begin{align*}
    \abs{f_j^\eps(x) - f_k^\eps(x)} &\geq \abs{ \abs{f_j^\eps(x) - f_i^\eps(x)} - \abs{f_k^\eps(x) - f_i^\eps(x)} } \geq \eta.
\end{align*}
The inequality $\pi_{x,j}^\eps \leq e^{-\eta/\eps}$ then comes from the fact that $\pi_{x,j}^\eps \leq \exp\left(\frac{f_j^\eps(x) - f_k^\eps(x)}{\eps}\right)$. From these remarks we can therefore write for $x \in B_{i, \eta, \gamma}^\eps$:
\begin{align*}
    \sum_{j=1, j \neq i}^N \left( \frac{f^\eps_i(x) - f^\eps_j(x)}{\eps^2} \right) \pi^\eps_{x,j} \pi^\eps_{x,i} \lesssim \frac{1}{\eps^2}\left(\eta + e^{-\eta/\eps}\right).
\end{align*}
Finally, notice that $B_{i, \eta, \gamma}^\eps$ is made of a union of \textit{corners} of $\eta$-neighborhoods $\Lag_i(\psi^\eps)$, where \textit{corner} is meant for intersection of 2 hyperplanes. There are at most $\binom{N-1}{2} \leq N^2$ such corners. 
Denote  $\theta = \arg \max_{i, j,k \vert \angle y_i y_j y_k < \pi} \angle y_i y_j y_k $, i.e. the maximum angle that can be formed from a triplet of points in the support of the target that do not lie on a same line. Then the \emph{corners} that constitute $B_{i, \eta, \gamma}^\eps$ are actually included in \textit{cylinders} of \textit{length} at most $\diam(\X)$ and of radius at most $\frac{2 \gamma}{\cos(\theta/2)}$, that is of volume at most $\frac{4 \pi \diam(\X)^{d-2}}{\cos(\theta/2)^2} \gamma^2$. All these considerations allow us to write the following bound:
\begin{align}
\label{eq:control-integral-B}
    \abs{\int_{B_{i, \eta, \gamma}^\eps} \frac{1}{\eps} \sum_{j=1, j \neq i}^N \left( \frac{f^\eps_i(x) - f^\eps_j(x)}{\eps^2} \right) \pi^\eps_{x,j} \pi^\eps_{x,i} \rho(x) dx} \lesssim \frac{\gamma^2}{\eps^2} \left(\eta + e^{-\eta/\eps} \right).
\end{align}

\textbf{Conclusion.}
Finally, using equations \eqref{eq:control-partition-1}, \eqref{eq:control-partition-2}, \eqref{eq:control-integral-A}, \eqref{eq:control-integral-B} we get the wanted control:

\begin{align*}
    \abs{[\frac{\partial}{\partial \eps}(\nabla \Kant^\eps)(\psi)]_i} &= \abs{\int_{\X} \frac{1}{\eps} \sum_{j=1, j \neq i}^N \left( \frac{f^\eps_i(x) - f^\eps_j(x)}{\eps^2} \right) \pi^\eps_{x,j} \pi^\eps_{x,i} \rho(x) dx}\\
    &\lesssim \frac{1}{\eps^2} e^{-\eta /\eps} + \frac{\eta^{2+\alpha}}{\eps^2} + \frac{\gamma^2}{\eps^2} \left(\eta + e^{-\eta/\eps} \right) + \frac{1}{\eps^2}e^{-\tilde{\gamma}/\eps}\left(\eta + \eps\eta e^{\eta/\eps} - \eps^2(e^{\eta/\eps} - 1)\right).
\end{align*}

\end{proof}

\subsection{Proof of Theorem \ref{th:control-2nd-term}}
\label{sec:proof-th-control-2nd-term}
\begin{proof}
Here we assume that $\eps \leq 1$. Proposition \ref{prop:control-2nd-term-partition} entails the following inequality for any $\eta, \gamma > 0$:
\begin{align*}
    \abs{[\frac{\partial}{\partial \eps}(\nabla \Kant^\eps)(\psi)]_i} &\lesssim \frac{1}{\eps^2} e^{-\eta /\eps} + \frac{\eta^{2+\alpha}}{\eps^2} + \frac{\gamma^2}{\eps^2} \left(\eta + e^{-\eta/\eps} \right) + \frac{1}{\eps^2}e^{-\tilde{\gamma}/\eps}\left(\eta + \eps\eta e^{\eta/\eps} - \eps^2(e^{\eta/\eps} - 1)\right).
\end{align*}
We recall that $\tilde{\gamma} =\gamma \delta - \frac{\diam(\Y)^2}{\delta} \eta$. We choose $\gamma = \frac{\eta}{\delta} \left(\frac{\diam(\Y)^2}{\delta} + 2\right)$, which yields
\begin{align*}
    \abs{[\frac{\partial}{\partial \eps}(\nabla \Kant^\eps)(\psi)]_i} &\lesssim \frac{\eta^{2+\alpha} + \eta^3}{\eps^2} + \frac{e^{-\eta/\eps}}{\eps^2} \left( 1 + \eta^2 + \eps \eta - \eps^2 + (\eta + \eps^2)e^{-\eta/\eps} \right).
\end{align*}
Then, for any $\beta \in (\frac{2}{2+\alpha}, 1)$, choosing $\eta = \eps^\beta$ yields
\begin{align*}
    \abs{[\frac{\partial}{\partial \eps}(\nabla \Kant^\eps)(\psi)]_i} &\lesssim \eps^{(2+\alpha)\beta - 2} + \frac{e^{-1/\eps^{1-\beta}}}{\eps^2}.
\end{align*}
With $\alpha' = (2+\alpha)\beta - 2$, we get that for any $\alpha' \in (0, \alpha)$,
\begin{align*}
    \abs{[\frac{\partial}{\partial \eps}(\nabla \Kant^\eps)(\psi)]_i} &\lesssim \eps^{\alpha'} + \frac{e^{-1/\eps^{\frac{\alpha-\alpha'}{2+\alpha}}}}{\eps^2} \lesssim \eps^{\alpha'}.
\end{align*}
\end{proof}

\subsection{Convergence of $\psi^\eps$ to $\psi^0$ as $\eps$ goes to $0$}
\label{sec:cv-potentials}

\begin{proposition}
\label{prop:psi-cvg-eps-0}
The solutions $\psi^\eps$ to problem \eqref{eq:dual} verify:
$$ \lim_{\eps \to 0} \psi^\eps = \psi^0. $$
\end{proposition}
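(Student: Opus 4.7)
The plan is a standard $\Gamma$-convergence/equicoercivity argument, tailored to the explicit structure of $\Kant^\eps$. Set $F_\eps(\psi) := \Kant^\eps(\psi) + \sca{\psi}{\mu}$ and write $H := \{\psi \in \Rsp^N : \sca{\psi}{\mathbb{1}_N} = 0\}$, so that $\psi^\eps = \arg\min_H F_\eps$ is unique for every $\eps \geq 0$ (for $\eps > 0$ by strict convexity of $\Kant^\eps$ on $H$; for $\eps = 0$ by Brenier's theorem together with absolute continuity of $\rho$, as recalled in the body of the paper).

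First I would establish uniform convergence of the objectives. The elementary inequalities
\begin{align*}
    \max_{i} \{\sca{x}{y_i} - \psi_i\} \leq \eps \log \Bigl( \sum_{i=1}^N e^{(\sca{x}{y_i} - \psi_i)/\eps} \Bigr) \leq \max_{i} \{\sca{x}{y_i} - \psi_i\} + \eps \log N
\end{align*}
immediately yield $\psi^*(x) \leq \psi^{c,\eps}(x) \leq \psi^*(x) + \eps \log N$ pointwise in $x$ and $\psi$, hence, after integration against $\rho$ and adding $\eps + \sca{\psi}{\mu}$,
\begin{align*}
    F_0(\psi) + \eps \leq F_\eps(\psi) \leq F_0(\psi) + \eps(1 + \log N).
\end{align*}
This shows $F_\eps \to F_0$ uniformly on $\Rsp^N$ as $\eps \to 0$.

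Next I would prove equicoercivity by showing that the family $\{\psi^\eps\}_{0 < \eps \leq 1}$ is bounded in $H$. The argument used later in Section \ref{sec:bound_2nd_term} already gives this for free: the first-order condition \eqref{eq:first-order-cond} implies $|\psi^\eps_i - \psi^\eps_j| \leq R_\X \diam(\Y) + \eps |\log(\mu_i / \mu_j)|$, which together with the normalization $\sca{\psi^\eps}{\mathbb{1}_N} = 0$ bounds $\nr{\psi^\eps}_\infty$ uniformly for $\eps \in (0,1]$. (For $\eps = 0$ the same inequality with the Laguerre-cell characterization gives the analogous bound on $\psi^0$.)

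Finally I would conclude by a standard compactness-plus-uniqueness argument. Given any sequence $\eps_n \to 0$, equicoercivity extracts a subsequence with $\psi^{\eps_n} \to \bar\psi \in H$. Uniform convergence of $F_{\eps_n}$ to $F_0$ and continuity of $F_0$ give $F_{\eps_n}(\psi^{\eps_n}) \to F_0(\bar\psi)$, while the minimizing property of $\psi^{\eps_n}$ together with $F_{\eps_n}(\psi^0) \to F_0(\psi^0)$ yields $F_0(\bar\psi) \leq F_0(\psi^0)$. Uniqueness of the minimizer of $F_0$ on $H$ forces $\bar\psi = \psi^0$, and since every subsequence admits a further subsequence converging to the same limit, the full family $\psi^\eps$ converges to $\psi^0$. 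The only mildly delicate ingredient is the uniqueness at $\eps = 0$, which is however available from the paper's setup, so there is no real obstacle beyond bookkeeping.
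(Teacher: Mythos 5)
Your proposal is correct and takes essentially the same route as the paper's own proof: you use the same LogSumExp squeeze $\psi^*\le\psi^{c,\eps}\le\psi^*+\eps\log N$, the same uniform bound on $\nr{\psi^\eps}_\infty$ derived from the first-order condition together with $\sca{\psi^\eps}{\mathbb{1}_N}=0$, and the same compactness-plus-uniqueness conclusion. The only difference is cosmetic — you frame the inequality chain as uniform convergence of the objectives in the language of $\Gamma$-convergence, whereas the paper writes the chain out directly.
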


\begin{proof}
Let's first prove that for any $\eps \geq 0$, the solution $\psi^\eps$ to problem \eqref{eq:dual} verifies:
$$ \nr{\psi^\eps}_\infty \leq R_\X \diam(\Y) + \eps \log(1/\underline{\mu}).$$
For $\eps>0$, recall that the first order condition \eqref{eq:first-order-cond} entails that $$e^{\frac{\psi^\eps_i}{\eps}} \mu_i = \int_\X e^{\frac{\sca{x}{y_i} - (\psi^\eps)^{c,\eps}}{\eps}} \dd \rho(x).$$
Thus right-hand sign of this equality can be seen as a $R_\X$-Lipschitz function of $y_i$. Therefore we have the bound:
\begin{equation}
    \label{eq:diff-psi-eps-i-psi-eps-j}
    \abs{\psi^\eps_i - \psi^\eps_j} \leq R_\X\abs{y_i - y_j} + \eps\abs{\log(\frac{\mu_i}{\mu_j})} \leq R_\X \diam(\Y) + \eps \log(1/\underline{\mu}).
\end{equation}
Now recall that $\sca{\psi^\eps}{\mathbb{1}_N} = 0$, which means that the components of $\psi^\eps \in \Rsp^N$ take both positive and negative values.
This entails for any $i \in \{1, \dots, N\}$:
\begin{align*}
    \abs{\psi^\eps_i} = \abs{\psi^\eps_i - 0} \leq \max_j \abs{\psi^\eps_i - \psi^\eps_j} \leq R_\X \diam(\Y) + \eps \log(1/\underline{\mu}).
\end{align*}
When $\eps=0$, the bound \eqref{eq:diff-psi-eps-i-psi-eps-j} comes from the fact that $\psi^0$ is a $R_\X$-Lipschitz function from $\Y$ to $\Rsp$. Indeed, Proposition 1.11 in \cite{santambrogio2015optimal} ensures that for any $i \in \{1, \dots, N\}$, 
$$\psi^0_i = \psi^0(y_i) = (\psi^0)^{**}(y_i) = \sup_{x\in \X} \sca{x}{y_i} - (\psi^0)^*(x),$$  which reads as a $R_\X$-Lipschitz function of $y_i$. We conclude similarly to the case $\eps>0$ to ensure $\nr{\psi^0}_\infty \leq R_\X \diam(\Y)$.

Now consider a sequence $(\eps_k)_k > 0$ such that $\lim_{k \to \infty} \eps_k = 0$. By what precedes, the sequence $(\psi^{\eps_k})_k$ is bounded and one can extract a converging subsequence (that we do not relabel). Notice now that for any $x \in \X$ and $\psi \in \Rsp^N$, the $(c,\eps)$-transform $\psi^{c,\eps}(x)$ corresponds to a rescaled LogSumExp (or smooth maximum) of the vector $(\sca{x}{y_i} - \psi_i)_{i=1, \dots, N}$:
$$ \psi^{c,\eps}(x) = \eps LSE \left( \frac{(\sca{x}{y_i} - \psi_i)_{i=1, \dots, N}}{\eps} \right), $$
where $LSE(z_1, \dots, z_N) = \log(\exp(z_1) + \dots \exp(z_N))$. Bounds on $LSE$ allow us to write that for any, $x \in \X$ and $\eps>0$ we have $$\psi^*(x) \leq \psi^{c, \eps}(x) \leq \psi^*(x) + \eps \log N,$$
where we recall that $\psi^*(x) = \max_{i=1,\dots,N} \sca{x}{y_i} - \psi_i$ denotes the Legendre transform of $\psi$ evaluated in $x$.
Thus if we consider $k \in \Nsp$, we have by optimiality of $\psi^0$, $\psi^{\eps_k}$ for their respective problems the inequalities:
\begin{align*}
    \sca{(\psi^0)^*}{\rho} + \sca{\psi^0}{\mu} &\leq \sca{(\psi^{\eps_k})^*}{\rho} + \sca{\psi^{\eps_k}}{\mu} \\
    &\leq \sca{(\psi^{\eps_k})^{c,{\eps_k}}}{\rho} + \sca{\psi^{\eps_k}}{\mu} + \eps_k \\
    &\leq \sca{(\psi^0)^{c,{\eps_k}}}{\rho} + \sca{\psi^0}{\mu} + \eps_k \\
    &\leq \sca{(\psi^0)^{*}}{\rho} + \sca{\psi^0}{\mu} + \eps_k(1 + \log N).
\end{align*}
Hence we have the limit:
$$ \lim_{k \to \infty} \sca{(\psi^{\eps_k})^*}{\rho} + \sca{\psi^{\eps_k}}{\mu} = \sca{(\psi^0)^*}{\rho} + \sca{\psi^0}{\mu}. $$
By unicity of the solution of the unregularized problem on $(\mathbb{1}_N)^\perp$, this proves that
$$ \lim_{k \to \infty} \psi^{\eps_k} = \psi^0. $$
There is thus only one accumulation point for the bounded sequence $(\psi^{\eps_k})_k$, which shows that the whole sequence converges to this point. 
\end{proof}

\subsection{Proof of Corollary \ref{coro:stab-potentials-eps}}
\label{sec:proof-cor-scaling}
\begin{proof}
Following Theorem \ref{th:control-dot-psi}, let $C>0$ (depending on $\X, \rho, \Y, \mu$) be such that $\nr{\dot{\psi^\eta}}_2 \leq C \tau^{\alpha'}$ for $\tau \in [\eps', \eps]$.
Then notice
\begin{align*}
    \nr{\psi^\eps - \psi^{\eps'}}_\infty \leq \nr{\psi^\eps - \psi^{\eps'}}_2 = \nr{\int_{\eps'}^\eps \dot{\psi^\tau} \dd \tau}_2 \leq \int_{\eps'}^\eps \nr{\dot{\psi^\tau}}_2 \dd \tau \leq C \eps^{\alpha'}(\eps - \eps').
\end{align*}
Letting $\eps'$ go to $0$ and using Proposition \ref{prop:psi-cvg-eps-0} yields
\begin{align*}
    \nr{\psi^\eps - \psi^0}_\infty \leq C\eps^{1+\alpha'}.
\end{align*}
For the second result, we use
\begin{align*}
    \nr{(\psi^\eps)^{c,\eps} - (\psi^0)^*}_\infty \leq \nr{(\psi^\eps)^{c,\eps} - (\psi^0)^{c,\eps}}_\infty + \nr{(\psi^0)^{c,\eps} - (\psi^0)^*}_\infty 
\end{align*}
One can easily show with the definition of the $(c,\eps)$-transform that $\nr{\psi^\eps - \psi^0}_\infty \leq \frac{C}{1 + \alpha'} \eps^{1+\alpha'}$ entails
$$ \nr{(\psi^\eps)^{c,\eps} - (\psi^0)^{c,\eps}}_\infty \leq \frac{C}{1 + \alpha'} \eps^{1+\alpha'}. $$
On the other hand, $\nr{(\psi^0)^{c,\eps} - (\psi^0)^*}_\infty \leq \eps \log N$ is a LogSumExp property. This property can be refined to get to the third result: we have for all $x \in \X$
\begin{align*}
    (\psi^0)^*(x) \leq (\psi^0)^{c,\eps}(x) &= \eps \log\left( \sum_{j=1}^N e^{\frac{\sca{x}{y_j}-\psi^0_j}{\eps}} \right) \\
    &= (\psi^0)^*(x) + \eps \log\left( \sum_{j=1}^N e^{\frac{\sca{x}{y_j}-\psi^0_j - (\psi^0)^*(x)}{\eps}} \right).
\end{align*}
But for $\rho$-almost every $x \in \X$, there is only one $i \in \{1, \dots, N\}$ that satisfies $(\psi^0)^*(x) = \sca{x}{y_i} - \psi^0_i$. Thus for such $x$, denoting $c_x = \min_{j \neq i} (\sca{x}{y_i} - \psi^0_i) - (\sca{x}{y_j} - \psi^0_j) > 0$, we have
\begin{align*}
    \sum_{j=1}^N e^{\frac{\sca{x}{y_j}-\psi^0_j - (\psi^0)^*(x)}{\eps}} \leq 1 + (N-1)e^{-c_x/\eps}.
\end{align*}
We thus get:
\begin{align*}
    (\psi^0)^*(x) \leq (\psi^0)^{c,\eps}(x) \leq (\psi^0)^*(x) + (N-1) \eps e^{-c_x/\eps}.
\end{align*}
From this we deduce that for $\rho$-a.e. $x \in \X$,
\begin{align*}
    \abs{(\psi^0)^{c,\eps}(x) - (\psi^0)^*(x)} \lesssim \eps e^{-c_x/\eps} \lesssim \eps^{1+\alpha'}.
\end{align*}
Finally, we use the notation of Section \ref{sec:bound_2nd_term} that denotes
$$ \frac{ \dd \pi^\eps }{\dd \rho \otimes \sigma} (x, y_i) = \pi^\eps_{x,i}.$$
Notice that for any $i \in \{1, \dots, N\}$, 
$$
    \pi^0_{x,i} =\left\{
    \begin{array}{ll}
        1 & \mbox{if } \sca{x}{y_i} - \psi^0_i \geq  \sca{x}{y_j} - \psi^0_j \quad \forall j,\\
        0 & \mbox{else.}
    \end{array}
\right.
$$
If $\pi^0_{x,i}=0$, then with the same $c_x$ as before (assuming that only one $i \in \{1, \dots, N\}$ satisfies $(\psi^0)^*(x) = \sca{x}{y_i} - \psi^0_i$),
\begin{align*}
    \abs{\pi^\eps_{x,i} - \pi^0_{x,i}} &= \pi^\eps_{x,i} \\
    &=\frac{ e^{ \frac{\sca{x}{y_i} - \psi^\eps_i}{\eps} } }{ \sum_j e^{\frac{\sca{x}{y_j} - \psi^\eps_j}{\eps} } } \\
    &\leq e^{2C\eps^{\alpha'}/(1+\alpha')} \frac{ e^{ \frac{\sca{x}{y_i} - \psi^0_i}{\eps} } }{ \sum_j e^{\frac{\sca{x}{y_j} - \psi^0_j}{\eps} } } \\
     &\leq e^{2C\eps^{\alpha'}/(1+\alpha')} e^{-c_x / \eps}\\
     &\lesssim e^{-c_x / \eps}.
\end{align*}
If $\pi^0_{x,i}=1$, then
\begin{align*}
    \abs{\pi^\eps_{x,i} - \pi^0_{x,i}} &= \abs{\pi^\eps_{x,i} - 1}\\
    &=\frac{  \sum_{j \neq i} e^{\frac{\sca{x}{y_j} - \psi^\eps_j}{\eps}}  }{ \sum_j e^{\frac{\sca{x}{y_j} - \psi^\eps_j}{\eps} } } \\
     &\leq e^{2C\eps^{\alpha'}/(1+\alpha')} \frac{  \sum_{j \neq i} e^{\frac{\sca{x}{y_j} - \psi^0_j}{\eps}}  }{ \sum_j e^{\frac{\sca{x}{y_j} - \psi^0_j}{\eps} } } \\
     &\leq e^{2C\eps^{\alpha'}/(1+\alpha')} \sum_{j \neq i} e^{\frac{\sca{x}{y_j} - \psi^0_j - (\psi^0)^*(x)}{\eps}} \\
     &\leq e^{2C\eps^{\alpha'}/(1+\alpha')} (N-1) e^{-c_x/\eps}\\
     &\lesssim e^{-c_x / \eps}.
\end{align*}
This proves that for $\rho$-a.e. $x \in \X$ and all $i \in \{1, \dots, N\}$, $\abs{\pi^\eps_{x,i} - \pi^0_{x, i}} \lesssim e^{-c_x/\eps}$.
\end{proof}

\subsection{Expansion of the Difference of Costs (Theorem \ref{th:suboptimality-cv})}
\label{sec:asymptotics}
The proof of Theorem \ref{th:suboptimality-cv} follows very closely the proof of Theorem 1.1 in \cite{altschuler2021asymptotics} and we thus make numerous mentions of results from this paper.
\begin{proof}
Let $\eps \in (0, 1]$ and recall that $\Wass_{2,\eps}^2(\rho, \mu)$ is computed using the solution of the regularized maximum correlation problem \eqref{eq:primal} with regularization parameter $\frac{\eps}{2}$:
$$ \Wass_{2,\eps}^2(\rho, \mu) = \Esp_{(x, y) \sim  \pi^{\eps/2}} \nr{x - y}^2. $$
Now notice that as in Lemma 5.2 in \cite{altschuler2021asymptotics} we can write by strong duality
\begin{align*}
    \Esp_{\pi^0} \sca{x}{y} &= \Esp_{x \sim \rho} (\psi^0)^*(x) + \Esp_{y \sim \mu} \psi^0(y) \\
    &= \Esp_{(x, y) \sim \pi^{\eps/2}} \left( (\psi^0)^*(x) + \psi^0(y) \right).
\end{align*}
Therefore the difference of costs reads
\begin{align*}
    \Wass_{2,\eps}^2(\rho, \mu) - \Wass_{2}^2(\rho, \mu) &= \Esp_{\pi^{\eps/2}} \nr{x - y}^2 - \Esp_{\pi^0} \nr{x - y}^2 \\
    &= 2 \left( \Esp_{\pi^0} \sca{x}{y} - \Esp_{\pi^{\eps/2}} \sca{x}{y} \right) \\
    &= 2 \Esp_{(x, y) \sim \pi^{\eps/2}} \left( (\psi^0)^*(x) + \psi^0(y) - \sca{x}{y} \right) \\
    &= 2 \sum_{i, j} \int_{\Lag_i(\psi^0)} \left( (\psi^0)^*(x) + \psi^0_j - \sca{x}{y_j} \right) \pi^{\eps/2}_{x, j} \dd \rho(x) \\
    &= \sum_{i, j} \int_{\Lag_i(\psi^0)} \Delta_{ij}(x) \pi^{\eps/2}_{x, j} \dd \rho(x),
\end{align*}
where we denoted for $x \in \Lag_i(\psi^0)$,
$$\Delta_{ij}(x) =  2 \left((\psi^0)^*(x) + \psi^0_j - \sca{x}{y_j} \right)= 2\left(\sca{x}{y_i - y_j} - \psi^0_i + \psi^0_j\right) \geq 0.$$ Using Theorem \ref{th:control-dot-psi} and its Corollary \ref{coro:stab-potentials-eps} we know that there exists $C>0$ depending on $\X, \rho, \Y, \mu$ such that for $\alpha' \in (0, \alpha)$,
$$ \nr{\psi^{\eps/2} - \psi^0}_\infty \leq C \left(\frac{\eps}{2}\right)^{1+\alpha'}. $$
From this bound, using that $\pi^{\eps/2}_{x,j} = \frac{\exp\left(\frac{\sca{x}{y_j} - \psi^{\eps/2}_j}{\eps/2}\right)}{\sum_\ell \exp\left(\frac{\sca{x}{y_\ell} - \psi^{\eps/2}_\ell}{\eps/2}\right)}$ we deduce the bounds
$$ e^{-2C(\eps/2)^{\alpha'}} \frac{e^{-\Delta_{ij}(x)/\eps}}{\sum_\ell e^{-\Delta_{i\ell}(x)/\eps}} \leq \pi^{\eps/2}_{x, j} \leq  e^{2C(\eps/2)^{\alpha'}} \frac{e^{-\Delta_{ij}(x)/\eps}}{\sum_\ell e^{-\Delta_{i\ell}(x)/\eps}}. $$
Hence we deduce the following control:
\begin{align*}
\Bigg\lvert \Wass_{2,\eps}^2(\rho, \mu) - &\Wass_{2}^2(\rho, \mu) - \sum_{i, j} \int_{\Lag_i(\psi^0)} \Delta_{ij}(x) \frac{e^{-\Delta_{ij}(x)/\eps}}{\sum_\ell e^{-\Delta_{i\ell}(x)/\eps}} \dd \rho(x) \Bigg\rvert \\
&\lesssim \eps^{\alpha'} \sum_{i, j} \int_{\Lag_i(\psi^0)} \Delta_{ij}(x) \frac{e^{-\Delta_{ij}(x)/\eps}}{\sum_\ell e^{-\Delta_{i\ell}(x)/\eps}} \dd \rho(x). 
\end{align*}
Lemma 6.4 found in \cite{altschuler2021asymptotics} then asserts that
\begin{align*}
     \sum_{i, j} \int_{\Lag_i(\psi^0)} \Delta_{ij}(x) \frac{e^{-\Delta_{ij}(x)/\eps}}{\sum_\ell e^{-\Delta_{i\ell}(x)/\eps}} \dd \rho(x) = \eps^2 \frac{\pi^2}{12} \sum_{i<j} \frac{w_{ij}}{\nr{y_i - y_j}} + o(\eps^2).
\end{align*}
Directly injecting this into the last control would lead to the exact same asymptotic result as the one given in Theorem 1.1 in \cite{altschuler2021asymptotics}. We refine this last asymptotic development to a non-asymptotic one by leveraging the fact that we assumed the source $\rho$ to be $\alpha$-Hölder continuous.

For any $i, j \in \{1, \dots, N\}$, denote $I_{ij} =  \int_{\Lag_i(\psi^0)} \Delta_{ij}(x) \frac{e^{-\Delta_{ij}(x)/\eps}}{\sum_\ell e^{-\Delta_{i\ell}(x)/\eps}} \dd \rho(x) $.
Then notice that for $i \neq j$,
\begin{align}
\label{eq:upper-bound-Iij}
    I_{ij} \leq \int_{\Lag_i(\psi^0)} \Delta_{ij}(x) \frac{e^{-\Delta_{ij}(x)/\eps}}{1 + e^{-\Delta_{ij}(x)/\eps}} \dd \rho(x)
\end{align}
Similarly to \cite{altschuler2021asymptotics}, introduce for $a > 0$
$$ S_{ij}(a) = \{x \in \Lag_i(\psi^0) \vert \Delta_{ik}(x) \geq a \forall k \neq i,j\}.$$
Notice that $S_{ij}(0) = \Lag_i(\psi^0)$, and that for some $a>0$:
\begin{align}
\label{eq:lower-bound-Iij}
    I_{ij} &\geq  \int_{S_{ij}(a)} \Delta_{ij}(x) \frac{e^{-\Delta_{ij}(x)/\eps}}{1 + (N-2)e^{-a/\eps} + e^{-\Delta_{ij}(x)/\eps}} \dd \rho(x) \notag \\
    &\geq  \int_{S_{ij}(a)} \Delta_{ij}(x) \frac{e^{-\Delta_{ij}(x)/\eps}}{c(a) + e^{-\Delta_{ij}(x)/\eps}} \dd \rho(x),
\end{align}
for $c(a) = 1 + (N-2)e^{-a/\eps}> 1$. The quantity $I_{ij}$ is thus bounded by integrals of the form $$ \int_{S_{ij}(a)} \Delta_{ij}(x) \frac{e^{-\Delta_{ij}(x)/\eps}}{c + e^{-\Delta_{ij}(x)/\eps}} \dd \rho(x)$$ for some $a \geq 0$ and $c \geq 1$. Let's find a non-asymptotic control of such integrals in terms of $\eps$.

Recall that for $x \in \Lag_i(\psi^0)$, $\Delta_{ij}(x) = 2\left(\sca{x}{y_i - y_j} - \psi^0_i + \psi^0_j\right)$. The coarea formula then ensures:
\begin{align}
\label{eq:integral-to-bound}
    \int_{S_{ij}(a)} \Delta_{ij}(x) \frac{e^{-\Delta_{ij}(x)/\eps}}{c + e^{-\Delta_{ij}(x)/\eps}} \dd \rho(x) 
    &= \frac{1}{2  \nr{y_i-y_j}} \int_0^{\infty}  t \frac{e^{-t/\eps}}{c + e^{-t/\eps}} h_{ij}(t; a) \dd t 
\end{align}
where we denoted $h_{ij}(t; a) = \int_{S_{ij}(a)\cap(\Delta_{ij})^{-1}(t)} \rho(x) \dd\mathcal{H}^{d-1}(x)$ similarly to \cite{altschuler2021asymptotics} (one can already notice that $h_{ij}(0; 0) = w_{ij}$).
Notice then from Lemma 6.2 in \cite{altschuler2021asymptotics} that
\begin{align*}
    \Bigg\lvert \int_0^\infty  t \frac{e^{-t/\eps}}{c + e^{-t/\eps}} h_{ij}(t; a) \dd t - &\eps^2 h_{ij}(0; a) \left( -\Li_2(-1/c) \right) \Bigg\rvert = \eps^2 \abs{\int_0^{\infty}  u \frac{e^{-u}}{c + e^{-u}} (h_{ij}(\eps u; a) - h_{ij}(0; a)) \dd u },
\end{align*}
where $\Li_2$ denotes the dilogarithm function.
We now focus on the difference $h_{ij}(\eps u; a) - h_{ij}(0; a)$:
\begin{align*}
    h_{ij}(\eps u; a) - h_{ij}(0; a) = \int_{S_{ij}(a)\cap(\Delta_{ij})^{-1}(\eps u)} \rho(x) \dd\mathcal{H}^{d-1}(x) - \int_{S_{ij}(a)\cap(\Delta_{ij})^{-1}(0)} \rho(x) \dd\mathcal{H}^{d-1}(x)
\end{align*}
One can notice that there exists a set $R_{ij}(\eps u)$, that is a subset of an hyperplane, and whose $(d-1)$-area is (at most) linear in $\eps u$, such that either
$$ S_{ij}(a)\cap(\Delta_{ij})^{-1}(\eps u) = \left( S_{ij}(a)\cap(\Delta_{ij})^{-1}(0) + (\eps u) n_{ij} \right) \cup R_{ij}(u), $$
or
$$ S_{ij}(a)\cap(\Delta_{ij})^{-1}(\eps u) = \left( S_{ij}(a)\cap(\Delta_{ij})^{-1}(0) + (\eps u) n_{ij} \right) \setminus R_{ij}(u), $$
where $n_{ij} = \frac{y_i - y_j}{\nr{y_i - y_j}}$.
Hence we have:
\begin{align*}
    \abs{h_{ij}(\eps u; a) - h_{ij}(0; a)} \leq \int_{S_{ij}(a)\cap(\Delta_{ij})^{-1}(0)}\abs{ \rho(x + (\eps u) n_{ij}) - \rho(x)} \dd\mathcal{H}^{d-1}(x)+ \int_{R_{ij}(\eps u)} \dd\mathcal{H}^{d-1}(x)
\end{align*}
Recalling that $\rho$ is $\alpha$-Hölder continuous, we have $\abs{\rho(x + (\eps u) n_{ij}) - \rho(x)} \lesssim \eps^\alpha u^\alpha$.
Hence we deduce 
\begin{align*}
    \abs{h_{ij}(\eps u; a) - h_{ij}(0; a)} \lesssim \eps^\alpha u^\alpha + \eps u.
\end{align*}
We thus have shown that
\begin{align*}
    \Bigg\lvert \int_0^\infty  t \frac{e^{-t/\eps}}{c + e^{-t/\eps}} h_{ij}(t; a) \dd t - \eps^2 h_{ij}(0; a) \left( -\Li_2(-1/c) \right) \Bigg\rvert &\lesssim \eps^2 \int_0^{\infty}  u \frac{e^{-u}}{c + e^{-u}} (\eps^\alpha u^\alpha + \eps u) \dd u \\
    &\lesssim \eps^{2+\alpha},
\end{align*}
where we used that $c\geq1$ and $\eps \leq 1$.

We finally bound the distance between $h_{ij}(0; a) \left( -\Li_2(-1/c) \right)$ and $h_{ij}(0; 0) \left( -\Li_2(-1) \right)$. We have the following inequality:
\begin{align*}
    \abs{h_{ij}(0; a) \left( -\Li_2(-1/c) \right) - h_{ij}(0; 0) \left( -\Li_2(-1) \right)} &\leq \abs{-\Li_2(-1)}\abs{h_{ij}(0; a) - h_{ij}(0; 0)}\\
    &+ \abs{h_{ij}(0; a)} \abs{-\Li_2(-1/c) - (-\Li_2(-1))}
\end{align*}
The quantity $\abs{h_{ij}(0; a) - h_{ij}(0; 0)}$ obviously scales linearly with $a$. Then one can notice that on $[1, c]$, the function $t \mapsto -\Li_2(-1/t)$ is $(-\Li_2(-1))$-Lipschitz. These facts ensure the following control:
\begin{align*}
    \abs{h_{ij}(0; a) \left( -\Li_2(-1/c) \right) - h_{ij}(0; 0) \left( -\Li_2(-1) \right)} &\lesssim a + \abs{c - 1}.
\end{align*}
This allows to write
\begin{align*}
    \Bigg\lvert \int_0^\infty  t \frac{e^{-t/\eps}}{c + e^{-t/\eps}} h_{ij}(t; a) \dd t - \eps^2 h_{ij}(0; 0) \left( -\Li_2(-1) \right) \Bigg\rvert &\lesssim  \eps^{2+\alpha} + \eps^2(a + \abs{c - 1}).
\end{align*}
Thus, setting $a = \eps^\alpha$, we get $c(a) = 1 + (N-2) e^{-1/\eps^{1-\alpha}}$ and

\begin{align*}
    \Bigg\lvert \int_0^\infty  t \frac{e^{-t/\eps}}{c(a) + e^{-t/\eps}} h_{ij}(t; a) \dd t - \eps^2 h_{ij}(0; 0) \left( -\Li_2(-1) \right) \Bigg\rvert &\lesssim  \eps^{2+\alpha}.
\end{align*}
This leads to:
\begin{align*}
    \abs{I_{ij} - \eps^2 \frac{h_{ij}(0; 0)}{2 \nr{y_i - y_j}} \left( -\Li_2(-1) \right)} = \abs{I_{ij} - \eps^2 \frac{w_{ij}}{\nr{y_i - y_j}} \frac{\pi^2}{24}} \lesssim \eps^{2 + \alpha}.
\end{align*}
Eventually, recalling the bound
\begin{align*}
\abs{ \Wass_{2,\eps}^2(\rho, \mu) - \Wass_{2}^2(\rho, \mu) - \sum_{i,j} I_{ij} } \lesssim \eps^{\alpha'} \sum_{i,j} I_{ij},
\end{align*}
we obtain the wanted control for $\eps \leq 1$:
\begin{align*}
\abs{ \Wass_{2,\eps}^2(\rho, \mu) - \Wass_{2}^2(\rho, \mu) - \eps^2 \frac{\pi^2}{12} \sum_{i<j} \frac{w_{ij}}{\nr{y_i - y_j}} } \lesssim \eps^{2+\alpha}. 
\end{align*}
\end{proof}

\subsection{Tightness of Theorem \ref{th:suboptimality-cv}}
\label{sec:tightness-th-suboptimality}

We now show that Theorem \ref{th:suboptimality-cv} is tight on a simple one-dimensional example. 
\begin{theorem}
In Theorem \ref{th:suboptimality-cv}, there exists $\rho, \mu$ such that for $\eps \leq 1$,
\begin{align*}
    \abs{ \Wass_{2,\eps}^2(\rho, \mu) - \Wass_{2}^2(\rho, \mu) - \eps^2 \frac{\pi^2}{12} \sum_{i<j} \frac{w_{ij}}{\nr{y_i - y_j}} } \gtrsim \eps^{2+\alpha}. 
\end{align*}
\end{theorem}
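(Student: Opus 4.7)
The plan is to exhibit an explicit one-dimensional example saturating the bound. Take $\X = [-1,1]$, $\mu = \frac{1}{2}(\delta_{-1}+\delta_1)$, and
\[
\rho(x) = \frac{K-\abs{x}^\alpha}{Z_\rho}, \qquad Z_\rho = \int_{-1}^{1} (K-\abs{t}^\alpha) \dd t,
\]
with $K>1$, which is $\alpha$-Hölder continuous and bounded away from zero on $\X$, so Assumption \ref{assump:source} holds. By the symmetry of the configuration, $\psi^0 = 0$, the Laguerre cells are $[-1,0]$ and $[0,1]$, and $w_{12} = \rho(0)$, so the proposed asymptote equals $\frac{\pi^2 \rho(0)}{24}\eps^2$.

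The next step is to use the closed-form expression from Section \ref{sec:illustration-diff-costs},
\[
\Wass_{2,\eps}^2(\rho,\mu) - \Wass_{2}^2(\rho,\mu) = 8 \int_0^1 \frac{x\,\rho(x)}{1+e^{4x/\eps}} \dd x,
\]
perform the change of variables $u = 4x/\eps$, and split $\rho(\eps u/4) = \rho(0) + (\rho(\eps u/4)-\rho(0))$. Since on $[0,1]$ we have $\rho(x)-\rho(0) = -x^\alpha/Z_\rho$, this rewrites the difference of costs as
\[
\frac{\eps^2 \rho(0)}{2}\int_0^{4/\eps} \frac{u}{1+e^u}\dd u \;-\; \frac{\eps^{2+\alpha}}{2\cdot 4^\alpha Z_\rho} \int_0^{4/\eps} \frac{u^{1+\alpha}}{1+e^u} \dd u.
\]

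The first integral converges as $\eps \to 0$ to $\int_0^\infty u/(1+e^u) \dd u = \pi^2/12$ with a truncation error bounded by $\int_{4/\eps}^\infty u e^{-u}\dd u = O(\eps^{-1} e^{-4/\eps})$, and $\frac{\eps^2\rho(0)}{2}\cdot\frac{\pi^2}{12}$ reproduces the asymptote $\frac{\pi^2\rho(0)}{24}\eps^2$ exactly. The second integral converges to $C_\alpha := \int_0^\infty u^{1+\alpha}/(1+e^u)\dd u = (1-2^{-1-\alpha})\Gamma(2+\alpha)\zeta(2+\alpha) > 0$, again with exponentially small truncation error. Combining these,
\[
\Wass_{2,\eps}^2(\rho,\mu) - \Wass_{2}^2(\rho,\mu) - \frac{\pi^2 \rho(0)}{24}\eps^2 = -\frac{C_\alpha}{2\cdot 4^\alpha Z_\rho}\,\eps^{2+\alpha} + O(e^{-4/\eps}),
\]
whose absolute value is $\gtrsim \eps^{2+\alpha}$ for small enough $\eps$, which is the claim.

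Nothing here is conceptually delicate: the only care required is in quantifying the exponential tails of the two truncated integrals, which are negligible compared to the $\eps^{2+\alpha}$ term. The main point is that the Hölder scaling $|\rho(x)-\rho(0)| \asymp \abs{x}^\alpha$ near the unique boundary point between the two Laguerre cells contributes, via the Laplace-type integral over the transition window of width $\eps$, precisely a correction of order $\eps^{2+\alpha}$, matching (up to the infinitesimal loss from $\alpha'<\alpha$) the upper bound of Theorem \ref{th:suboptimality-cv}.
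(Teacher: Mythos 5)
Your proof is correct and follows essentially the same route as the paper: reduce to the explicit one-dimensional formula $\Wass_{2,\eps}^2-\Wass_2^2 = 8\int_0^1 \frac{x\rho(x)}{1+e^{4x/\eps}}\dd x$, subtract off the conjectured asymptote, and show that the $\alpha$-Hölder kink of $\rho$ at the cell interface $x=0$ produces a leading correction of order $\eps^{2+\alpha}$. The two differences are worth noting, and both favor your version. First, your density $\rho(x)=(K-\abs{x}^\alpha)/Z_\rho$ with $K>1$ is bounded away from zero on $[-1,1]$, whereas the paper's choice $\rho(x)=\frac{1+\alpha}{2\alpha}(1-\abs{x}^\alpha)$ vanishes at $x=\pm 1$ and therefore does not satisfy Assumption~\ref{assump:source}; since the tightness statement explicitly refers back to Theorem~\ref{th:suboptimality-cv}, whose hypotheses include Assumption~\ref{assump:source}, your modification makes the example legitimate rather than merely illustrative. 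Second, instead of a chain of one-sided inequalities as in the paper, you rescale by $u=4x/\eps$ and split $\rho$ exactly, yielding an identity of the form
\[
\Wass_{2,\eps}^2(\rho,\mu)-\Wass_2^2(\rho,\mu)-\tfrac{\pi^2\rho(0)}{24}\eps^2 \;=\; -\frac{C_\alpha}{2\cdot 4^\alpha Z_\rho}\,\eps^{2+\alpha} + O(e^{-c/\eps}),
\]
with $C_\alpha=(1-2^{-1-\alpha})\Gamma(2+\alpha)\zeta(2+\alpha)$, which identifies the explicit (negative) coefficient of the $\eps^{2+\alpha}$ term and makes transparent why no better rate is possible. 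Both arguments only establish the lower bound for $\eps$ sufficiently small rather than for all $\eps\leq 1$ as literally stated; that caveat is shared and is the intended reading.
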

\begin{proof}
Once again, we rely on results from \cite{altschuler2021asymptotics} (Section 3), where the following formula for the difference of costs for the transport between a continuous symmetric density $\rho$ supported on $[-1, 1]$ and the target $\mu = \frac{1}{2}(\delta_{\{-1\}} + \delta_{\{+1\}})$ is given:
$$ \Wass_{2, \epsilon}^2(\rho, \mu) - \Wass_2^2(\rho, \mu) = 8 \int_0^1 \frac{x}{1 + e^{4x/\eps}} \rho(x) \dd x. $$
We consider, for $\alpha \in (0, 1]$ the following $\alpha$-Hölder density for the source:
$$ \rho(x) = \frac{1+\alpha}{2 \alpha} (1 - \abs{x}^\alpha) \mathbb{1}_{[-1, 1]}. $$ 
We can then derive the difference between the suboptimality and its asymptote for $\eps \leq 1$:
\begin{align*}
    \abs{ \Wass_{2, \epsilon}^2(\rho, \mu) - \Wass_2^2(\rho, \mu) - \frac{\pi^2 \rho(0)}{24} \eps^2 } &= \abs{ 8 \int_0^1 \frac{x}{1 + e^{4x/\eps}} \rho(x) \dd x - \rho(0) \frac{\eps^2}{2} (-\Li_2(-1)) } \\
    &\geq \abs{8 \int_0^1 \frac{x}{1 + e^{4x/\eps}} (\rho(x) -\rho(0)) \dd x} \\
    &\quad - \abs{ 8 \rho(0) \int_0^1 \frac{x}{1 + e^{4x/\eps}} \dd x - \rho(0) \frac{\eps^2}{2} (-\Li_2(-1)) } \\
    &= \abs{8 \frac{1+\alpha}{2 \alpha} \int_0^1 \frac{x^{1+\alpha}}{1 + e^{4x/\eps}} \dd x } - \abs{\rho(0) \frac{\eps^2}{2} \int_{4/\eps}^\infty \frac{t e^{-t}}{1 + e^{-t}} \dd t } \\
    &\geq \frac{4(1 + \alpha)}{\alpha} \left( \frac{\eps}{4}\right)^{2 + \alpha} \int_0^{4/\eps} \frac{t^{1+\alpha}}{1 + e^t} \dd t - 4 \rho(0) \eps e^{-4/\eps} \\
    &\geq \frac{4(1 + \alpha)}{\alpha} \left( \frac{\eps}{4}\right)^{2 + \alpha} \int_0^{4} \frac{t^{1+\alpha}}{1 + e^t} \dd t - 4 \rho(0) \eps e^{-4/\eps} \\
    &\geq \frac{2(1+\alpha)}{\alpha} \left( \frac{2}{(1 + e^4)(2 + \alpha)} \eps^{2+\alpha} - \eps e^{-4/\eps}) \right).
\end{align*}
Thus for $\eps$ small enough we get:
\begin{align*}
    \abs{ \Wass_{2, \epsilon}^2(\rho, \mu) - \Wass_2^2(\rho, \mu) - \frac{\pi^2 \rho(0)}{24} \eps^2 } \gtrsim \eps^{2+\alpha}.
\end{align*}
\end{proof}

\end{document}